
\documentclass[preprint,review]{elsarticle} 




\usepackage{bm}
\usepackage{enumerate,amsthm,amsmath,amsfonts,amssymb}
\usepackage{mathtools}
\usepackage{xcolor}
\usepackage{graphicx}
\usepackage{algorithm,algorithmic}
\usepackage{subcaption}
\usepackage{tikz}
\usepackage{multirow}
\usepackage{multicol}
\usepackage{hyperref}
\newtheorem{theorem}{Theorem}
\newtheorem{lemma}[theorem]{Lemma}

\newtheorem{corollary}[theorem]{Corollary}
\theoremstyle{definition}
\newtheorem{definition}{Definition}

\newtheorem{assumption}{Assumption}

\theoremstyle{definition}

\newtheorem{remark}{Remark}

\newcommand{\unif}{\epsilon_{\mathrm{unif}}}
\newcommand{\nbb}{\mathbb{N}}
\newcommand{\var}{\mbox{Var}}

\newcommand{\bw}{\mathbf{w}}

\newcommand{\ibb}{\mathbb{I}}

\newcommand{\wcal}{\mathcal{W}}

\newcommand{\zcal}{\mathcal{Z}}

\newcommand{\ebb}{\mathbb{E}}

\newcommand{\bv}{\mathbf{v}}
\newcommand{\rbb}{\mathbb{R}}

\numberwithin{equation}{section}

\textheight 23.6cm
\textwidth 15.6cm
\oddsidemargin 0.3in
\evensidemargin 0.3in
\topmargin  -0.4in

\usepackage{lineno}

\journal{Applied and Computational Harmonic Analysis}

\begin{document}

\begin{frontmatter}



\title{Minibatch and Local SGD: Algorithmic Stability and Linear Speedup in Generalization}

\author[lei]{Yunwen Lei\corref{mycorrespondingauthor}}
\ead{leiyw@hku.hk}
\author[sun]{Tao Sun}
\ead{suntao.saltfish@outlook.com}
\author[liu]{Mingrui Liu}
\ead{mingruil@gmu.edu}
\cortext[mycorrespondingauthor]{Corresponding author}

\affiliation[lei]{organization={Department of Mathematics},
            addressline={The University of Hong Kong, Pokfulam},
            city={Hong Kong},
            country={China}}
\affiliation[sun]{organization={College of Computer},
            addressline={National University of Defense Technology},
            city={Changsha},
            country={China}}
\affiliation[liu]{organization={Department of Computer Science},
            addressline={George Mason University},
            city={Fairfax},
            country={USA}}

\begin{abstract}%
The increasing scale of data propels the popularity of leveraging parallelism to speed up the optimization. Minibatch stochastic gradient descent (minibatch SGD) and local SGD are two popular methods for parallel optimization. The existing theoretical studies show a linear speedup of these methods with respect to the number of machines, which, however, is measured by optimization errors in a multi-pass setting. As a comparison, the stability and generalization of these methods are much less studied. In this paper, we study the stability and generalization analysis of minibatch and local SGD to understand their learnability by introducing an expectation-variance decomposition. We incorporate training errors into the stability analysis, which shows how small training errors help generalization for overparameterized models. We show minibatch and local SGD achieve a linear speedup to attain the optimal risk bounds.
\end{abstract}

%

\begin{keyword}


Learning Theory\sep Algorithmic Stability\sep Stochastic Gradient Descent\sep Generalization Analysis
\end{keyword}
\end{frontmatter}



\section{Introduction}

Modern machine learning often comes along with models and datasets of massive scale (e.g., millions or billions of parameters over enormous training datasets)~\citep{zinkevich2010parallelized,li2014efficient,shamir2014distributed,konevcny2016federated}, which renders the training with sequential algorithms impractical for large-scale data analysis.
To speed up the computation, it is appealing to develop learning schemes that can leverage parallelism to reduce the amount of time in the training stage~\citep{woodworth2020local}. First-order stochastic optimization is especially attractive for parallelism since the gradient computation is easy to parallelize across multiple computation devices~\citep{shamir2014distributed,li2022destress,woodworth2020minibatch}. For distributed optimization, communication has been reported to be a major bottleneck for large-scale applications~\citep{stich2018local}. Therefore, increasing the computation to communication ratio is a major concern in developing parallelizable optimization algorithms. 

A simple stochastic first-order method is the minibatch stochastic gradient descent (minibatch SGD)~\citep{shamir2014distributed,dekel2012optimal,cotter2011better,li2014efficient,yin2018gradient}, where the update at each round is performed based on an average of gradients over several training examples  rather than a single example. Using a minibatch helps in reducing the variance, and therefore accelerates the optimization. The computation over a minibatch of size $b$ can be distributed over $M$ machines, where each machine computes a minibatch of size $K=b/M$ before communication. This increases the computation to communication ratio. Due to its simplicity, minibatch SGD has found successful applications in a variety of settings~\citep{woodworth2020minibatch,shamir2014distributed}.

An orthogonal approach to increase the computation to communication ratio is the local SGD~\citep{mcmahan2017communication,stich2018local,yun2021minibatch}. For local SGD with $M$ machines, we divide the implementation into $R$ rounds. At each round, each machine conducts SGD independently in $K$ iterations, after which an average over $M$ machines is taken to get a consensus point. Unlike minibatch SGD, local SGD is constantly improving its behavior even when the machines are not communicating with each other. Due to this appealing property, local SGD has been widely deployed in many applications~\citep{mcmahan2017communication}.

The promising applications of minibatch SGD and local SGD motivate a lot of theoretical work to understand the performance of these methods. A linear speedup with respect to (w.r.t.) the batch size was established for minibatch SGD in both online~\citep{dekel2012optimal} and stochastic setting~\citep{shamir2014distributed,cotter2011better}, which is further extended to its accelerated variants~\citep{dekel2012optimal,woodworth2020minibatch}. The analysis for local SGD is more challenging. A linear speedup w.r.t. the number of machines was developed for local SGD with strongly convex~\citep{stich2018local} and convex problems~\citep{woodworth2020local,khaled2020tighter}. These results on linear speedup build the theoretical foundation for using the parallelism to reduce the computation for large-scale problems.

The above results on linear speedup are obtained for optimization errors in a multi-pass setting, i.e., the performance of models on training examples. However, in machine learning we care more about the generalization behavior of these models on testing examples, which have been scarcely touched for both minibatch and local SGD with multi-passes over the data. To our knowledge, other than regression with the specific least squares loss~\citep{mucke2019beating,carratino2018learning,lin2020optimal,guo2023capacity}, there is no generalization analysis of minibatch and local SGD that shows a linear speedup measured by testing errors. In this paper, we conduct the generalization analysis of minibatch and local SGD based on the concept of algorithmic stability~\citep{bousquet2002stability}. Our aim is to show the linear speedup observed in optimization errors also holds for testing errors. Our main contributions are summarized as follows.


\noindent 1. We develop stability bounds of minibatch SGD for convex, strongly convex, and nonconvex problems. Our stability bounds incorporate the property of small training errors, which are often the case for overparamterized models. For strongly convex problems, we develop stability bounds independent of the iteration number, which is also novel for the vanilla SGD in the sense of removing the Lipschitz continuity assumption.  Based on these stability bounds, we further develop optimistic bounds on excess population risks which imply fast rates under a low noise condition.

\noindent 2. We develop stability bounds of local SGD for both convex and strongly convex problems, based on which we develop excess risk bounds. This gives the first stability and generalization bounds for local SGD.

\noindent 3.   Our risk bounds for both minibatch SGD and local SGD are optimal. For convex problems our bounds are of the order $O(1/\sqrt{n})$, while for $\mu$-strongly convex problems our bounds are of the order $O(1/(n\mu))$, where $n$ is the sample size. These match the existing minimax lower bounds for the statistical guarantees~\citep{agarwal2009information}. Furthermore, we show that minibatch SGD achieves a linear speedup w.r.t. the batch size, and local SGD achieves a linear speedup w.r.t. the number of machines. To our knowledge, these are the first linear speedup for minibatch and local SGD in generalization for general problems in the multi-pass setting. 

To achieve these results, we develop techniques by introducing the \emph{expectation-variance decomposition} and self-bounding property~\citep{kuzborskij2018data,lei2020fine} into the stability analysis based on a reformulation of minibatch SGD with binomial variables~\citep{feldman2019high}.
Indeed, the existing stability analysis of the vanilla SGD~\citep{hardt2016train,kuzborskij2018data,lei2020fine} does not apply to minibatch SGD. Furthermore, even with our formulation, the techniques in~\citep{lei2020fine} would imply suboptimal stability bounds.

The paper is organized as follows. We survey the related work in Section~\ref{sec:work}, and formulate the problem in Section~\ref{sec:problem}. We study the stability and generalization for minibatch SGD in Section \ref{sec:mini}, and extend these discussions to local SGD in Section \ref{sec:local}. We present the proof of minibatch SGD in Section \ref{sec:proof-minibatch-sgd} and the proof of local SGD in Section \ref{sec:proof-local-sgd}. We conclude the paper in Section \ref{sec:conclusion}.

\section{Related Work\label{sec:work}}
In this section, we survey the related work on algorithmic stability, minibatch and local SGD.

\smallskip

\textbf{Algorithmic stability}. As a fundamental concept in statistical learning theory (SLT), algorithmic stability measures the sensitivity of an algorithm w.r.t. the perturbation of a training dataset. Various concepts of stability have been introduced into the literature, including uniform stability~\citep{bousquet2002stability}, hypothesis stability~\citep{bousquet2002stability}, on-average stability~\citep{shalev2010learnability,kuzborskij2018data} and {on-average model stability~\citep{lei2020fine}}. One of the most widely used stability concept is the uniform stability, which can imply almost optimal high-probability bounds~\citep{feldman2019high,bousquet2020sharper,fan2024high}.  Stability has found wide applications in stochastic optimization~\citep{hardt2016train,lei2020fine,kuzborskij2018data,charles2018stability,mou2018generalization,wang2022differentially,christmann2018total,chen2018stability}. An important property of the stability analysis is that it considers only the particular model produced by the algorithm, and therefore can use the property of the learning algorithm to imply capacity-independent generalization bounds. Lower bounds on the stability of gradient methods also draw increasing attention~\citep{bassily2020stability,koren2022benign}. 


\smallskip

\textbf{Minibatch algorithm}. Minibatch algorithms are efficient in speeding up optimization for smooth problems. \citet{shamir2014distributed} showed that minibatch distributed optimization can attain a linear speedup w.r.t. the batch size, which was also observed for general algorithms in an online learning setting~\citep{dekel2012optimal}. These results were improved in~\citep{cotter2011better}, where the convergence rates involve the training error of the best model and would decay fast in an interpolation setting. The above speedup was derived if the batch size is not large. Indeed, a large batch size may negatively affect the performance of the algorithm~\citep{keskar2017large,lin2019don}. Minibatch stochastic approximation methods were studied for stochastic composite optimization problems~\citep{ghadimi2016mini} and nonconvex problems~\citep{gower2021sgd}. Recently, minibatch algorithms have been shown to be immune to the heterogeneity of the problem~\citep{woodworth2020minibatch}.  For problems with nonsmooth loss functions, minibatch algorithms do not get any speedup~\citep{shamir2014distributed}.

\smallskip

\textbf{Local SGD}. Local SGD, also known as ``parallel SGD'' or ``federated averaging'', is widely used to solve large-scale convex and nonconvex optimization problems~\citep{mcmahan2017communication}. A linear speedup in the number ($M$) of machines was obtained for local SGD on strongly convex problems~\citep{stich2018local}. The key observation is that local SGD can roughly yield a reduction in the variance by a factor of $M$. Despite its promising performance in practice, the theoretical guarantees on convergence rates are still a bit weak and are often dominated by minibatch SGD. Indeed, initial analysis of local SGD failed to derive a convergence rate matching minibatch SGD's performance, due to an additional term proportional to the dispersion of the individual machine's iterates for local SGD~\citep{woodworth2020local}.  For example, the work \citep{woodworth2020local}  also presented a lower bound on the performance of local SGD that is worse than the minibatch SGD guarantee in a certain regime, showing that local SGD does not dominate minibatch SGD. Until recently, the guarantees better than minibatch SGD were obtained under some cases (e.g., case with rare communication)~\citep{woodworth2020local,khaled2020tighter,spiridonoff2021communication}. These discussions impose different assumptions: \citet{woodworth2020local} imposed a bounded variance assumption, while \citet{khaled2020tighter} considered an almost sure smoothness assumption without the bounded variance assumption. These results were extended to a heterogeneous distributed learning setting~\citep{khaled2020tighter,woodworth2020minibatch}, for which heterogeneity was shown to be particularly problematic for local SGD. A linear speedup w.r.t. $M$ was also observed for nonconvex loss functions under a more restrictive constraint on the synchronization delay than that in the convex case~\citep{yu2019parallel}. Lower bounds of local SGD were established~\citep{woodworth2020local}. Generalization bounds of federated learning were recently studied based on Rademacher complexity~\citep{mohri2019agnostic} and stability~\citep{sun2023understanding,chen2023minimax}.

\smallskip

The above results on the linear speedup for minibatch and local SGD were obtained for optimization errors, which is the focus of the paper. The benefit of minibatch in generalization was studied for SGD with the square loss function~\citep{mucke2019beating,lin2020optimal,carratino2018learning}. These discussions use the analytic representation of iterators in terms of integral operators, which do not apply to general problems considered here. 

\section{Problem Setup\label{sec:problem}}
Let $\rho$ be a probability measure defined on a sample space $\zcal$, from which we independently draw a dataset $S=\{z_1,\ldots,z_n\}\subset\zcal$ of $n$ examples. Based on $S$, we wish to learn a model $\bw$ in a model space $\wcal=\rbb^d$ for prediction, where $d\in\nbb$ is the dimension. The performance of $\bw$ on a single example $z\in\zcal$ can be measured by a nonnegative loss function $f(\bw;z)$. The empirical behavior of $\bw$ can be quantified by the empirical risk $F_S(\bw):=\frac{1}{n}\sum_{i=1}^{n}f(\bw;z_i).$ Usually, we apply a randomized algorithm $A$ to minimize $F_S$ over $\wcal$ to get a model $A(S)$. Then an algorithm can be considered as a map from the set of samples to $\wcal$, i.e., $A:\cup_{n=1}^\infty\zcal^n\mapsto\wcal$. A good behavior on training examples does not necessarily mean a good behavior on testing examples, which is the quantity of real interest in machine learning and can be quantified by the population risk $F(\bw):=\ebb_Z[f(\bw;Z)]$. Here $\ebb_Z[\cdot]$ denotes the expectation w.r.t. $Z$. In this paper, we study the excess population risk of a model $\bw$ defined by $F(\bw)-F(\bw^*)$, which measures the suboptimality as compared to the best model $\bw^*=\arg\min_{\bw\in\wcal}F(\bw)$.
Our basic strategy is to use the following error decomposition
\begin{equation}
\ebb_{S,A}\big[F(A(S))-F(\bw^*)\big]=\ebb_{S,A}\big[F(A(S))-F_S(A(S))\big]
+\ebb_{S,A}\big[F_S(A(S))-F_S(\bw^*)\big],\label{decomposition}
\end{equation}
where we have used the identity $\ebb_{S,A}[F_S(\bw^*)]=F(\bw^*)$ and {$\ebb_{S,A}[\cdot]$ denotes the expectation w.r.t. $S$ and $A$}. We refer to the first term $\ebb\big[F(A(S))-F_S(A(S))\big]$ as the generalization error (generalization gap), which measures the discrepancy between training and testing at the output model $A(S)$. We call the second term $\ebb\big[F_S(A(S))-F_S(\bw^*)\big]$ the optimization error, which measures the suboptimality in terms of the empirical risk. One can control the optimization error by tools in optimization theory.
As a comparison, there is little work on the generalization error of minibatch SGD and local SGD in the multi-pass setting, the key challenge of which is the dependency of $A(S)$ on $S$. 

In this paper, we will use a specific algorithmic stability ---on-average model stability--- to address the generalization error. We use $\|\cdot\|_2$ to denote the Euclidean norm. We denote $S\sim S'$ if $S$ and $S'$ differ by at most a single example.
\begin{definition}[Uniform Stability\label{def:unif}]
Let $\epsilon>0$. We say a randomized algorithm $A$ is $\epsilon$-uniformly stable if $\sup_{S\sim S',z}\ebb_A[|f(A(S);z)-f(A(S');z)|]\leq \epsilon$.
\end{definition}
\begin{definition}[On-average Model Stability~\citep{lei2020fine}\label{def:aver-stab}]
  Let $S=\{z_1,\ldots,z_n\}$ and $S'=\{z'_1,\ldots,z'_n\}$ be drawn independently from $\rho$. For any $i\in[n]:=\{1,\ldots,n\}$, define
  $S^{(i)}=\{z_1,\ldots,z_{i-1},z'_i,z_{i+1},\ldots,z_n\}$ as the set formed from $S$ by replacing the $i$-th element with $z'_i$. Let $\epsilon>0$.
  We say a randomized algorithm $A$ is $\ell_1$ on-average model $\epsilon$-stable if
  $
  \ebb_{S,S',A}\big[\frac{1}{n}\sum_{i=1}^{n}\|A(S)-A(S^{(i)})\|_2\big]\leq\epsilon,
  $
  and $\ell_2$ on-average model $\epsilon$-stable if
  $
  \ebb_{S,S',A}\big[\frac{1}{n}\sum_{i=1}^{n}\|A(S)-A(S^{(i)})\|_2^2\big]\leq\epsilon^2.
  $
\end{definition}
According to the above definition, on-average model stability considers the perturbation of each single example, and measures how these perturbations would affect the output models on average. Lemma \ref{lem:gen-model-stab} gives a quantitative connection between the generalization error and on-average model stability. We first introduce some necessary definitions. We use $\nabla g$ to denote the gradient of $g$.
\begin{definition}
  Let $g:\wcal\mapsto\rbb$, $G,L>0$ and $\mu\geq0$.
\begin{enumerate}[1.]
  \item We say $g$ is $G$-Lipschitz continuous if $|g(\bw)-g(\bw')|\leq G\|\bw-\bw'\|_2$ for all $\bw,\bw'\in\wcal.$
  \item We say $g$ is $L$-smooth if $\|\nabla g(\bw)-\nabla g(\bw')\|_2\leq L\|\bw-\bw'\|_2$ for all $\bw,\bw'\in\wcal.$
  \item We say $g$ is $\mu$-strongly convex if
  $
  g(\bw)\geq g(\bw')+\langle\bw-\bw',\nabla g(\bw')\rangle+\frac{\mu}{2}\|\bw-\bw'\|_2^2
  $ for all $\bw,\bw'\in\wcal$. We say $g$ is convex if it is $\mu$-strongly convex with $\mu=0$.
\end{enumerate}
\end{definition}
{A non-negative and $L$-smooth function $g$ enjoys the self-bounding property, meaning $\|\nabla g(\bw)\|_2^2\leq2Lg(\bw)$~\citep{srebro2010smoothness}.}
Examples of smooth and convex loss functions include the logistic loss, least square loss and Huber loss. Examples of Lipschitz and convex loss functions include the hinge loss, logistic loss and Huber loss.

\begin{lemma}[\citep{lei2020fine}\label{lem:gen-model-stab}]
Let $S,S'$ and $S^{(i)}$ be constructed as in Definition \ref{def:aver-stab}, and $\gamma>0$.
\begin{enumerate}[(a)]
  \item Suppose for any $z$, the function $\bw\mapsto f(\bw;z)$ is convex. If $A$ is $\ell_1$ on-average model $\epsilon$-stable and $\sup_z\|\nabla f(A(S);z)\|_2\leq G$ for any $S$, then
  $
  \big|\ebb_{S,A}\big[F_S(A(S))-F(A(S))\big]\big|\leq G\epsilon.
  $
  \item Suppose for any $z$, the function $\bw\mapsto f(\bw;z)$ is nonnegative and $L$-smooth.
If $A$ is $\ell_2$ on-average model $\epsilon$-stable, then the following inequality holds
  \[
    \ebb_{S,A}\big[F(A(S))-F_S(A(S))\big]  \leq \frac{L}{\gamma}\ebb_{S,A}\big[F_S(A(S))\big]
    +\frac{L+\gamma}{2n}\sum_{i=1}^{n}\ebb_{S,S',A}\big[\|A(S^{(i)})-A(S)\|_2^2\big].
  \]
  \end{enumerate}
\end{lemma}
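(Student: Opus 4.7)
The plan is to prove both parts by first rewriting the generalization gap using the standard symmetry trick for i.i.d.\ samples, and then bounding the per-example terms using either the convex-Lipschitz structure (for (a)) or the smoothness plus self-bounding property (for (b)).

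The first step, common to both parts, is to observe that since $z_i'$ and $z_i$ are identically distributed and $A(S^{(i)})$ is independent of $z_i$, we have $\ebb_{S,S',A}[f(A(S^{(i)});z_i)]=\ebb_{S,A}[F(A(S))]$ for every $i$. Averaging this identity in $i$ and subtracting $\ebb[F_S(A(S))]=\frac{1}{n}\sum_i\ebb[f(A(S);z_i)]$ gives
\begin{equation*}
\ebb_{S,A}\big[F(A(S))-F_S(A(S))\big]=\frac{1}{n}\sum_{i=1}^n\ebb_{S,S',A}\big[f(A(S^{(i)});z_i)-f(A(S);z_i)\big].
\end{equation*}
This reduces everything to controlling the per-coordinate differences $f(A(S^{(i)});z_i)-f(A(S);z_i)$ through $\|A(S^{(i)})-A(S)\|_2$.

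For part (a), I would apply convexity of $\bw\mapsto f(\bw;z_i)$ at $A(S^{(i)})$ to get $f(A(S^{(i)});z_i)-f(A(S);z_i)\leq\langle\nabla f(A(S^{(i)});z_i),A(S^{(i)})-A(S)\rangle$, then use Cauchy--Schwarz together with the hypothesis $\|\nabla f(A(S^{(i)});z_i)\|_2\leq G$ to upper bound this by $G\|A(S^{(i)})-A(S)\|_2$. A symmetric argument (swapping the roles of $S$ and $S^{(i)}$, noting that $S^{(i)}$ and $S$ play identical distributional roles) handles the other direction to yield an absolute value. Averaging over $i$ and invoking the $\ell_1$ on-average model stability then gives the bound $G\epsilon$.

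For part (b), the key substitution is the smoothness inequality $f(A(S^{(i)});z_i)\leq f(A(S);z_i)+\langle\nabla f(A(S);z_i),A(S^{(i)})-A(S)\rangle+\frac{L}{2}\|A(S^{(i)})-A(S)\|_2^2$. I would then apply Young's inequality $\langle a,b\rangle\leq\frac{1}{2\gamma}\|a\|_2^2+\frac{\gamma}{2}\|b\|_2^2$ to the cross term, and crucially use the self-bounding property $\|\nabla f(A(S);z_i)\|_2^2\leq 2Lf(A(S);z_i)$ (recalled in the excerpt) to obtain
\begin{equation*}
f(A(S^{(i)});z_i)-f(A(S);z_i)\leq\frac{L}{\gamma}f(A(S);z_i)+\frac{L+\gamma}{2}\|A(S^{(i)})-A(S)\|_2^2.
\end{equation*}
Averaging over $i$, taking expectation, and matching the two resulting terms with $\ebb[F_S(A(S))]$ and the $\ell_2$ on-average model stability quantity yields exactly the claimed bound.

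The only slightly delicate point is the symmetrization step in (a): one has to be careful that the Lipschitz-type assumption is only imposed at $A(S)$, not at $A(S^{(i)})$, so when writing the swapped direction one should apply convexity around $A(S)$ rather than around $A(S^{(i)})$ so that the gradient bound applies. Beyond this, the argument is a routine combination of a linearization (via convexity or smoothness), Cauchy--Schwarz or Young's inequality, and the self-bounding property; I expect no serious obstacle.
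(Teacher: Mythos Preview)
Your proposal is correct and follows the standard approach the paper cites from \citep{lei2020fine} (the paper omits the proof here, noting only that the argument is almost identical to the original). One minor remark: your ``delicate point'' about the gradient bound holding only at $A(S)$ is based on a misreading---the hypothesis $\sup_z\|\nabla f(A(S);z)\|_2\leq G$ is assumed \emph{for any} $S$, so it applies equally at $A(S^{(i)})$; your workaround is therefore unnecessary, though harmless.
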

Part (a) gives the connection between generalization and $\ell_1$ on-average model stability under a convexity condition, while Part (b) relates generalization to $\ell_2$ on-average model stability under a smoothness condition (without a Lipschitzness condition). Note Part (a) differs slightly from that in \citep{lei2020fine} by replacing the Lipschitz condition with a convexity condition and $\sup_z\|\nabla f(A(S);z)\|_2\leq G$. However, the analysis is almost identical and we omit the proof. An advantage of $\ell_2$ on-average model stability is that the upper bound involves the training errors, and improves if $F_S(A(S))$ is small.

\section{Generalization of Minibatch SGD\label{sec:mini}}
In this section, we consider the minibatch SGD for convex, strongly convex and nonconvex problems. Minibatch SGD is implemented in several rounds/iterations. Let $\bw_1\in\wcal$ be an initial point.
At the $t$-th round, minibatch SGD randomly draws (with replacement) $b$ numbers $i_{t,1},\ldots,i_{t,b}$ independently from the uniform distribution over $[n]$, where $b\in[n]$ is the batch size. Then it updates $\{\bw_t\}$ by ($t\in[R]=\{1,2,\ldots,R\}$)
\begin{equation}\label{sgd}
  \bw_{t+1}=\bw_t-\frac{\eta_t}{b}\sum_{j=1}^{b}\nabla f(\bw_t;z_{i_{t,j}}),
\end{equation}
where $\{\eta_t\}$ is a positive step size sequence. If $b=1$, then Eq. \eqref{sgd} recovers the vanilla SGD.
If $b=n$, the above scheme is still different from gradient descent since we consider selection with replacement. For simplicity, we always assume $b\geq2$.
We summarize the results of minibatch SGD in Table \ref{tab:mini}. 
\begin{table*}[ht]
  \caption{Excess population risks of Minibatch SGD for convex, strongly convex and gradient-dominated problems. We consider smooth problems and only show the dependency on $n,b,\mu$ and $F(\bw^*)$. The column ``Risk'' denotes the excess population risk, the column ``$R$'' denotes the number of iterations, the column ``Constraint'' indicates the constraint on the batch size $b$ and the column ``Optimal $R$'' is derived by putting the largest $b$ in $R$.
  We achieve a linear speedup w.r.t. the batch size for convex, strongly convex and nonconvex problems (PL condition is defined in Eq. \eqref{pl}). For convex problems, we derive optimistic bounds which improve to $O(n^{-1})$ in a low noise case, i.e., $F(\bw^*)< n^{-1}$.\label{tab:mini}}

\centering\renewcommand{\arraystretch}{1.6}
  \begin{tabular}{|c|c|c|c|c|c|}
    \hline
    \multicolumn{2}{|c|}{Assumption} & Risk & $R$ & Constraint  & Optimal $R$  \\ \hline
    \multirow{2}{*}{convex} & $F(\bw^*)\geq 1/n$  & $\sqrt{F(\bw^*)/n}$ & $n/b$ & $b\leq \frac{\sqrt{nF(\bw^*)}}{2L}$  &$\frac{\sqrt{n}}{\sqrt{F(\bw^*)}}$  \\ \cline{2-6}
    & {$F(\bw^*)< 1/n$} & $\frac{1}{n}$ & $n$ & ---  & $n$  \\ \hline
    \multicolumn{2}{|c|}{$\mu$-strongly convex} & $1/(n\mu)$ & $\max\{n/b,\mu^{-1}\log n\}$ & --- & $\mu^{-1}\log n$  \\ \hline
    \multicolumn{2}{|c|}{$\mu$-PL condition} & $1/(n\mu)$ & $n/(b\mu^2)$ & $b\leq \sqrt{n}/\mu$ & $\mu^{-1}\log n$  \\
    \hline
  \end{tabular}
\end{table*}

\subsection{Convex Case}
We first present stability bounds to be proved in Section \ref{sec:proof-on-average}. Eq. \eqref{on-average-a} considers the  $\ell_1$ on-average model stability, while Eq. \eqref{on-average-b} considers the $\ell_2$ on-average model stability. An advantage of the analysis with $\ell_2$ on-average model stability over $\ell_1$ on-average model stability is that it can imply generalization bounds without a Lipschitzness condition. We denote $A\lesssim B$ if there exists a universal constant $C$ such that $A\leq CB$. We denote $A\gtrsim B$ if there exists a universal constant $C$ such that $A\geq CB$. We denote $A\asymp B$ if $A\lesssim B$ and $A\gtrsim B$.

\begin{theorem}[Stability Bounds for Minibatch SGD: Convex Case\label{thm:on-average}]
  Assume for all $z\in\zcal$, the map $\bw\mapsto f(\bw;z)$ is nonnegative, convex and $L$-smooth.
  Let $S,S'$ and $S^{(m)}$ be given in Definition \ref{def:aver-stab}. Let $\{\bw_t\}$ and $\{\bw_t^{(m)}\}$ be produced by \eqref{sgd} with $\eta_t\leq2/L$ based on $S$ and $S^{(m)}$, respectively. Then
  \begin{equation}\label{on-average-a}
    \frac{1}{n}\sum_{m=1}^{n}\ebb[\|\bw_{t+1}-\bw_{t+1}^{(m)}\|_2] \leq
    \sum_{k=1}^{t}\frac{2\eta_k\sqrt{2L\ebb[F_S(\bw_k)]}}{n}
  \end{equation}
  and
  \begin{equation}\label{on-average-b}
\frac{1}{n}\sum_{m=1}^{n}\ebb\big[\|\bw_{t+1}-\bw_{t+1}^{(m)}\|_2^2\big]
  \leq \frac{16L}{nb}\sum_{k=1}^{t}\eta_k^2\ebb\big[F_S(\bw_k)]+\frac{8}{n^3}\sum_{m=1}^{n}\ebb\Big[\Big(\sum_{k=1}^{t}\eta_k\|\nabla f(\bw_k;z_m)\|_2\Big)^2\Big].
  \end{equation}
\end{theorem}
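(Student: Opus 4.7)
My plan is to couple the two minibatch SGD runs on $S$ and $S^{(m)}$ through a common schedule of batch indices $i_{t,1},\ldots,i_{t,b}$, so the only source of discrepancy in round $t$ comes from draws that hit position $m$. Let $B_m^{(t)}=|\{j\in[b]:i_{t,j}=m\}|\sim\mathrm{Binomial}(b,1/n)$; it is independent of $\bw_t,\bw_t^{(m)}$ and of the $B_m$'s of other rounds. The $b-B_m^{(t)}$ shared indices produce the map $\bw\mapsto\bw-\tfrac{\eta_t(b-B_m^{(t)})}{b}\nabla\bar F(\bw)$, where $\bar F$ is the convex $L$-smooth average over those indices; because the effective step $\eta_t(b-B_m^{(t)})/b\le\eta_t\le2/L$ for every realization of $B_m^{(t)}$, this map is non-expansive. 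The remaining $B_m^{(t)}$ ``bad'' indices contribute a pure perturbation $H_t=\tfrac{\eta_tB_m^{(t)}}{b}[\nabla f(\bw_t^{(m)};z_m)-\nabla f(\bw_t^{(m)};z'_m)]$. Combining these yields the a.s.\ recursion $\|\bw_{t+1}-\bw_{t+1}^{(m)}\|_2\le\|\bw_t-\bw_t^{(m)}\|_2+\|H_t\|_2$, which iterates (since $\bw_1=\bw_1^{(m)}$) to $\|\bw_{T+1}-\bw_{T+1}^{(m)}\|_2\le\sum_{k=1}^{T}\|H_k\|_2$.

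\textbf{Proof of \eqref{on-average-a}.} I would bound $\|H_t\|_2$ by the triangle inequality, take expectation using $\ebb[B_m^{(t)}]=b/n$, and average over $m$. The $z_m\leftrightarrow z'_m$ swap symmetry between $(S,S')$ and $(S^{(m)},{S'}^{(m)})$ converts the $\bw_t^{(m)}$-dependent terms into $\bw_t$-dependent ones in expectation. Then self-bounding $\|\nabla f(\cdot;z)\|_2\le\sqrt{2Lf(\cdot;z)}$, chained with Jensen's inequality $\ebb\sqrt{X}\le\sqrt{\ebb X}$ and the concavity step $\frac{1}{n}\sum_m\sqrt{a_m}\le\sqrt{\frac{1}{n}\sum_m a_m}$, collapses $\frac{1}{n}\sum_m\ebb\|\nabla f(\bw_t^{(m)};z'_m)\|_2$ to $\sqrt{2L\ebb F_S(\bw_t)}$. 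The factor~$2$ in \eqref{on-average-a} arises because both summands of $H_t$ contribute identically under the symmetry; summing the per-step bound $\tfrac{2\eta_t\sqrt{2L\ebb F_S(\bw_t)}}{n}$ over $k=1,\ldots,t$ completes the proof.

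\textbf{Proof of \eqref{on-average-b}.} Squaring the iterated bound, applying the triangle inequality to $\|H_k\|_2$, and using $(a+b)^2\le2a^2+2b^2$ to separate the $z_m$ and $z'_m$ contributions reduces the task to controlling $\ebb[(\sum_k\tfrac{\eta_kB_m^{(k)}}{b}g_k)^2]$ for $g_k=\|\nabla f(\bw_k;z_m)\|_2$ (and its symmetric twin). The central tool is the \emph{expectation--variance decomposition} $B_m^{(k)}/b=1/n+X_k$, with $X_k$ of zero mean, $\ebb X_k^2\le1/(nb)$, independent across $k$, and independent of $\bw_k,\bw_k^{(m)}$ (since $B_m^{(k)}$ is drawn only in round $k$). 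One more $(a+b)^2\le2a^2+2b^2$ gives
\begin{equation*}
\Bigl(\sum_k\tfrac{\eta_kB_m^{(k)}}{b}g_k\Bigr)^2\le2\Bigl(\sum_k\tfrac{\eta_k}{n}g_k\Bigr)^2+2\Bigl(\sum_k\eta_kX_kg_k\Bigr)^2.
\end{equation*}
The ``mean'' piece, after averaging over $m$, yields the $\tfrac{1}{n^3}\sum_m\ebb(\sum_k\eta_k\|\nabla f(\bw_k;z_m)\|_2)^2$ term directly. In the ``noise'' piece, the cross terms $\ebb[X_kX_lg_kg_l]$ with $k<l$ vanish because $X_l$ is conditionally mean zero given all earlier randomness (which determines $X_k,g_k,g_l$); what remains is $\sum_k\eta_k^2\ebb[X_k^2]\ebb[g_k^2]\le\tfrac{1}{nb}\sum_k\eta_k^2\ebb[g_k^2]$, and the self-bounding inequality $\frac{1}{n}\sum_m\ebb[g_k^2]\le2L\ebb F_S(\bw_k)$ produces the $\tfrac{L\eta_k^2}{nb}\ebb F_S(\bw_k)$ contribution. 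The constants $8/n^3$ and $16L/(nb)$ come from the two successive $(a+b)^2$ expansions combined with the two symmetric gradient terms.

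\textbf{Main obstacle.} The most delicate point is the variance argument: I need $X_l$ to be conditionally mean zero given the $\sigma$-algebra that simultaneously contains $X_k$ ($k<l$) and all gradient norms $g_k,g_l$, which holds only because $B_m^{(l)}$ is generated strictly in round~$l$ while $\bw_l,\bw_l^{(m)}$ are frozen once rounds $1,\ldots,l-1$ are fixed. A secondary subtlety is choosing the particular split (where $\bw_t^{(m)}$ rather than $\bw_t$ carries the perturbation) so that the $z_m\leftrightarrow z'_m$ symmetry followed by the Jensen chain yields $\sqrt{\ebb F_S(\bw_t)}$ rather than a population-risk surrogate; this is what lets the bound remain in the ``training error'' form advertised by the statement.
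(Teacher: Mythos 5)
Your proposal is correct and follows essentially the same route as the paper: reformulate the update via the binomial count $\alpha_{t,m}=B_m^{(t)}\sim B(b,1/n)$, use non-expansiveness of the shared-index gradient map to get $\|\bw_{t+1}-\bw_{t+1}^{(m)}\|_2\le\|\bw_t-\bw_t^{(m)}\|_2+\frac{\eta_t\alpha_{t,m}}{b}\mathfrak{C}_{t,m}$, then apply the expectation for \eqref{on-average-a} and the expectation--variance split $\alpha_{k,m}/b=1/n+X_k$ with vanishing cross terms, self-bounding, and the $z_m\leftrightarrow z'_m$ symmetry for \eqref{on-average-b}, recovering the same constants. The only cosmetic difference is that you apply the triangle inequality to $\mathfrak{C}_{k,m}$ before the variance decomposition while the paper does it after, and your written $H_t$ places both gradients at $\bw_t^{(m)}$ (which requires absorbing the $z_m$-term of index $m$ into the non-expansive map rather than excluding it); either grouping is valid and immaterial to the result.
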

\begin{remark}[Explanation and comparison]
A property of these stability bounds is that they involve the empirical risks of $\bw_k$, which would be small since we are minimizing the empirical risk by stochastic optimization algorithms.
Similar stability bounds involving $F_S(\bw_k)$ were developed for the vanilla SGD~\citep{lei2020fine}. Their argument needs to distinguish two cases according to whether the algorithm chooses a particular example at each iteration. This argument does not work for the minibatch SGD since we draw $b$ examples per iteration, and we can draw the particular example several times. We bypass this difficulty by introducing the \emph{expectation-variance decomposition} and self-bounding property into the stability analysis based on a reformulation of minibatch SGD~\citep{kuzborskij2018data,lei2020fine,feldman2019high}. We refer the readers to Remark \ref{rem:novelty} for the detailed discussions on the novelty of our analysis.

The stability of minibatch SGD with $\eta_t=\eta$ has also been studied recently~\citep{yin2018gradient,bassily2019private}. The discussions in Theorem 9 in~\citep{yin2018gradient} give a stability bound of the order $O(\eta t/n+\gamma\eta t)$,  where $\gamma=\mathrm{Pr}\{\inf_{\bw,\bw'}\bar{B}_S(\bw,\bw')<(b-1)/(2/(L\eta)-n/(n-1))\}$ and $\bar{B}_S(\bw,\bw')$ is a measure on the gradient diversity defined below
\[
\bar{B}_S(\bw,\bw'):=\frac{n\sum_{i=1}^{n}\|\nabla f(\bw;z_i)-\nabla f(\bw';z_i)\|_2^2}{\|\sum_{i=1}^{n}(\nabla f(\bw;z_i)-\nabla f(\bw';z_i))\|_2^2}.
\]If $\gamma$ is not very small, their stability bounds would be vacuous due to the term $\gamma\eta t$. The stability bound order $O(\eta t/n)$ was developed in~\citep{bassily2019private}. These discussions require $f$ to be convex, smooth and Lipschitz continuous.  Furthermore, these discussions do not incorporate training errors into the stability bounds, and cannot imply optimistic bounds. We remove the Lipschitz condition in our analysis and obtain optimistic bounds.
\end{remark}

We plug the stability bounds in Theorem \ref{thm:on-average} into Lemma \ref{lem:gen-model-stab} to control generalization errors, which together with the optimization error bounds in Lemma \ref{lem:opt-cotter}, implies the following excess risk bounds. It should be noted that we do not require the function $f$ to be Lipschitz continuous. The proof is given in Section \ref{sec:proof-risk-mini}.
\begin{theorem}[Risk Bounds for Minibatch SGD: Convex Case]\label{thm:risk-mini}
  Assume for all $z\in\zcal$, the function $\bw\mapsto f(\bw;z)$ is nonnegative, convex and $L$-smooth.
  Let $\{\bw_t\}$ be produced by \eqref{sgd} with  $\eta_t=\eta\leq1/(2L)$. Then the following inequality holds for $\bar{\bw}_{R}:=\frac{1}{R}\sum_{t=1}^{R}\bw_t$ and any $\gamma>0$
  \[
    \ebb[F(\bar{\bw}_{R})]\!-\!F(\bw^*)\lesssim \frac{\eta LF(\bw^*)}{b}\!+\!\frac{\|\bw^*\|_2^2}{\eta R}
    +
    L\Big(F(\bw^*)\!+\!\frac{\|\bw^*\|_2^2}{\eta R}\Big)\bigg(\frac{1}{\gamma}+(L\!+\!\gamma)\eta^2\Big(\frac{R}{nb}\!+\!\frac{R^2}{n^2}\Big)\bigg).
  \]
\end{theorem}
Note the above excess risk bounds involve $F(\bw^*)$ and would improve if $F(\bw^*)$ is small, which is true in many learning problems. The terms involving $F(\bw^*)$ also correspond to gradient noise since the variance of gradients can be bounded by function values according to the self-bounding property of smooth functions. The risk bounds of this type are called optimistic bounds in the literature~\citep{srebro2010smoothness}.

As a corollary, we develop explicit excess risk bounds by choosing suitable step sizes and number of rounds, using the idea of early-stopping~\citep{yao2007early}. Note the step size depends on $F(\bw^*)$ which is unknown to us. However, this is not a big issue since we can choose step sizes independent of $F(\bw^*)$ to derive bounds of the same order of $n$ but worse order of $F(\bw^*)$. It shows that minibatch SGD can achieve the excess risk bounds of the order $\sqrt{F(\bw^*)/n}$ if $F(\bw^*)\geq 1/n$, and can imply much better error bounds of the order $1/n$ if $F(\bw^*)<1/n$. The proof is given in Section \ref{sec:proof-risk-mini}.

\begin{corollary}\label{cor:mini}
  Let assumptions in Theorem \ref{thm:risk-mini} hold and $\eta=\min\big\{\frac{\|\bw^*\|_2b}{\sqrt{LnF(\bw^*)}},\frac{1}{2L}\big\}$.
  \begin{enumerate}[1.]
    \item If $F(\bw^*)\geq 4Lb^2\|\bw^*\|_2^2/n$, we can take $R\asymp\frac{n}{b}$ to derive
    $\ebb[F(\bar{\bw}_{R})]-F(\bw^*)\lesssim \frac{(LF(\bw^*))^{\frac{1}{2}}\|\bw^*\|_2}{\sqrt{n}}$.
    \item If $F(\bw^*)\leq 4Lb^2\|\bw^*\|_2^2/n$, we take $R\asymp n$ to get
    $
    \ebb[F(\bar{\bw}_{R})]\lesssim F(\bw^*)+\frac{L\|\bw^*\|_2^2}{n}.
    $
  \end{enumerate}
\end{corollary}

\begin{remark}[Linear speedup]
  We now give some explanations on linear speedup. For the case $F(\bw^*)\gtrsim 1/n$, a larger batch size allows for a larger step size, which further decreases the number $R$ of rounds. It shows that minibatch SGD achieves a linear speedup if the batch size is not large, i.e., it only requires $O(n/b)$ rounds to achieve the excess risk bound $O(n^{-\frac{1}{2}})$ if $b\lesssim \sqrt{nF(\bw^*)}/(\sqrt{L}\|\bw^*\|_2)$. Such a linear speedup was observed for optimization errors for multi-pass SGD~\citep{cotter2011better}. Indeed, it was shown that minibatch SGD requires $O(n/b)$ rounds to achieve the optimization error bounds $\ebb[F_S(\bar{\bw}_R)]-F_S(\bw^*)\lesssim \sqrt{F_S(\bw^*)/n}$ if $b\lesssim \sqrt{nF(\bw^*)}/(\sqrt{L}\|\bw^*\|_2)$. We extend the existing optimization error analysis to generalization, and develop the first linear speedup of the minibatch multi-pass SGD as measured by risks for general convex problems. In particular, our regime $b\lesssim \sqrt{nF(\bw^*)}$ for linear speedup in generalization matches the regime $b\lesssim \sqrt{nF_S(\bw^*)}$ for the linear speedup in optimization~\citep{cotter2011better}.

  For the case $F(\bw^*)\lesssim 1/n$, Corollary \ref{cor:mini} shows that a larger batch size does not bring any gain in speeding up the risk bounds. The underlying reason is that the variance is already very small in this case, and a further reduction of variance by minibatch does not bring essential benefits in the learning process. 
\end{remark}

\subsection{Strongly Convex Case}
We now consider strongly convex problems. Theorem \ref{thm:stab-mini-sg} gives  stability bounds, while Theorem \ref{thm:risk-mini-sg} gives excess population risk bounds. The proofs are given in Section \ref{sec:proof-mini-sg}.
\begin{theorem}[Stability Bounds for Minibatch SGD: Strongly Convex Case\label{thm:stab-mini-sg}]
  Assume for all $z\in\zcal$, the map $\bw\mapsto f(\bw;z)$ is nonnegative, $\mu$-strongly convex and $L$-smooth.
  Let $S,S'$ and $S^{(m)}$ be constructed as in Definition \ref{def:aver-stab}. Let $\{\bw_t\}$ and $\{\bw_t^{(m)}\}$ be produced by \eqref{sgd} based on $S$ and $S^{(m)}$, respectively. Then
  \begin{equation}\label{stab-mini-sg-a}
    \frac{1}{n}\sum_{m=1}^{n}\ebb[\|\bw_{t+1}-\bw_{t+1}^{(m)}\|_2]  \leq
    \frac{2\sqrt{2L}}{n}\sum_{k=1}^{t}\eta_k\sqrt{\ebb[F_S(\bw_k)]}\prod_{k'=k+1}^{t}(1-\mu\eta_{k'}/2),
  \end{equation}
  \begin{equation}
    \frac{1}{n}\sum_{m=1}^{n}\ebb[\|\bw_{t+1}-\bw_{t+1}^{(m)}\|_2] \lesssim 1/(n\mu),\label{stab-mini-sg-b}
  \end{equation}
  \begin{equation}\label{stab-mini-sg-c}
\frac{1}{n}\sum_{m=1}^{n}\ebb\big[\|\bw_{t+1}-\bw_{t+1}^{(m)}\|_2^2\big]\leq
\sum_{k=1}^{t}\Big(\frac{16L\eta_k^2}{nb}+\frac{32L\eta_k}{n^2\mu}\Big)\ebb[F_S(\bw_k)]\prod_{k'=k+1}^{t}(1-\mu\eta_{k'}/2).
  \end{equation}
\end{theorem}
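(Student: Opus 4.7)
The plan is to mirror the proof of Theorem \ref{thm:on-average} (the convex case), injecting a per-step contraction factor $(1-\mu\eta_t/2)$ coming from strong convexity into the stability recursion. The starting point is the binomial reformulation of minibatch SGD: letting $B_{t,i}$ denote the number of times $z_i$ is drawn in the $t$-th minibatch, I write $\bw_{t+1}=\bw_t-(\eta_t/b)\sum_{i=1}^n B_{t,i}\nabla f(\bw_t;z_i)$, and analogously for $\bw_{t+1}^{(m)}$ with $z_m$ replaced by $z_m'$ at the $m$-th slot. Applying the expectation-variance decomposition (conditional expectation over $B_t$ given the iterates) splits $\bw_{t+1}-\bw_{t+1}^{(m)}$ into (i) a drift $(\bw_t-\bw_t^{(m)})-\eta_t(\nabla F_S(\bw_t)-\nabla F_{S^{(m)}}(\bw_t^{(m)}))$, (ii) a localized perturbation at index $m$ of size $O(\eta_t/n)(\|\nabla f(\bw_t;z_m)\|_2+\|\nabla f(\bw_t^{(m)};z_m')\|_2)$, and (iii) a zero-mean multinomial-sampling noise.

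For (\ref{stab-mini-sg-a}), the co-coercivity/contraction lemma for $\mu$-strongly convex and $L$-smooth maps (valid under $\eta_t\leq 2/L$) gives that the drift is $(1-\mu\eta_t/2)$-Lipschitz in $\bw_t-\bw_t^{(m)}$. The self-bounding property $\|\nabla f(\bw;z)\|_2\leq\sqrt{2Lf(\bw;z)}$, together with Jensen's inequality, converts the localized perturbation, after averaging over $m\in[n]$, into a term of order $\eta_t\sqrt{2L\ebb[F_S(\bw_t)]}/n$. Writing $a_t:=n^{-1}\sum_m\ebb[\|\bw_t-\bw_t^{(m)}\|_2]$, this yields the recursion
\[
a_{t+1}\leq (1-\mu\eta_t/2)a_t+\frac{2\eta_t\sqrt{2L\ebb[F_S(\bw_t)]}}{n},
\]
which unrolls to the product form in (\ref{stab-mini-sg-a}). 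The bound (\ref{stab-mini-sg-b}) then follows by inserting a uniform-in-$k$ bound $\ebb[F_S(\bw_k)]=O(1)$ (standard for strongly convex smooth descent) and the geometric-series estimate $\sum_{k=1}^t\eta_k\prod_{k'=k+1}^t(1-\mu\eta_{k'}/2)=O(1/\mu)$.

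For (\ref{stab-mini-sg-c}), I carry out the analogous decomposition on the squared norm. Expanding $\|\bw_{t+1}-\bw_{t+1}^{(m)}\|_2^2$: the drift contributes the contraction $(1-\mu\eta_t/2)\|\bw_t-\bw_t^{(m)}\|_2^2$ up to nonpositive quadratic-gradient terms (via co-coercivity); the multinomial variance contributes, after self-bounding and averaging over $m$, a term of order $L\eta_k^2\ebb[F_S(\bw_k)]/(nb)$; and the cross term between the drift and the localized perturbation is split via Young's inequality with weight proportional to $\mu\eta_t$, which is exactly what produces the $L\eta_k/(n^2\mu)$ rather than $L\eta_k^2/n^2$ coefficient on the second summand of (\ref{stab-mini-sg-c}). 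Unrolling with the contraction factor delivers the stated bound.

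The main obstacle lies in this Young's-inequality bookkeeping for the $\ell_2$ bound: the $1/\mu$ factor on the second term of (\ref{stab-mini-sg-c}) must be earned by spending part of the available $\mu\eta_k$ contraction to absorb the cross term between drift and localized correction, while still leaving the advertised $(1-\mu\eta_k/2)$ contraction for the next-step recursion. Coordinating this split consistently so that every residual $\|\nabla f\|_2^2$ is converted by self-bounding into $2LF_S(\bw_k)$ (at per-example resolution $2Lf(\bw_k;z_m)$ where needed, then summed) is the one piece that does not transfer directly from the proof of Theorem \ref{thm:on-average}.
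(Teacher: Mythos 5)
Your plan for \eqref{stab-mini-sg-a} coincides with the paper's: the binomial reformulation, the $(1-\mu\eta_t/2)$ contraction from Lemma \ref{lem:nonexpansive}, the $\ebb[\alpha_{t,m}]=b/n$ expectation step, self-bounding, and unrolling give exactly the recursion you write. For \eqref{stab-mini-sg-c} your route is genuinely different but sound: you expand the square step by step and spend a $\mu\eta_t/2$ portion of the contraction via Young's inequality to absorb the cross term, which correctly produces the $L\eta_k/(n^2\mu)$ coefficient (one checks $(1-\mu\eta_t/2)^2+(1-\mu\eta_t/2)\mu\eta_t/2=(1-\mu\eta_t/2)$, so the advertised contraction survives). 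The paper instead unrolls the linear recursion first, applies the expectation--variance decomposition globally (cross terms vanish because $\alpha_{k,m}-b/n$ is conditionally centered), and then extracts the $1/\mu$ from the expectation part by a weighted Cauchy--Schwarz together with the identity $\frac{\mu}{2}\sum_{k\le t}\eta_k\prod_{k'=k+1}^{t}(1-\mu\eta_{k'}/2)\le 1$ (Eq. \eqref{mini-7}). The two mechanisms are equivalent in effect; yours is more local, the paper's keeps the variance/expectation split cleaner.

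The one genuine gap is in \eqref{stab-mini-sg-b}. You obtain it by ``inserting a uniform-in-$k$ bound $\ebb[F_S(\bw_k)]=O(1)$ (standard for strongly convex smooth descent)'' into \eqref{stab-mini-sg-a}. That uniform bound is not standard here and is not free: the theorem assumes neither Lipschitz continuity nor bounded gradients, so controlling $\ebb[F_S(\bw_k)]$ requires an actual optimization argument (a naive smoothness expansion only gives $\ebb[F_S(\bw_{k+1})]\le(1+O(L^2\eta_k^2/b))\ebb[F_S(\bw_k)]$ after self-bounding the variance, which does not close without invoking strong convexity and the descent toward $\bw_S$). This is precisely where the substance of \eqref{stab-mini-sg-b} lies: the paper proves the weaker but sufficient weighted statement $\sum_{k=1}^{t}\eta_k\prod_{k'=k+1}^{t}(1-\mu\eta_{k'}/2)\,\ebb[F_S(\bw_k)]=O(1/\mu)$ by summing the one-step inequality $\ebb[\|\bw_{k+1}-\bw_S\|_2^2]\le(1-\mu\eta_k)\ebb[\|\bw_k-\bw_S\|_2^2]-\eta_k\ebb[F_S(\bw_k)-F_S(\bw_S)]+2\eta_k^2\sigma_S^2/b$ against the weights $\prod_{k'>k}(1-\mu\eta_{k'}/2)$, using $\ebb[\|\bw_1-\bw_S\|_2^2]\le\frac{2}{\mu}\ebb[F_S(\bw_1)-F_S(\bw_S)]$ and \eqref{mini-7}, and then handles the square root in \eqref{stab-mini-sg-a} via $\sqrt{x}\le(1+x)/2$. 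You should either supply a proof of your uniform claim (which amounts to the same descent analysis) or replace it with this weighted-sum argument; as written, \eqref{stab-mini-sg-b} rests on an assertion rather than a proof.
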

\begin{remark}[Explanation]
  Eq. \eqref{stab-mini-sg-a} and Eq. \eqref{stab-mini-sg-b} consider the $\ell_1$ on-average stability. The former
   involves the empirical risks in the upper bound and therefore can benefit from small empirical risks, while the latter  shows minibatch SGD is always stable in the strongly convex case, no matter how many iterations it takes. Eq. \eqref{stab-mini-sg-b} is also new in the vanilla SGD case with $b=1$. Indeed, the work \citep{hardt2016train} also derived the iteration-independent stability bound $O(1/n\mu)$. However, their discussion requires the function $f$ to be strongly-convex, smooth and Lipschitz. We show that the Lipschitz condition can be removed without affecting the stability bounds. Eq. \eqref{stab-mini-sg-c} addresses the $\ell_2$ on-average stability, which shows that increasing the batch size is beneficial to stability. 
\end{remark}

\begin{theorem}[Risk Bounds for Minibatch SGD: Strongly Convex Case\label{thm:risk-mini-sg}]
  Let assumptions in Theorem \ref{thm:stab-mini-sg} hold and assume $\sup_z\|\nabla f(A(S);z)\|_2\leq G$. Let $\sigma_*^2=\ebb_{i_t}[\|\nabla f(\bw^*;z_{i_t})\|_2^2]$. If $R\geq \frac{L}{\mu}\log\frac{nL}{G}$ and $b\geq n\sigma_*^2/(GR)$, then we can find appropriate step size sequences and an average $\hat{\bw}_R$ of $\{\bw_t\}_{t=1}^R$ such that $\ebb[F(\hat{\bw}_{R})]-F(\bw^*)\lesssim G/(n\mu)$.
\end{theorem}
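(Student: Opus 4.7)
The plan is to apply the error decomposition \eqref{decomposition} and control the generalization and optimization terms separately, using the already-established iteration-independent stability bound for the first and a standard strongly convex minibatch SGD analysis for the second.

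\emph{Generalization term.} Eq.~\eqref{stab-mini-sg-b} shows that minibatch SGD is $\ell_1$ on-average model stable at level $O(1/(n\mu))$, uniformly in $t$. By convexity of $\bw\mapsto f(\bw;z)$, the same stability carries over to any convex combination $\hat{\bw}_R$ of $\{\bw_t\}_{t=1}^R$ via the triangle inequality. Lemma~\ref{lem:gen-model-stab}(a), which requires convexity of the loss and the gradient bound $\sup_z\|\nabla f(A(S);z)\|_2\leq G$, then yields
$$\big|\ebb_{S,A}\big[F(\hat{\bw}_R) - F_S(\hat{\bw}_R)\big]\big| = O(G/(n\mu)),$$
irrespective of the choice of step sizes and averaging window.

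\emph{Optimization term.} I would use a constant step size $\eta_t=\eta \asymp 1/L$ together with a tail/suffix average $\hat{\bw}_R$ over the last $\Theta(R)$ iterates (to avoid the $\log R$ factor one gets from the $\eta_t = 2/(\mu t)$ schedule with uniform averaging). Under $\mu$-strong convexity and $L$-smoothness, the minibatch stochastic gradient has conditional variance at most $G^2/b$, giving the one-step contraction $\ebb[\|\bw_{t+1}-\bw^*\|_2^2]\leq(1-\mu\eta)\ebb[\|\bw_t-\bw^*\|_2^2]+\eta^2 G^2/b$. Unrolling and applying strong convexity to the tail average produces a bound of the form
$$\ebb\big[F_S(\hat{\bw}_R)\big]-F_S(\bw^*) = O\Big(\frac{e^{-\mu\eta R}\|\bw_1-\bw^*\|_2^2}{\eta R} + \frac{G^2}{\mu b R}\Big).$$
The hypothesis $R>\mu^{-1}\log n$ makes $e^{-\mu\eta R}\leq 1/n$, so the first term is $O(1/(n\mu))$; the hypothesis $b>n/R$ gives $bR>n$, so the second term is $O(G^2/(\mu n))$. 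Summing the two parts in \eqref{decomposition} delivers the desired $O(1/(n\mu))$.

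\emph{Main obstacle.} The delicate step is calibrating the step size schedule and averaging scheme so that the optimization error attains $O(1/(\mu b R))$ without a stray $\log R$ factor that would break the target rate. A constant step size with tail averaging (in the spirit of Rakhlin--Shamir--Sridharan) sidesteps this, and the two hypotheses of the theorem are precisely what is needed to push the initial-distance decay and the variance term below $1/(n\mu)$ simultaneously. A secondary technical point is ensuring the convexity-based transfer of $\ell_1$ stability from $\{\bw_t\}$ to $\hat{\bw}_R$, which is automatic for any convex weighting but worth flagging when verifying Lemma~\ref{lem:gen-model-stab}(a).
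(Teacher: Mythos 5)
Your proposal follows the same architecture as the paper's proof: the decomposition \eqref{decomposition}, the iteration-independent $\ell_1$ on-average stability bound \eqref{stab-mini-sg-b} combined with Lemma \ref{lem:gen-model-stab}(a) for the generalization term (including the observation that stability transfers to any convex average of the iterates), and an optimization bound of the form $O\big(\mu^{-1}\exp(-\mu R)+1/(\mu bR)\big)$ finished off by the two hypotheses $R>\mu^{-1}\log n$ and $b>n/R$. The one place you diverge is the optimization term: the paper simply invokes the result of \citet{woodworth2020minibatch}, which asserts the existence of a step-size schedule and an average $\hat{\bw}_R$ achieving $O\big(\mu^{-1}\exp(-\mu R)+1/(\mu bR)\big)$, whereas you sketch a self-contained derivation via constant step size and tail averaging. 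That route is viable, but note one mismatch with the stated hypotheses: your one-step contraction uses a conditional variance bound of $G^2/b$ at \emph{every iterate}, which implicitly assumes $\sup_{\bw,z}\|\nabla f(\bw;z)\|_2\le G$; the theorem only assumes $\sup_z\|\nabla f(A(S);z)\|_2\le G$ at the output (this is used solely for Lemma \ref{lem:gen-model-stab}(a)). The paper's cited recursion instead measures the variance at the empirical minimizer, $\sigma_S^2=\ebb_{i_t}[\|\nabla f(\bw_S;z_{i_t})-\nabla F_S(\bw_S)\|_2^2]$, which is controlled by the self-bounding property of nonnegative smooth losses without any Lipschitz assumption. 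You should either restate your recursion with the variance anchored at $\bw_S$ (or $\bw^*$) or acknowledge the extra Lipschitz hypothesis; with that repair your argument goes through and buys a more explicit, self-contained optimization analysis than the paper's citation.
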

  Note that the assumption $\sup_z\|\nabla f(A(S);z)\|_2\leq G$ is much milder than the Lipschitz condition since it only requires a bound of the gradient on the output model, which can be achieved by a projection to the final output.
  To obtain the excess population risk bounds of the order $O(G/(n\mu))$, we require $R=\max\{\frac{n\sigma_*^2}{Gb},\frac{L}{\mu}\log\frac{nL}{G}\}$. Then, if $b\lesssim n\mu\sigma_*^2/(GL\log (nL/G))$, we know $\frac{L}{\mu}\log\frac{nL}{G}\lesssim \frac{n\sigma_*^2}{Gb}$ and choose $R\asymp \frac{n\sigma_*^2}{Gb}$ to obtain a linear speedup w.r.t. the batch size.
\subsection{Nonconvex Case}
In this subsection, we consider minibatch SGD for nonconvex problems. The following theorem presents the stability bounds for smooth problems without the convexity and Lipschitzness assumption. The proof is given in Section \ref{sec:stab-nonconvex}.
\begin{theorem}\label{thm:stab-mini-nonconvex}
Assume for all $z\in\zcal$, the map $\bw\mapsto f(\bw;z)$ is nonnegative and $L$-smooth.
Let $S,S'$ and $S^{(m)}$ be given in Definition \ref{def:aver-stab}. Let $\{\bw_t\}$ and $\{\bw_t^{(m)}\}$ be produced by \eqref{sgd} with $\eta_t\leq2/L$ based on $S$ and $S^{(m)}$, respectively. Then
\[
\frac{1}{n}\sum_{m=1}^{n}\ebb[\|\bw_{t+1}-\bw_{t+1}^{(m)}\|_2]\leq \frac{2\sqrt{2L}}{n}\sum_{k=1}^{t}\eta_k\ebb\big[\sqrt{F_S(\bw_k)}\big]\prod_{k'=k+1}^{t}(1+\eta_{k'}L).
\]
\end{theorem}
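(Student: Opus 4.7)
The plan is to mirror the argument establishing Eq.~\eqref{on-average-a} in Theorem~\ref{thm:on-average}, with the key change being that, without convexity, a single minibatch update becomes $(1+\eta_kL)$-expansive rather than non-expansive, which produces the $\prod_{k'=k+1}^{t}(1+\eta_{k'}L)$ factor. I would define $\delta_k := \frac{1}{n}\sum_{m=1}^{n}\ebb[\|\bw_k-\bw_k^{(m)}\|_2]$ and couple the two runs of minibatch SGD so that they use identical indices $i_{k,1},\ldots,i_{k,b}$ at every round. Writing
\[
\bw_{k+1}-\bw_{k+1}^{(m)} = (\bw_k-\bw_k^{(m)}) - \frac{\eta_k}{b}\sum_{j=1}^{b}\bigl(\nabla f(\bw_k;z_{i_{k,j}})-\nabla f(\bw_k^{(m)};z_{i_{k,j}}^{(m)})\bigr),
\]
where $z_{i_{k,j}}^{(m)}=z_{i_{k,j}}$ when $i_{k,j}\neq m$ and $z_{i_{k,j}}^{(m)}=z_m'$ otherwise, I would split the inner sum by $\mathbb{I}[i_{k,j}=m]$. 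For $i_{k,j}\neq m$, $L$-smoothness bounds the summand by $L\|\bw_k-\bw_k^{(m)}\|_2$; for $i_{k,j}=m$, the two gradients use genuinely different examples, so I would bound each by the self-bounding property $\|\nabla f(\bw;z)\|_2\leq\sqrt{2Lf(\bw;z)}$.

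Applying the triangle inequality and summing the smoothness estimates (which number at most $b$ per minibatch) yields the one-step bound
\[
\|\bw_{k+1}-\bw_{k+1}^{(m)}\|_2 \leq (1+\eta_kL)\|\bw_k-\bw_k^{(m)}\|_2 + \frac{\eta_k}{b}\sum_{j=1}^{b}\mathbb{I}[i_{k,j}=m]\bigl(\|\nabla f(\bw_k;z_m)\|_2+\|\nabla f(\bw_k^{(m)};z_m')\|_2\bigr).
\]
Taking expectation and averaging over $m\in[n]$, I would use independence of $i_{k,j}$ from $(\bw_k,z_m)$ to obtain $\ebb[\mathbb{I}[i_{k,j}=m]\|\nabla f(\bw_k;z_m)\|_2]=\tfrac{1}{n}\ebb[\|\nabla f(\bw_k;z_m)\|_2]$. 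Combining with the self-bounding estimate and the Cauchy--Schwarz inequality $\tfrac{1}{n}\sum_{m=1}^{n}\sqrt{f(\bw_k;z_m)}\leq\sqrt{F_S(\bw_k)}$, applied inside the outer expectation, produces the contribution $\tfrac{\eta_k\sqrt{2L}}{n}\ebb[\sqrt{F_S(\bw_k)}]$. The term with $\nabla f(\bw_k^{(m)};z_m')$ is handled identically, since $(\bw_k^{(m)},z_m')$ has the same joint distribution as $(\bw_k,z_m)$ by exchangeability of $S$ and $S^{(m)}$. This gives the recursion $\delta_{k+1}\leq(1+\eta_kL)\delta_k+\frac{2\eta_k\sqrt{2L}}{n}\ebb[\sqrt{F_S(\bw_k)}]$, which unrolls with $\delta_1=0$ to the claimed bound.

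The main obstacle is the careful bookkeeping required by sampling with replacement: a single example $z_m$ may be drawn multiple times within one minibatch, so I cannot treat the minibatch as a subset of $[n]$. This is where the expectation-variance decomposition and binomial reformulation highlighted in the remark following Theorem~\ref{thm:on-average} become essential---I must track the indicators $\mathbb{I}[i_{k,j}=m]$ for each $j\in[b]$ separately. A related subtlety is that Cauchy--Schwarz must be applied inside the sum over $m$ and before the outer expectation in order to recover the tighter form $\ebb[\sqrt{F_S(\bw_k)}]$; reversing this order would produce only $\sqrt{\ebb[F_S(\bw_k)]}$, which matches the convex case Eq.~\eqref{on-average-a} but is looser than the bound stated here.
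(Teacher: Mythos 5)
Your proposal is correct and follows essentially the same route as the paper: the per-index counts $\sum_{j}\mathbb{I}[i_{k,j}=m]$ are exactly the binomial variables $\alpha_{k,m}$ the paper uses, the $(1+\eta_kL)$-expansive one-step bound, the self-bounding property with the $z_m\leftrightarrow z_m'$ symmetry, and the concavity of $\sqrt{\cdot}$ applied over $m$ inside the outer expectation all match the paper's argument. The only small inaccuracy is attributing a role to the expectation-variance decomposition, which is needed only for the $\ell_2$ bounds and not for this $\ell_1$ statement; your actual steps do not use it, so the proof is unaffected.
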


Now, we consider a special nonconvex problem under a Polyak-{\L}ojasiewicz (PL) condition. The PL condition  was shown to hold for deep (linear) and shallow neural networks~\citep{charles2018stability}.
\begin{assumption}[Polyak-{\L}ojasiewicz Condition\label{ass:pl}]
  Let $\bw_S=\arg\min_{\bw\in\wcal}F_S(\bw)$.
  We assume $F_S$ satisfies the PL condition with parameter $\mu>0$, i.e., for all $\bw\in\wcal$
  \begin{equation}\label{pl}
  \ebb_S\big[F_S(\bw)-F_S(\bw_S)\big]\leq \frac{1}{2\mu}\ebb_S\big[\|\nabla F_S(\bw)\|_2^2\big].
  \end{equation}
\end{assumption}

Theorem \ref{thm:risk-mini-pl} gives risk bounds for minibatch SGD under the PL condition, whose proof is given in Section \ref{sec:stab-nonconvex}.
\begin{theorem}[Risk Bounds for Minibatch SGD: PL Condition\label{thm:risk-mini-pl}]
Assume for all $z\in\zcal$, the map $\bw\mapsto f(\bw;z)$ is nonnegative and $L$-smooth. Let $\{\bw_t\}$ be produced by Eq. \eqref{sgd} with $\eta_t=2/(\mu(t+a))$ and $a\geq4L/\mu$.  Let Assumption \ref{ass:pl} hold and $\ebb_z\big[\|\nabla f(\bw_t;z_{i_k})-\nabla F_S(\bw_t)\|_2^2\big]\leq\sigma^2$, where ${i_k}$ follows from the uniform distribution over $[n]$. If $R\geq\max\big\{L\sqrt{n}/\mu,L\sigma^2n/(b\mu^2)\big\}$, then
$
\ebb[F(\bw_R)]-F(\bw^*)\lesssim L/(n\mu).
$
\end{theorem}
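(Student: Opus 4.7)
The plan is to start from the error decomposition~\eqref{decomposition},
$\ebb[F(\bw_R)] - F(\bw^*) = \ebb[F(\bw_R) - F_S(\bw_R)] + \ebb[F_S(\bw_R) - F_S(\bw^*)]$,
and to insert $F_S(\bw_S)$ into the second term so that $\ebb[F_S(\bw_R) - F_S(\bw^*)] = \ebb[F_S(\bw_R) - F_S(\bw_S)] + \ebb[F_S(\bw_S) - F_S(\bw^*)]$. Since $\bw_S = \arg\min_\bw F_S(\bw)$, the last piece is non-positive and can be dropped. It then suffices to control the generalization gap $\ebb[F(\bw_R) - F_S(\bw_R)]$ and the empirical suboptimality $\ebb[F_S(\bw_R) - F_S(\bw_S)]$ each by $O(1/(n\mu))$.

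First I would handle the empirical suboptimality by a standard PL-plus-smoothness argument. Smoothness of $F_S$ and the bounded-variance hypothesis give
$\ebb[F_S(\bw_{t+1})\mid \bw_t] \leq F_S(\bw_t) - \eta_t(1 - L\eta_t/2)\|\nabla F_S(\bw_t)\|_2^2 + L\eta_t^2\sigma^2/(2b)$,
and $a \geq 4L/\mu$ guarantees $L\eta_t \leq 1/2$ for all $t \geq 1$. Subtracting $F_S(\bw_S)$, taking the full expectation, and applying the PL inequality then yields the contractive recursion
$\ebb[F_S(\bw_{t+1}) - F_S(\bw_S)] \leq (1-\eta_t\mu)\ebb[F_S(\bw_t) - F_S(\bw_S)] + L\eta_t^2\sigma^2/(2b)$.
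With $\eta_t = 2/(\mu(t+a))$ the factor $1 - \eta_t\mu = (t+a-2)/(t+a)$ telescopes after multiplication by $(t+a)(t+a-1)$, giving $\ebb[F_S(\bw_{R+1}) - F_S(\bw_S)] = O(a^2/(R+a)^2) + O(L\sigma^2/(b\mu^2 R))$. The condition $R \geq \sqrt{n}/\mu$ makes the first term $O(L^2/n)$ and $R \geq n/(b\mu^2)$ makes the second $O(L\sigma^2/n)$, both of which are $O(1/(n\mu))$ for the problem-dependent constants hidden in the big $O$.

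For the generalization gap I would develop an $\ell_2$ on-average model stability bound in the spirit of Theorem~\ref{thm:stab-mini-nonconvex}, since Lemma~\ref{lem:gen-model-stab}(b) provides a generalization bound from smoothness alone and does not require Lipschitzness. Expanding $\|\bw_{t+1} - \bw_{t+1}^{(m)}\|_2^2$ and using the binomial reformulation of minibatch SGD together with the expectation-variance decomposition and the self-bounding property $\|\nabla f(\bw;z)\|_2^2 \leq 2L f(\bw;z)$, I would derive a recursion with an expansive factor $(1+\eta_tL)^2$ per step and a noise input proportional to $\ebb[F_S(\bw_k)]/n$. Plugging in the PL-driven decay already established, $\ebb[F_S(\bw_k)] = O\big(F(\bw^*) + L\sigma^2/(b\mu^2 k) + a^2/(k+a)^2\big)$, summing the series, and then optimizing the free parameter $\gamma$ in Lemma~\ref{lem:gen-model-stab}(b) would convert the stability bound into a generalization bound of order $O(1/(n\mu))$ once $R \geq \sqrt{n}/\mu$. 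Combining with the optimization estimate gives the advertised $O(1/(n\mu))$ rate.

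The main obstacle will be taming the expansive factor $\prod_{k'=k+1}^R(1+\eta_{k'}L)$ that appears in the non-convex stability recursion. Unlike the convex case where each minibatch SGD step is non-expansive in the arguments, here per-step stability grows by $1+\eta_t L$; with $\eta_t = 2/(\mu(t+a))$ the product is bounded by $((R+a)/(k+a))^{2L/\mu}$, which can be polynomially large. The offset $a \geq 4L/\mu$ keeps the exponent small relative to $a$, and the PL-induced decay of $\ebb[F_S(\bw_k)]$ combined with the $1/n$ factor from how rarely the perturbed example is resampled in a minibatch ultimately compensates for this growth; the precise balancing between this expansive factor, the decaying empirical risk, and the $1/n$ perturbation factor is what forces the threshold $R \geq \sqrt{n}/\mu$ in the statement, and is the delicate step of the argument.
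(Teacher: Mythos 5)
Your treatment of the optimization error is essentially the paper's: the smoothness-plus-PL recursion, the telescoping after multiplying by $(t+a)(t+a-1)$, and the resulting bound $O\big(1/(\mu^2R^2)+1/(b\mu^2R)\big)$ reproduce Lemma \ref{lem:opt-sgd-pl}. The gap is in your plan for the generalization term. You propose to run the nonconvex stability recursion of Theorem \ref{thm:stab-mini-nonconvex} and feed it into Lemma \ref{lem:gen-model-stab}(b), but with the step size mandated by the theorem, $\eta_t=2/(\mu(t+a))$, the expansive product satisfies
\begin{equation*}
\prod_{k'=k+1}^{R}(1+\eta_{k'}L)\;\approx\;\Big(\frac{R+a}{k+a}\Big)^{2L/\mu},
\end{equation*}
and since $\mu\leq L$ always holds for PL-and-smooth objectives, the exponent $2L/\mu$ is at least $2$. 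For the early iterates $k=O(1)$ and $R\asymp\sqrt{n}/\mu$ this factor is of order $n^{L/\mu}$ (up to constants), and those early iterates contribute $\ebb[\sqrt{F_S(\bw_k)}]=O(1)$ --- the empirical risk converges to $F_S(\bw_S)$, which is generically a positive constant, not to zero, and in any case $F_S(\bw_1)$ is not small. Summing $\frac{1}{n}\sum_k\eta_k\,\ebb[\sqrt{F_S(\bw_k)}]\prod_{k'>k}(1+\eta_{k'}L)$ then yields something of order $n^{L/\mu-1}$, which is vacuous (and diverges whenever $L/\mu>1$). Your claim that the offset $a\geq 4L/\mu$ ``keeps the exponent small'' is not right: $a$ only shifts the base of the ratio, while the exponent $2L/\mu$ is fixed by the step-size schedule; and the $1/n$ resampling factor cannot absorb a $\mathrm{poly}(n)$ blow-up with exponent $\geq 1$. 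The delicate balancing you anticipate cannot actually be carried out.

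The paper avoids the stability recursion entirely in this regime. It invokes Lemma \ref{lem:gen-pl} (from the cited work on gradient-dominated objectives), which under the PL condition and $L\leq n\mu/4$ bounds the generalization gap directly by
\begin{equation*}
\ebb\big[F(A(S))-F_S(A(S))\big]\;\leq\;\frac{16L\,\ebb[F_S(A(S))]}{n\mu}+\frac{L\,\ebb[F_S(A(S))-F_S(\bw_S)]}{2\mu},
\end{equation*}
i.e., by the training error over $n\mu$ plus the optimization error over $\mu$, with no per-iteration expansion. Plugging in the optimization bound you already derived and using $F_S(\bw_S)\leq F_S(\bw^*)$ then gives $\ebb[F(\bw_R)]-F(\bw^*)=O\big(1/(n\mu)+1/(\mu^3R^2)+1/(b\mu^3R)\big)$, and the two conditions on $R$ make the last two terms $O(1/(n\mu))$. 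To repair your argument you should replace the stability-recursion step by this PL-specific generalization lemma (or an equivalent argument exploiting stability of approximate empirical minimizers under PL), rather than trying to control the iterate-level stability of nonconvex minibatch SGD with $\Theta(1/(\mu t))$ step sizes.
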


According to Theorem~\ref{thm:risk-mini-pl},  we require $R\geq\max\big\{L\sqrt{n}/\mu,L\sigma^2n/(b\mu^2)\big\}$ to obtain the excess risk bounds $O(1/(n\mu))$. If $b\leq \sigma^2\sqrt{n}/\mu$, we have $L\sigma^2n/(b\mu^2)\geq L\sqrt{n}/\mu$ and therefore we can choose $R\asymp L\sigma^2n/(b\mu^2)$ to obtain a linear speedup w.r.t. the batch size. In particular, we can choose $b\asymp \sigma^2\sqrt{n}/\mu$ and $R\asymp L\sqrt{n}/\mu$ to get the bound $\ebb[F(\bw_R)]-F(\bw^*)\lesssim L/(n\mu)$.

\section{Generalization of Local SGD\label{sec:local}}
In this section, we consider local SGD with $M$ machines and $R$ rounds. At the $r$-th round, each machine starts with the same iterate $\bw_{r}$ and independently applies SGD with $K$ steps. After that, we take an average of the iterates in each machine to get a consensus point $\bw_{r+1}$. Let $\bw_{m,r,t+1}$ be the $(t+1)$-th iterate in the machine $m$ at round $r$. Then, the formulation of local SGD is given below
\begin{align}
  \bw_{m,r,1}&=\bw_r,\quad m\in[M],\notag\\
  \bw_{m,r,t+1}&=\bw_{m,r,t}-\eta_{r,t}\nabla f(\bw_{m,r,t};z_{i_{m,r,t}}),\quad t\in[K],\notag\\
  \bw_{r+1}&=\frac{1}{M}\sum_{m=1}^{M}\bw_{m,r,K+1},\quad r\in[R],\label{local-sgd}
\end{align}
where $\eta_{r,t}$ is the step size for the $t$-th update at round $r$, and $i_{m,r,t}$ is drawn independently from the uniform distribution over $[n]$. The pseudo-code is given in Algorithm \ref{alg:local}. If $R=1$, then local SGD becomes the one-shot SGD, i.e., one only takes an average once in the end of the optimization~\citep{zinkevich2010parallelized,lin2017distributed,hu2020distributed}. If $K=1$, then local SGD becomes the minibatch SGD. Note that the computation cost per machine is $KR$. We summarize the results on local SGD in Table \ref{tab:local-sgd}, where we consider smooth problems and ignore constant factors.

\begin{algorithm}
	\caption{Local SGD\label{alg:local}}
	\begin{algorithmic}[1]
\STATE {\bf Inputs:} step sizes $\{\eta_{m,r,t}\}$ and $S$
    \STATE {\bf Initialize:} $\bw_1\in\wcal$
		\FOR {$r=1,2,\ldots,R$}
			\FOR {$m=1,2,\ldots,M$ \textbf{in parallel}}
                \STATE $\bw_{m,r,1}=\bw_r$
                \FOR {$t=1,2,\ldots, K$}
				 \STATE $\bw_{m,r,t+1}=\bw_{m,r,t}-\eta_{r,t}\nabla f(\bw_{m,r,t};z_{i_{m,r,t}})$ 
                \ENDFOR
			\ENDFOR
        \STATE $\bw_{r+1}=\frac{1}{M}\sum_{m=1}^{M}\bw_{m,r,K+1}$
		\ENDFOR
    \STATE {\bf Outputs:} an average of $\bw_{m,r,t}$
	\end{algorithmic}
\end{algorithm}

%

\begin{table*}[ht]
  \caption{Excess population risks of Local SGD for convex and strongly convex problems.  The column ``Risk'' denotes the excess population risk, the column ``$KR$'' denotes the number of iterations per local machine, the column ``$R$'' denotes the communication cost, the column ``Constraint'' indicates the constraint on the number of machines $M$ and the column ``Optimal $KR$'' is derived by putting the largest $M$ in $KR$.
  We achieve a linear speedup w.r.t. the number of machines for both convex and strongly convex problems, under different regimes of $M$. \label{tab:local-sgd}}

\centering\renewcommand{\arraystretch}{1.6}
  \begin{tabular}{|c|c|c|c|c|c|}
    \hline
    Assumption & Risk & $KR$ & $R$ & Constraint & Optimal $KR$  \\ \hline
    convex & $O(1/\sqrt{n})$ & $n/M$ & $n/(KM)$ & $M\leq n^{\frac{1}{2}}$ & $\sqrt{n}$  \\ \hline
    $\mu$-strongly convex & $O((n\mu)^{-1}\log (KR))$ & $n/M$ &$n/(KM)$& $M\leq \sqrt{n\mu}$ & $\sqrt{n/\mu}$ \\
    \hline
  \end{tabular}
\end{table*}
In the following theorem, we develop the stability bounds for local SGD to be proved in Section \ref{sec:proof-stab-local}. We consider both $\ell_1$ and $\ell_2$ on-average model stabilities. 

\begin{theorem}[Stability Bound for Local SGD\label{thm:stab-local}]
Assume for all $z\in\zcal$, the map $\bw\mapsto f(\bw;z)$ is nonnegative, convex and $L$-smooth.
Let $S,S'$ and $S^{(k)}$ be constructed as in Definition \ref{def:aver-stab}. Let $\{\bw_r\}$ and $\{\bw_r^{(k)}\}$ be produced by \eqref{local-sgd} with $\eta_{r,t}\leq2/L$ based on $S$ and $S^{(k)}$, respectively. Then
\begin{equation}\label{local-l1}
  \frac{1}{n}\sum_{k=1}^{n}\ebb\big[\|\bw_{R+1}-\bw^{(k)}_{R+1}\|_2\big]\leq
  \frac{2\sqrt{2L}}{nM}\sum_{r=1}^{R}\sum_{m=1}^{M}\sum_{t=1}^{K}\eta_{r,t}\ebb\Big[\sqrt{F_S(\bw_{m,r,t})}\Big],
\end{equation}
  \begin{multline}
\frac{1}{n}\sum_{k=1}^{n}\ebb\big[\big\|\bw_{R+1}-\bw_{R+1}^{(k)}\big\|_2^2\big]\leq \frac{16L}{nM^2}\sum_{r=1}^{R}\sum_{m=1}^{M}\sum_{t=1}^{K}\eta^2_{r,t}\ebb\big[F_S(\bw_{m,r,t})\big] \\ +
  \frac{2}{n^3M^2}\sum_{k=1}^{n}\ebb\Big[\Big(\sum_{r=1}^{R}\sum_{m=1}^{M}\sum_{t=1}^{K}\eta_{r,t}\|\nabla f(\bw_{m,r,t};z_k)-\nabla f(\bw_{m,r,t}^{(k)};z_k')\|_2\Big)^2\Big].\label{local-l2}
\end{multline}
\end{theorem}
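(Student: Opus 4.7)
The plan is to adapt the expectation-variance decomposition used to prove Theorem \ref{thm:on-average} (minibatch SGD) to the local SGD recursion, where the new feature is that each round consists of $K$ local updates on each of $M$ machines followed by an averaging step. The starting point is to unroll the averaging: since $\bw_{r+1} = \frac{1}{M}\sum_m \bw_{m,r,K+1}$ and each local trajectory satisfies $\bw_{m,r,K+1} = \bw_r - \sum_{t=1}^K \eta_{r,t}\nabla f(\bw_{m,r,t};z_{i_{m,r,t}})$, subtracting the analogous identity for $S^{(k)}$ and iterating over rounds (with $\bw_1=\bw_1^{(k)}$) gives
\begin{equation*}
\bw_{R+1} - \bw_{R+1}^{(k)} = -\frac{1}{M}\sum_{r=1}^{R}\sum_{m=1}^{M}\sum_{t=1}^{K}\eta_{r,t}\bigl[\nabla f(\bw_{m,r,t};z_{i_{m,r,t}}) - \nabla f(\bw_{m,r,t}^{(k)};z^{(k)}_{i_{m,r,t}})\bigr],
\end{equation*}
where $z^{(k)}_j=z_j$ for $j\ne k$ and $z^{(k)}_k=z_k'$. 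The factor $1/M$ outside the triple sum is the source of the eventual linear speedup in the number of machines.

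For the $\ell_1$ bound \eqref{local-l1}, I split each summand using the indicator $\mathbb{1}[i_{m,r,t}=k]$: when $i_{m,r,t}\ne k$ the SGD step is non-expansive on $\|\bw_{m,r,t}-\bw_{m,r,t}^{(k)}\|_2$ (convexity plus $L$-smoothness plus $\eta_{r,t}\le 2/L$ yield cocoercivity), so this part propagates the discrepancy without growth; when $i_{m,r,t}=k$ the triangle inequality bounds the extra contribution by $\eta_{r,t}\bigl(\|\nabla f(\bw_{m,r,t};z_k)\|_2+\|\nabla f(\bw_{m,r,t}^{(k)};z_k')\|_2\bigr)$. Applying the self-bounding property $\|\nabla f(\bw;z)\|_2\le\sqrt{2Lf(\bw;z)}$, taking expectation over the random index (which contributes $1/n$ from $\Pr\{i_{m,r,t}=k\}=1/n$), averaging over $k\in[n]$, using Jensen's inequality $\frac{1}{n}\sum_k\sqrt{f(\bw;z_k)}\le\sqrt{F_S(\bw)}$, and using symmetry between $S$ and $S^{(k)}$ to identify the $\bw_{m,r,t}$ and $\bw_{m,r,t}^{(k)}$ contributions produces Eq.~\eqref{local-l1}.

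For the $\ell_2$ bound \eqref{local-l2}, I would keep the closed-form expression above and split the triple sum into an \emph{expectation} piece ($i_{m,r,t}\ne k$) and a \emph{variance} piece ($i_{m,r,t}=k$), bounding $\|\bw_{R+1}-\bw_{R+1}^{(k)}\|_2^2$ by twice the sum of the squared norms of the two pieces. The expectation piece is controlled via Cauchy--Schwarz on $(\sum\eta_{r,t}\|\cdot\|)^2$ together with the consequence $\|\nabla f(\bw;z)-\nabla f(\bw';z)\|_2^2\le 4L(f(\bw;z)+f(\bw';z))$ of the self-bounding property; this yields the first term in Eq.~\eqref{local-l2} whose $16L/(nM^2)$ prefactor compounds $1/M^2$ from the averaging, $1/n$ from $\Pr\{i_{m,r,t}=k\}$, and the constants $4L$ and $2$ from the self-bounding inequality and the $\|a+b\|^2$-splitting respectively, with symmetry in $(S,S^{(k)})$ used to collapse the two resulting function values into $F_S$. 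The variance piece keeps the gradient difference $\|\nabla f(\bw_{m,r,t};z_k)-\nabla f(\bw_{m,r,t}^{(k)};z_k')\|_2$ untouched, and the $1/n^3$ prefactor arises as $1/n$ from averaging over $k$ times a further $1/n^2$ from the product of two independent indicators inside the expansion of the square.

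The main obstacle I expect is the bookkeeping in the $\ell_2$ expansion: unlike vanilla or minibatch SGD, local SGD weaves $MK$ stochastic substeps per round through a single averaging operation, so the reformulation via indicator/binomial variables from \citep{feldman2019high,lei2020fine} must be rebuilt carefully so that the $1/M^2$ factor from averaging combines cleanly with the $1/n$ factor from each indicator and yields the clean $1/(nM^2)$ appearing in the first term. A secondary subtlety is that the $\ell_1$ argument uses non-expansiveness step-by-step, whereas the $\ell_2$ argument operates on the full closed-form expansion and only benefits from non-expansiveness implicitly through the self-bounding consequence; the two parts of the theorem therefore require slightly different variants of the same expectation-variance decomposition.
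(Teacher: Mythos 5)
Your plan for the $\ell_1$ bound \eqref{local-l1} is essentially the paper's proof: the per-step non-expansiveness (Lemma \ref{lem:nonexpansive}) for $i_{m,r,t}\neq k$, the triangle inequality plus self-bounding for $i_{m,r,t}=k$, the $1/n$ from $\Pr\{i_{m,r,t}=k\}$, symmetry between $z_k$ and $z_k'$, and Jensen over $k$. That part is sound.

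The $\ell_2$ part has a genuine gap. First, your stated decomposition is not the one that works: you propose to "keep the closed-form expression" and split the triple sum into the $i_{m,r,t}\neq k$ terms and the $i_{m,r,t}=k$ terms. The $i_{m,r,t}\neq k$ terms are gradient differences $\nabla f(\bw_{m,r,t};z_{i_{m,r,t}})-\nabla f(\bw_{m,r,t}^{(k)};z_{i_{m,r,t}})$ at \emph{different iterates} on the \emph{same} example; they are not small, cannot be bounded via the self-bounding property in terms of $F_S(\bw_{m,r,t})$, and if bounded by $L\|\bw_{m,r,t}-\bw_{m,r,t}^{(k)}\|_2$ they re-enter the recursion and produce a $\prod(1+\eta_{r,t}L)$ blow-up. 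Non-expansiveness must be used \emph{first}, at the level of the update operator, to reduce to the one-sided bound $\|\bw_{R+1}-\bw_{R+1}^{(k)}\|_2\leq \sum_{r,m,t}\frac{\eta_{r,t}}{M}\mathfrak{C}_{m,r,t,k}\ibb_{[i_{m,r,t}=k]}$ (Eq.\ \eqref{local-4}); your closing remark that the $\ell_2$ argument "only benefits from non-expansiveness implicitly" is exactly where the argument breaks. Second, the actual expectation--variance decomposition is a \emph{centering} of the indicator, $\ibb_{[i_{m,r,t}=k]}=(\ibb_{[i_{m,r,t}=k]}-1/n)+1/n$, applied to that one-sided bound; it is not the split by the value of the indicator that you describe (your labels are also reversed: the "expectation" piece is the deterministic $1/n$ coefficient, not the $i\neq k$ terms, and the $1/n$ in the first term of \eqref{local-l2} is $\var(\ibb_{[i_{m,r,t}=k]})\leq 1/n$, which cannot come from a piece you assigned to the event $i_{m,r,t}\neq k$). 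Third, your claim that the cross terms contribute "$1/n^2$ from the product of two independent indicators" fails in the uncentered expansion: for $(m,r,t)$ earlier than $(m',r',t')$, the factor $\mathfrak{C}_{m',r',t',k}$ depends on $i_{m,r,t}$, so $\ebb[\mathfrak{C}\mathfrak{C}'\ibb\ibb']$ only factors one $1/n$ out (from the later index) and the cross terms are of order $1/n$, not $1/n^2$ --- this is precisely the suboptimality the paper's Remark \ref{rem:novelty} warns about. Only after centering do the cross terms vanish exactly, because $\ebb_{i_{m',r',t'}}[\ibb_{[i_{m',r',t'}=k]}-1/n]=0$ while all other factors are independent of the later index; the $1/n^2$ (hence $1/n^3$ after averaging over $k$) in the second term of \eqref{local-l2} then comes from the deterministic $(1/n)^2$ in the squared expectation piece.
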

\begin{remark}[Simplification]
  Note that the above stability bounds involve empirical risks, and can benefit from small empirical risks. Assume $\eta_{r,t}=\eta$ and $\ebb\big[\sqrt{F_S(\bw_{m,r,t})}\big]\lesssim 1$ (this is a reasonable assumption since we are minimizing $F_S$). Then Eq. \eqref{local-l1} implies $\frac{1}{n}\sum_{k=1}^{n}\ebb\big[\|\bw_{R+1}-\bw^{(k)}_{R+1}\|_2\big]\lesssim KR\eta/n$. Eq. \eqref{local-l2} implies
  $
  \frac{1}{n}\sum_{k=1}^{n}\ebb\big[\big\|\bw_{R+1}-\bw_{R+1}^{(k)}\big\|_2^2\big]\lesssim KR\eta^2/(nM)+R^2K^2\eta^2/n^2,
  $
  which shows that increasing the number of machines improves the stability and generalization. It was shown that increasing $M$ can improve the optimization~\citep{woodworth2020local}. For example, the optimization error bound of the order $O\big(\frac{1}{K^{\frac{1}{3}}R^{\frac{2}{3}}}+\frac{1}{\sqrt{MKR}}\big)$ was developed in~\citep{woodworth2020local}. Therefore, we expect that increasing $M$ would accelerate the learning process.
\end{remark}
\begin{remark}[Effect of $M$\label{rem:number-machine}]
  We give some explanation on the effect of $M$ on stability analysis.
  Note the above $\ell_1$ on-average stability bounds are independent of $M$, while the $\ell_2$ on-average stability bounds improve as $M$ increases. These phenomena can be explained by how the average operator affects the expectation and variance. Indeed, both the $\ell_1$  and $\ell_2$ stability analysis are based on the following inequality in Eq. \eqref{local-4}
  \begin{equation}\label{number-machine-1}
  \big\|\bw_{R+1}-\bw_{R+1}^{(k)}\big\|_2\leq \sum_{r=1}^{R}\sum_{m=1}^{M}\sum_{t=1}^{K}\frac{\eta_{r,t}}{M}\mathfrak{C}_{m,r,t,k}\ibb_{[i_{m,r,t}=k]},
  \end{equation}
  where $\mathfrak{C}_{m,r,t,k}=\|\nabla f(\bw_{m,r,t};z_k)-\nabla f(\bw_{m,r,t}^{(k)};z_k')\|_2$, and $\ibb_{[i_{m,r,t}=k]}=1$ if $i_{m,r,t}=k$, and $0$ otherwise.
  Note the above upper bound is an average of $\xi_m:=\sum_{r=1}^{R}\sum_{t=1}^{K}\eta_{r,t}\mathfrak{C}_{m,r,t,k}\ibb_{[i_{m,r,t}=k]}$ over $m\in[M]$, which comes from the average scheme in local SGD.
  We take an expectation over both sides of Eq. \eqref{number-machine-1} to get $\ell_1$ on-average stability bounds.
  An average operator does not affect the expectation, which explains why the $\ell_1$ on-average stability bounds are independent of $M$. We take an expectation-variance decomposition to conduct  the $\ell_2$ stability analysis, and the resulting bound involves a term related to variance and a term related to expectation. The variance of an average of $M$ random variables decreases by a factor of $M$, which explains why the first term on the right-hand side of Eq. \eqref{local-l2} involves a factor of $1/M$. The second term in Eq. \eqref{local-l2} is independent of $M$ since the average does not affect expectation. This phenomenon also happens for minibatch SGD, where the average over a batch of size $b$ decreases the variance by a factor of $b$, and does not affect the expectation.

  In the following table, we summarize the comparison on the stability bounds of minibatch and local SGD for convex and smooth problems. Here $T$ is the number of iterations per machine, which is $R$ for minibatch SGD and $RK$ for local SGD. For simplicity, we ignore the discussion with optimistic bounds, and simply assume the empirical risks are bounded in expectation.

\begin{center}

\begin{tabular}{|c|c|c|}
  \hline
  Problems & $\ell_1$ on-average model stability & $\ell_2$ on-average model stability \\ \hline
   minibatch SGD & $\frac{T\eta}{n}$ & $\frac{\sqrt{T}\eta}{\sqrt{nb}}+\frac{T\eta}{n}$ \\ \hline
   local SGD & $\frac{T\eta}{n}$ & $\frac{\sqrt{T}\eta}{\sqrt{nM}}+\frac{T\eta}{n}$ \\
  \hline
\end{tabular}
\end{center}

Note that all the above bounds involve $\frac{T\eta}{n}$, which corresponds to an \emph{expectation} term in controlling the distance between two sequences of SGD iterates. We have either the term $\frac{\sqrt{T}\eta}{\sqrt{nb}}$ or $\frac{\sqrt{T}\eta}{\sqrt{nM}}$ for $\ell_2$ stability analysis, which corresponds to a \emph{variance}  and decreases as the batch size (number of machines) increases.

\end{remark}

We now use the above stability bounds to develop excess population risk bounds for local SGD. We first consider a convex case. The proof is given in Section \ref{sec:gen-local}. Note our stability analysis for local SGD is data-dependent in the sense of involving training errors. Our excess risk bounds  are not data-dependent since the existing optimization error bounds are not data-dependent~\citep{woodworth2020local}. It is interesting to develop data-dependent bounds for local SGD. 
\begin{theorem}[Risk Bound for Local SGD: Convex Case\label{thm:gen-local}]
Assume for all $z\in\zcal$, the map $\bw\mapsto f(\bw;z)$ is nonnegative, convex and $L$-smooth. Let $\{\bw_{m,r,t}\}$ be produced by the algorithm $A$ defined in \eqref{local-sgd} with $\eta_{r,t}=\eta\leq 2/L$.  Assume for all $r\in[R],t\in[K]$, $\ebb_{i_{m,r,t}}[\|\nabla f(\bw_{m,r,t};z_{i_{m,r,t}})-\nabla F_S(\bw_{m,r,t})\|_2^2]\leq\sigma^2$. Suppose we choose $\eta\asymp \|\bw^*\|_2\sqrt{n}/(KR\sqrt{L})$. If $KRM\asymp n$, $\eta\lesssim (K-1)^{-\frac{1}{2}}\|\bw^*\|_2^{\frac{1}{2}}/(nL)^{\frac{1}{4}}$ and $\eta\leq 1/(2L)$, then
$
\ebb[F(\bar{\bw}_{R,1})]-F(\bw^*)\lesssim \frac{\sqrt{L}\|\bw^*\|_2}{\sqrt{n}},
$
where
  $
  \bar{\bw}_{R,1}=\frac{1}{MKR}\sum_{m=1}^M\sum_{r=1}^{R}\sum_{t=1}^{K}\bw_{m,r,t}.
  $
\end{theorem}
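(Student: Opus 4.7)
The plan is to apply the error decomposition \eqref{decomposition} to $\bar{\bw}_{R,1}$, bound the generalization error via the $\ell_1$ on-average model stability from Theorem \ref{thm:stab-local}, and bound the optimization error by invoking a known convergence guarantee for local SGD on smooth convex objectives. The stepsize prescription is designed so each of these contributions lands at $O(1/\sqrt{n})$.

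\emph{Step 1 (generalization via stability).} I would apply Lemma \ref{lem:gen-model-stab}(a), whose hypotheses (convexity and $\sup_z\|\nabla f(A(S);z)\|_2 \le G$) are both supplied. This bounds the generalization error by $G$ times the $\ell_1$ on-average model stability of $A(S) = \bar{\bw}_{R,1}$. Since $\bar{\bw}_{R,1}$ is a simple mean, the triangle inequality gives
\[
\|\bar{\bw}_{R,1} - \bar{\bw}_{R,1}^{(k)}\|_2 \le \frac{1}{MKR}\sum_{m,r,t}\|\bw_{m,r,t} - \bw_{m,r,t}^{(k)}\|_2,
\]
so it suffices to control the stability of every intermediate iterate. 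Running the argument underlying Theorem \ref{thm:stab-local} with the trajectory truncated at $(m,r,t)$ yields an analogue of \eqref{local-l1} for each $\bw_{m,r,t}$, producing an overall stability bound of order $\eta KR \cdot \max_{m,r,t}\ebb[\sqrt{F_S(\bw_{m,r,t})}]/n$.

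\emph{Step 2 (bounding trajectory-wise empirical risks and step 3, optimization).} To show the empirical-risk factor above is $O(1)$, I would use the standard descent inequality for convex $L$-smooth SGD with $\eta \le 2/L$, together with convexity of $\|\cdot\|_2^2$ at the per-round averaging step, which preserves bounded second moments across communications; summing over $(m,r,t)$ yields $\frac{1}{MKR}\sum_{m,r,t}\ebb[F_S(\bw_{m,r,t})] = O(1)$ provided the accumulated noise $\eta^2\sigma^2 KR$ remains bounded, which is precisely what the stepsize cap $\eta \le M/\sqrt{n}$ together with $\eta KR = \sqrt{n}$ ensures. Jensen's inequality then bounds the square-root factor by $O(1)$, giving generalization error $G\eta KR/n = O(1/\sqrt{n})$. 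For the optimization term $\ebb[F_S(\bar{\bw}_{R,1}) - F_S(\bw^*)]$, I would invoke the standard convergence rate of local SGD for smooth convex objectives from \citep{woodworth2020local},
\[
\ebb[F_S(\bar{\bw}_{R,1})] - F_S(\bw^*) \lesssim \frac{\|\bw^*\|_2^2}{\eta KR} + \frac{\eta\sigma^2}{M} + \eta^2 L \sigma^2 (K-1).
\]
The prescribed $\eta = O(\min\{M/\sqrt{n},\, n^{-1/4}(K-1)^{-1/2}\})$ with $\eta KR = \sqrt{n}$ forces each of these three terms to $O(1/\sqrt{n})$: the first by $\eta KR = \sqrt{n}$; the second by $\eta/M \le 1/\sqrt{n}$; the third by $\eta^2(K-1) \le 1/\sqrt{n}$. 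Adding the generalization and optimization bounds completes the proof.

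The main obstacle is the uniform-in-$(m,r,t)$ control of $\ebb[\sqrt{F_S(\bw_{m,r,t})}]$ across the $MKR$ steps interspersed with $R$ synchronization events. The subtle interaction of the per-step noise $\eta^2\sigma^2$ with the total step budget is exactly what forces the cap $\eta \le M/\sqrt{n}$; dropping it would break the uniform empirical-risk bound, thereby inflating the stability estimate beyond $O(1/\sqrt{n})$ and destroying the claimed rate. Handling the averaging step carefully (using convexity of $F_S$ and the conditional independence of the $M$ local trajectories within a round) is what allows the variance contribution to be divided by $M$ and the budget argument to close.
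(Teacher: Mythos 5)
Your proposal follows essentially the same route as the paper's proof: the error decomposition \eqref{decomposition}, Lemma \ref{lem:gen-model-stab}(a) combined with the truncated-trajectory version of the stability bound \eqref{local-l1} (i.e., Eq. \eqref{gen-local-1}) for the generalization term, and the optimization bound of Lemma \ref{lem:opt-local} for the empirical suboptimality, followed by the identical step-size verification. The only divergence is that you sketch an argument for $\ebb\big[\sqrt{F_S(\bw_{m,r,t})}\big]=O(1)$, whereas the paper simply assumes this for simplicity (noting it can be removed under a Lipschitz condition), so your version is, if anything, slightly more complete on that point.
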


\begin{remark}[Linear speedup\label{rem:gen-local}]
  Theorem \ref{thm:gen-local} shows that local SGD can achieve the minimax optimal excess population risk bounds $1/\sqrt{n}$ in the sense of matching the existing lower bounds~\citep{agarwal2009information}.
  We now discuss the speedup in the computation and we have $\eta\asymp\|\bw^*\|_2M/\sqrt{nL}$. Note $\eta\leq 2/L$ requires $M\lesssim \sqrt{nL}/\|\bw^*\|_2$. Furthermore, the condition $\eta\lesssim (K-1)^{-\frac{1}{2}}\|\bw^*\|_2^{\frac{1}{2}}/(nL)^{\frac{1}{4}}$ requires $M\lesssim (nL)^{\frac{1}{4}}/\sqrt{(K-1)\|\bw^*\|_2}$. Under these conditions, local SGD achieves a linear speedup in the sense that the computation per machine is of the order of $KR\asymp n/M$.

\end{remark}

Finally, we give risk bounds of local SGD for strongly convex problems to be proved in Section~\ref{sec:proof-risk-local-sg}.

\begin{theorem}[Risk Bounds for Local SGD: Strongly Convex Case\label{thm:risk-local-sg}]
Assume for all $z\in\zcal$, the map $\bw\mapsto f(\bw;z)$ is nonnegative, $\mu$-strongly convex and $L$-smooth. Let $\{\bw_{m,r,t}\}$ be produced by the algorithm $A$ defined in \eqref{local-sgd} with $\eta_{r,t}=\frac{4}{\mu(a+(r-1)K+t)}\leq2/L$ and $a\geq2L/\mu$.  Assume for all $r\in[R],t\in[K]$, $\ebb_{i_{m,r,t}}[\|\nabla f(\bw_{m,r,t};z_{i_{m,r,t}})-\nabla F_S(\bw_{m,r,t})\|_2^2]\leq\sigma^2$.
Assume $\sup_z\|\nabla f(A(S);z)\|_2\leq G$. If $KR\gtrsim \frac{n\sigma^2}{MG\sqrt{L}}$ and $\mu KR^2\gtrsim \frac{n\sqrt{L}}{G}$, then
\[
\ebb[F(\bar{\bw}_{R,2})]-F(\bw^*)\lesssim G\sqrt{L}\log(KR)/(n\mu),
\]
where
\[
S_R=\sum_{r=1}^{R}\sum_{t=1}^{K}(a+(r-1)K+t)\;\text{and}\;\bar{\bw}_{R,2}=\frac{1}{MS_R}\sum_{m=1}^{M}\sum_{r=1}^{R}\sum_{t=1}^{K}(a+(r-1)K+t)\bw_{m,r,t}.
\]
\end{theorem}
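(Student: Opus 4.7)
The plan is to apply the error decomposition \eqref{decomposition} to $\bar{\bw}_{R,2}$ and control the generalization gap $\ebb[F(\bar{\bw}_{R,2})-F_S(\bar{\bw}_{R,2})]$ via stability, and the optimization gap $\ebb[F_S(\bar{\bw}_{R,2})-F_S(\bw^*)]$ by a direct optimization analysis. For the generalization part, since $\sup_z\|\nabla f(A(S);z)\|_2\leq G$ is assumed, Lemma \ref{lem:gen-model-stab}(a) reduces matters to bounding the $\ell_1$ on-average model stability of $\bar{\bw}_{R,2}$. By convexity of the Euclidean norm, this is upper bounded by a weighted average (with weights $a+(r-1)K+t$) of the per-iterate stabilities $\frac{1}{n}\sum_k\ebb\|\bw_{m,r,t}-\bw_{m,r,t}^{(k)}\|_2$, and the proof of \eqref{local-l1} goes through verbatim for each intermediate iterate rather than only the last one. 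To bound $\ebb[\sqrt{F_S(\bw_{m,r,t})}]$ uniformly, I would use strong convexity together with $\sup_z\|\nabla f(\cdot;z)\|_2\leq G$: these imply that each iterate stays within an expected distance $O(G/\mu)$ of the empirical minimizer $\bw_S:=\arg\min_\bw F_S(\bw)$, whence $\ebb[F_S(\bw_{m,r,t})]=O(G^2/\mu+F(\bw^*))=O(1)$ and Jensen yields a uniform bound on its square root. Since $\eta_{r,t}$ is independent of $m$, the harmonic schedule sums to
\[
\frac{1}{nM}\sum_{r=1}^{R}\sum_{m=1}^{M}\sum_{t=1}^{K}\eta_{r,t}=\frac{1}{n}\sum_{r=1}^{R}\sum_{t=1}^{K}\frac{4}{\mu(a+(r-1)K+t)}=O\Bigl(\frac{\log(KR)}{n\mu}\Bigr),
\]
which, multiplied by $G$, yields a generalization bound of the desired order.

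For the optimization part, I would follow the standard strongly convex local SGD analysis. Letting $\bar{\bw}_{r,t}:=\frac{1}{M}\sum_m\bw_{m,r,t}$, a one-step descent lemma combining strong convexity, $L$-smoothness, and the variance assumption yields a recursion
\[
\ebb\|\bar{\bw}_{r,t+1}-\bw_S\|_2^2\leq(1-\mu\eta_{r,t}/2)\ebb\|\bar{\bw}_{r,t}-\bw_S\|_2^2-2\eta_{r,t}\ebb[F_S(\bar{\bw}_{r,t})-F_S(\bw_S)]+\frac{\eta_{r,t}^2\sigma^2}{M}+L\eta_{r,t}\Delta_{r,t},
\]
where the drift $\Delta_{r,t}:=\frac{1}{M}\sum_m\ebb\|\bw_{m,r,t}-\bar{\bw}_{r,t}\|_2^2$ is controlled by the standard drift lemma as $\Delta_{r,t}=O(\eta_{r,t}^2(K-1)(\sigma^2+G^2))$, using $\sup_z\|\nabla f(\cdot;z)\|_2\leq G$ and re-synchronization every $K$ steps. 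Multiplying the recursion by $a+(r-1)K+t$, summing, telescoping, and applying convexity with the weighting defining $\bar{\bw}_{R,2}$ produces
\[
\ebb[F_S(\bar{\bw}_{R,2})-F_S(\bw_S)]=O\Bigl(\frac{\log(KR)}{\mu MKR}\Bigr)+O\Bigl(\frac{(\sigma^2+G^2)(K-1)}{\mu^2 KR^2}\Bigr),
\]
and $\ebb[F_S(\bw_S)]\leq\ebb[F_S(\bw^*)]=F(\bw^*)$ closes the gap to $F_S(\bw^*)$.

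Combining the two pieces, the generalization term contributes $O(\log(KR)/(n\mu))$; the condition $KR=\Omega(n/M)$ forces the first optimization term to be $O(\log(KR)/(n\mu))$, while $\mu KR^2=\Omega(n)$ forces the drift contribution $O((K-1)/(\mu^2 KR^2))$ to be $O(1/(n\mu))$. Summing gives the claimed excess risk of order $O(\log(KR)/(n\mu))$. The main obstacle will be the optimization step under the specific time-varying step size and weighted Polyak averaging: the $K$-step local drift must be controlled tightly enough to preserve the $1/M$ linear speedup, and it is exactly the coupling between $\eta_{r,t}$, the weights $a+(r-1)K+t$, and the drift-lemma constants that dictates the two stated conditions $KR=\Omega(n/M)$ (matching variance and generalization) and $\mu KR^2=\Omega(n)$ (suppressing drift), and in turn the implicit linear-speedup constraint $M=O(\sqrt{n\mu})$ reported in Table \ref{tab:local-sgd}. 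Making these accountings rigorous, especially the telescoping with the harmonic weights, is the technical heart of the argument.
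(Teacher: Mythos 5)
Your overall route coincides with the paper's: the same decomposition \eqref{decomposition}, the same use of Lemma \ref{lem:gen-model-stab}(a) with the $\ell_1$ on-average stability of each iterate (the paper's Eq.~\eqref{gen-local-1} extended to all $(m,r,t)$ and passed to the weighted average by convexity of the norm), the same harmonic-sum computation $\sum_{r,t}\eta_{r,t}=O(\log(KR)/\mu)$ giving the $O(\log(KR)/(n\mu))$ generalization term, and the same treatment of $\ebb[\sqrt{F_S(\bw_{m,r,t})}]=O(1)$ (the paper is no more careful than you are on this point). The only structural difference is that the paper does not re-derive the optimization error: it imports Lemma \ref{lem:opt-local-sg} from \citet{stich2018local,khaled2020tighter}, which states $\ebb_A[F_S(\bar{\bw}_{R,2})]-F_S(\bw^*)=O\big(\tfrac{1}{\mu MKR}+\tfrac{\log(RK)}{\mu^2KR^2}\big)$, and then applies the two conditions. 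Your plan to re-derive this via the one-step recursion plus a drift lemma is legitimate and more self-contained.

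However, your accounting of the drift term does not close as written. With weights $w_{r,t}=a+(r-1)K+t=\tau$, normalizer $S_R\asymp K^2R^2$, and $\Delta_{r,t}=O(\eta_{r,t}^2(K-1)(\sigma^2+G^2))$, the weighted telescoping gives
\[
\frac{1}{S_R}\sum_{r,t}w_{r,t}\,L\Delta_{r,t}=O\Big(\frac{L(K-1)(\sigma^2+G^2)}{\mu^2 S_R}\sum_{\tau}\frac{1}{\tau}\Big)=O\Big(\frac{(K-1)\log(KR)}{\mu^2K^2R^2}\Big)\leq O\Big(\frac{\log(KR)}{\mu^2KR^2}\Big),
\]
i.e.\ the denominator is $T^2=K^2R^2$ with a logarithmic numerator, matching Lemma \ref{lem:opt-local-sg}. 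Your stated bound $O\big((\sigma^2+G^2)(K-1)/(\mu^2KR^2)\big)$ is larger by a factor $K/\log(KR)$, and under the hypothesis $\mu KR^2=\Omega(n)$ it is only $O((K-1)/(n\mu))$, not $O(1/(n\mu))$ as you claim; with that intermediate bound the theorem does not follow unless $K=O(1)$. The fix is purely in the bookkeeping of the telescoping (one factor of $K$ was dropped when converting $T^2$ into $KR^2$), after which the condition $\mu KR^2=\Omega(n)$ does suppress the drift term to $O(\log(KR)/(n\mu))$ and the rest of your argument goes through.
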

If $M\lesssim \frac{\sqrt{n\mu}\sigma^2}{\sqrt{G}L^{\frac{3}{4}}\sqrt{K}}$, we can choose $R\asymp \frac{n\sigma^2}{G\sqrt{L}KM}$ to show that $\mu KR^2\asymp \frac{\mu n^2\sigma^4}{G^2LKM^2}\gtrsim \frac{n\sqrt{L}}{G}$. Therefore, all the conditions of Theorem \ref{thm:risk-local-sg} hold, and we get the rate $G\sqrt{L}\log(KR)/(n\mu)$.

\if 0
\begin{remark}[Minibatch SGD and Local SGD\label{rem:mini-local}]
We now compare the behavior between minibatch and local SGD.
According to Table \ref{tab:mini} and Table \ref{tab:local-sgd}, both minibatch and local SGD can achieve the excess risk bounds $O(1/\sqrt{n})$ for convex problems, and risk bounds $O(1/(n\mu))$ (ignoring log factors) for strongly convex problems. For convex problems, minibatch and local SGD require $O(\sqrt{n})$ iterations. For strongly convex problems, minibatch SGD requires $O(1/\mu)$ iterations, while local SGD requires $O(\sqrt{n/\mu})$ iterations. Therefore, minibatch SGD requires less computation if $\mu\gtrsim 1/n$ and vice versa. Furthermore, our analysis implies fast rates for minibatch SGD in a low noise case, and it remains a future work to derive optimistic bounds for local SGD. We summarize the comparison in the following table.

\centering

\medskip

\begin{tabular}{|c|c|c|c|}
  \hline
  Problems & Methods & Risks & \# iterations \\ \hline
  \multirow{2}{*}{convex} & minibatch SGD & $1/\sqrt{n}$ & $\sqrt{n}$ \\ \cline{2-4}
   & local SGD & $1/\sqrt{n}$ & $\sqrt{n}$ \\ \hline
  \multirow{2}{*}{$\mu$-strongly convex} & minibatch SGD & $1/(n\mu)$  & $\mu^{-1}\log n$ \\ \cline{2-4}
   & local SGD & $\log n/(n\mu)$ & $\sqrt{n/\mu}$ \\
  \hline
\end{tabular}
\end{remark}
\fi
\begin{remark}[Comparison\label{rem:federated-existing}]
  Generalization bounds for agnostic federated learning were developed from a uniform convergence approach~\citep{mohri2019agnostic}, which involve Rademacher complexities of function spaces and are algorithm-independent. As a comparison, we study generalization from an algorithmic stability approach and get complexity-independent bounds.

  A federated stability was introduced to study the generalization of federated learning algorithms~\citep{chen2023minimax} in a strongly convex setting. As a comparison, our analysis also applies to general convex problems. Furthermore, their stability analysis was conducted for abstract approximate minimizers, while our stability analysis is developed for local SGD. Finally, their bound involves an upper bound of the loss function over a compact domain, and therefore cannot imply optimistic bounds.

   There is a recent work on the generalization of federated learning algorithms on a heterogeneous setup where the $i$-th local machine has its own dataset $S_i$~\citep{sun2023understanding}. For local SGD with a constant step size $\eta$, their generalization bounds are of the order of $O(n^{-1}RK\sigma\eta(1+K\eta))$ under a Lipschitz continuity assumption and a bounded variance assumption $\ebb[\|\nabla f(\bw;z_i)-\nabla F_{S_i}(\bw)\|^2_2]\leq\sigma^2$, where $z_i$ is drawn uniformly from $S_i$. While the bounds in~\citep{sun2023understanding} also involve $\|\nabla F(\bw_t)\|$, it is dominated by $\sigma$ and therefore cannot imply fast rates in an interpolation setting. As a comparison, our bounds in Eq. \eqref{local-l2} are optimistic and decay fast if $F_S(\bw_{m,r,t})$ decays to $0$. Furthermore, the analysis in~\citep{sun2023understanding} requires a Lipschitz condition on the loss function, which is removed in our analysis. Finally, we also develop $\ell_2$ on-average stability bounds, which are more challenging and illustrate the second-order information on the stability.
\end{remark}

\section{Proofs on Minibatch SGD\label{sec:proof-minibatch-sgd}}
\subsection{Proof of Theorem \ref{thm:on-average}\label{sec:proof-on-average}}
To prove Theorem \ref{thm:on-average}, we first introduce several lemmas. The following lemma shows the self-bounding property for nonnegative and smooth functions, meaning the magnitude of gradients can be bounded by function values~\citep{srebro2010smoothness,ying2017unregularized}.
\begin{lemma}[\citep{srebro2010smoothness}\label{lem:self-bound}]
  Assume for all $z$, the function $\bw\mapsto f(\bw;z)$ is nonnegative and $L$-smooth. Then $\|\nabla f(\bw;z)\|_2^2\leq2Lf(\bw;z)$.
\end{lemma}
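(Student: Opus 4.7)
The plan is to fix $z \in \zcal$ and reduce to a deterministic statement about a single nonnegative $L$-smooth function $g(\bw) := f(\bw;z)$, so that I need only show $\|\nabla g(\bw)\|_2^2 \leq 2L g(\bw)$ for every $\bw \in \wcal$. The key tool will be the standard descent lemma for $L$-smooth functions, namely the quadratic upper bound
\[
g(\bu) \leq g(\bw) + \langle \nabla g(\bw), \bu - \bw\rangle + \frac{L}{2}\|\bu - \bw\|_2^2,
\]
which follows directly from integrating the smoothness assumption $\|\nabla g(\bw) - \nabla g(\bw')\|_2 \leq L\|\bw - \bw'\|_2$ along the segment between $\bw$ and $\bu$.

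With this inequality in hand, the main step is to choose the specific probe point $\bu = \bw - \frac{1}{L}\nabla g(\bw)$, which is the one-step gradient descent iterate from $\bw$ with the canonical step size $1/L$. Plugging in and simplifying the inner product and the quadratic term yields
\[
g\!\left(\bw - \tfrac{1}{L}\nabla g(\bw)\right) \leq g(\bw) - \frac{1}{L}\|\nabla g(\bw)\|_2^2 + \frac{1}{2L}\|\nabla g(\bw)\|_2^2 = g(\bw) - \frac{1}{2L}\|\nabla g(\bw)\|_2^2.
\]

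Finally, I would invoke the nonnegativity hypothesis: since $g \geq 0$ pointwise on $\wcal$, the left-hand side is $\geq 0$, so rearranging gives $\|\nabla g(\bw)\|_2^2 \leq 2L\, g(\bw)$, which is exactly the claimed self-bounding property upon reverting to $f(\bw;z)$. The argument is essentially a one-liner once the descent lemma is on the table, so there is no real obstacle; the only thing worth being careful about is that the quadratic upper bound requires no convexity (only smoothness), which is precisely why the self-bounding property holds for general nonnegative smooth losses and not just convex ones.
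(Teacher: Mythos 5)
Your proof is correct and is exactly the standard argument for the self-bounding property: apply the descent lemma at the probe point $\bw-\frac{1}{L}\nabla f(\bw;z)$ and use nonnegativity to lower-bound the left-hand side by zero. The paper does not reprove this lemma (it cites \citep{srebro2010smoothness}), and your derivation coincides with the argument given there, so there is nothing to add.
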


In our analysis, we will use the concept of binomial distribution. Let $\mbox{Var}(X)$ denote the variance of a random variable $X$.
\begin{definition}[Binomial distribution]
The binomial distribution with parameters $n$ and $p$ is the discrete probability distribution of the number of successes in a sequence of $n$ independent trials, with the probability of success on a single trial denoted by $p$. We use $B(n,p)$ to denote the binomial distribution with parameters $n$ and $p$.
\end{definition}
\begin{lemma}\label{lem:binomial}
  If $X\sim B(n,p)$, then
  \[
  \ebb[X]=np\quad\text{and}\quad\mbox{Var}(X)=np(1-p).
  \]
\end{lemma}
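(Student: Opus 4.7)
The plan is to prove Lemma \ref{lem:binomial} by decomposing the binomial random variable into a sum of i.i.d.\ Bernoulli trials and then applying linearity of expectation and the additivity of variance under independence. This is a standard textbook argument, so the proof should be short; there is no real obstacle here.

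First, I would write $X = \sum_{i=1}^{n} X_i$, where $X_1,\ldots,X_n$ are i.i.d.\ Bernoulli random variables with $\mathrm{Pr}\{X_i=1\}=p$ and $\mathrm{Pr}\{X_i=0\}=1-p$. This decomposition is exactly the definition of the binomial distribution $B(n,p)$ as the count of successes in $n$ independent trials, which is the characterization already recorded in the statement of the preceding definition.

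Next, for a single Bernoulli trial I would compute $\ebb[X_i] = p\cdot 1 + (1-p)\cdot 0 = p$ and $\ebb[X_i^2] = p\cdot 1 + (1-p)\cdot 0 = p$, so $\var(X_i) = \ebb[X_i^2]-(\ebb[X_i])^2 = p - p^2 = p(1-p)$. Applying linearity of expectation to the sum gives $\ebb[X]=\sum_{i=1}^n \ebb[X_i] = np$, which is the first claim.

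Finally, because $X_1,\ldots,X_n$ are independent, the variance of the sum equals the sum of the variances, so $\var(X) = \sum_{i=1}^n \var(X_i) = np(1-p)$, which yields the second claim. No step requires machinery beyond basic properties of expectation and variance, so this is essentially a direct calculation rather than a substantive argument.
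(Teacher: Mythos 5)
Your proof is correct: the decomposition of $X$ into a sum of $n$ i.i.d.\ Bernoulli$(p)$ variables, followed by linearity of expectation and additivity of variance for independent summands, is the standard argument, and the paper itself states this lemma without proof as a known fact. Nothing further is needed.
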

A key property on establishing the stability of SGD is the non-expansiveness of the gradient-update operator established in the following lemma.
\begin{lemma}[\citep{hardt2016train}\label{lem:nonexpansive}]
Assume for all $z\in\zcal$, the function $\bw\mapsto f(\bw;z)$ is convex and $L$-smooth. Then for $\eta\leq2/L$ we know
  \begin{equation}\label{nonexpansive-a}
  \|\bw-\eta \nabla f(\bw;z)-\bw'+\eta \nabla f(\bw';z)\|_2\leq \|\bw-\bw'\|_2.
  \end{equation}
Furthermore, if $\bw\mapsto f(\bw;z)$ is $\mu$-strongly convex and $\eta\leq1/L$ then
\begin{gather}
\|\bw-\eta \nabla f(\bw;z)-\bw'+\eta \nabla f(\bw';z)\|_2\leq (1-\eta\mu/2)\|\bw-\bw'\|_2,\label{nonexpansive-b}\\
\|\bw-\eta \nabla f(\bw;z)-\bw'+\eta \nabla f(\bw';z)\|_2^2\leq (1-\eta\mu)\|\bw-\bw'\|_2^2.\label{nonexpansive-c}
\end{gather}
\end{lemma}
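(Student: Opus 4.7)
The plan is to unify the three inequalities via the classical identity
\[\|\bw-\eta\nabla f(\bw;z)-\bw'+\eta\nabla f(\bw';z)\|_2^2=\|\bw-\bw'\|_2^2-2\eta\langle\bw-\bw',\nabla f(\bw;z)-\nabla f(\bw';z)\rangle+\eta^2\|\nabla f(\bw;z)-\nabla f(\bw';z)\|_2^2,\]
together with the co-coercivity inequality
\[\langle\bw-\bw',\nabla f(\bw;z)-\nabla f(\bw';z)\rangle\geq\frac{1}{L}\|\nabla f(\bw;z)-\nabla f(\bw';z)\|_2^2\]
for an $L$-smooth convex function (the Baillon--Haddad inequality). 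I would establish this as a preliminary step by applying the descent lemma to the shifted function $g_\bw(\bu):=f(\bu;z)-\langle\bu,\nabla f(\bw;z)\rangle$, which is $L$-smooth, convex, and minimized at $\bu=\bw$. Evaluating the descent bound at $\bu=\bw'-L^{-1}\nabla g_\bw(\bw')$ gives $g_\bw(\bw')-g_\bw(\bw)\geq(2L)^{-1}\|\nabla f(\bw';z)-\nabla f(\bw;z)\|_2^2$, and symmetrizing the resulting pair of first-order bounds in $\bw,\bw'$ yields co-coercivity.

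For Eq.~\eqref{nonexpansive-a}, substituting co-coercivity into the cross term of the identity leaves the coefficient $\eta(\eta-2/L)\leq 0$ multiplying $\|\nabla f(\bw;z)-\nabla f(\bw';z)\|_2^2$, which gives the conclusion. For Eq.~\eqref{nonexpansive-c}, I would instead use co-coercivity in the opposite direction to control the $\eta^2$-term: $\|\nabla f(\bw;z)-\nabla f(\bw';z)\|_2^2\leq L\langle\bw-\bw',\nabla f(\bw;z)-\nabla f(\bw';z)\rangle$. The identity is then bounded by $\|\bw-\bw'\|_2^2-(2\eta-\eta^2L)\langle\bw-\bw',\nabla f(\bw;z)-\nabla f(\bw';z)\rangle$; since $\eta\leq 1/L$ makes $2\eta-\eta^2L\geq\eta$ nonnegative, the strong-convexity bound $\langle\bw-\bw',\nabla f(\bw;z)-\nabla f(\bw';z)\rangle\geq\mu\|\bw-\bw'\|_2^2$ applies to give $(1-(2\eta-\eta^2L)\mu)\|\bw-\bw'\|_2^2\leq(1-\eta\mu)\|\bw-\bw'\|_2^2$.

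Eq.~\eqref{nonexpansive-b} is then immediate from Eq.~\eqref{nonexpansive-c} by taking square roots and invoking the elementary inequality $\sqrt{1-x}\leq 1-x/2$ for $x\in[0,1]$; the assumption $\eta\leq 1/L$ together with $\mu\leq L$ (which always holds for an $L$-smooth $\mu$-strongly convex function, since pairing strong convexity with $\|\nabla f(\bw)-\nabla f(\bw')\|_2\leq L\|\bw-\bw'\|_2$ forces $\mu\|\bw-\bw'\|_2^2\leq L\|\bw-\bw'\|_2^2$) guarantees $\eta\mu\in[0,1]$.

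The main obstacle is establishing co-coercivity; once it is in place, the three claims are parallel algebraic manipulations of the same squared-norm identity. The strongly convex case additionally hinges on the nonnegativity of $2\eta-\eta^2L$, which makes the co-coercivity substitution usable with the correct sign and is exactly why the stronger condition $\eta\leq 1/L$ (rather than just $\eta\leq 2/L$) is needed for Eq.~\eqref{nonexpansive-b} and Eq.~\eqref{nonexpansive-c}.
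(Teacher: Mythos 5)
The paper does not prove this lemma at all --- it is imported verbatim from Hardt et al.\ (2016) --- so there is no in-paper argument to compare against; your proof is correct and is essentially the standard one used in that reference: expand the squared norm, apply the Baillon--Haddad co-coercivity inequality (derived exactly as you do, via the shifted function $g_\bw$) to kill the cross term for \eqref{nonexpansive-a}, and use the reverse direction of co-coercivity plus monotonicity of the gradient of a $\mu$-strongly convex function for \eqref{nonexpansive-c}, with \eqref{nonexpansive-b} following from $\sqrt{1-x}\le 1-x/2$. All the sign checks you flag ($2\eta-\eta^2L\ge\eta\ge 0$ under $\eta\le 1/L$, and $\eta\mu\le\mu/L\le 1$) are the right ones and are handled correctly.
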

We are now ready to prove Theorem \ref{thm:on-average}. The analysis for $\ell_1$-stability bounds is standard~\citep{lei2020fine}. As a comparison, the analysis with the $\ell_2$-stability bounds requires new techniques such as the expectation-variance decomposition based on a representation of SGD with Binomial random variables.
For simplicity, we define $J_t=\{i_{t,1},\ldots,i_{t,b}\},t\in\nbb$.
\begin{proof}[Proof of Theorem \ref{thm:on-average}]
Define
\begin{equation}\label{alpha}
  \alpha_{t,m}=\big|\{j:i_{t,j}=m\}\big|,\quad\forall t\in\nbb,m\in[n],
\end{equation}
where we use $|S'|$  to denote the cardinality of a set $S'$.
That is, $\alpha_{t,m}$ is the number of indices equal to $m$ in the $t$-th iteration. Then the SGD update in Eq. \eqref{sgd} can be reformulated as
\begin{equation}\label{sgd-e}
\begin{split}
   \bw_{t+1}&=\bw_t-\frac{\eta_t}{b}\sum_{k=1}^{n}\alpha_{t,k}\nabla f(\bw_t;z_k),\\
   \bw_{t+1}^{(m)}&=\bw_t^{(m)}-\frac{\eta_t}{b}\sum_{k:k\neq m}\alpha_{t,k}\nabla f(\bw_t^{(m)};z_k)-\frac{\eta_t\alpha_{t,m}}{b}\nabla f(\bw_t^{(m)};z_m'),
\end{split}
\end{equation}
from which we know
\begin{multline}\label{stab-mini-1a}
  \|\bw_{t+1}-\bw_{t+1}^{(m)}\|_2 = \big\|\bw_{t}-\frac{\eta_t}{b}\sum_{k:k\neq m}\alpha_{t,k}\nabla f(\bw_t;z_k)-\frac{\eta_t\alpha_{t,m}}{b}\nabla f(\bw_t;z_m)\\
   - \bw^{(m)}_{t}+\frac{\eta_t}{b}\sum_{k:k\neq m}\alpha_{t,k}\nabla f(\bw^{(m)}_t;z_k)+\frac{\eta_t\alpha_{t,m}}{b}\nabla f(\bw^{(m)}_t;z_m') \big\|_2.
\end{multline}
For simplicity, introduce the notations for any $t\in[T]$ and $m\in[n]$
\begin{equation}\label{mini-11}
\Delta_{t,m}=\|\bw_{t}-\bw_{t}^{(m)}\|_2,\qquad \mathfrak{C}_{t,m}=\|\nabla f(\bw_t;z_m)-\nabla f(\bw^{(m)}_t;z_m')\|_2.
\end{equation}
Since $f$ is $L$-smooth and $\sum_{k:k\neq m}\alpha_{t,k}\leq b$, we know the function $\bw\mapsto \frac{1}{b}\sum_{k:k\neq m}\alpha_{t,k} f(\bw;z_k)$ is $L$-smooth.
By Lemma \ref{lem:nonexpansive} and the assumption $\eta_t\leq1/L$, we know
\begin{align*}
\Delta_{t+1,m} &\leq \big\|\bw_{t}-\frac{\eta_t}{b}\sum_{k:k\neq m}\alpha_{t,k}\nabla f(\bw_t;z_k)
   - \bw^{(m)}_{t}+\frac{\eta_t}{b}\sum_{k:k\neq m}\alpha_{t,k}\nabla f(\bw^{(m)}_t;z_k)\big\|_2+\frac{\eta_t\alpha_{t,m}\mathfrak{C}_{t,m}}{b}\\
&\leq  \Delta_{t,m}+\frac{\eta_t\alpha_{t,m}\mathfrak{C}_{t,m}}{b}.
\end{align*}
We can apply the above inequality recursively and derive (note $\bw_1=\bw_1^{(m)}$)
\begin{equation}\label{mini-4}
\Delta_{t+1,m}\leq \frac{1}{b}\sum_{k=1}^{t}\eta_k\alpha_{k,m}\mathfrak{C}_{k,m}.
\end{equation}
According to the definition of $\alpha_{t,k}$, we know that $\alpha_{t,k}$ is a random variable following from the binomial distribution $B(b,1/n)$ with parameters $b$ and $1/n$, from which we know
\begin{equation}\label{mini-6}
\ebb[\alpha_{t,k}]=b/n,\quad \var(\alpha_{t,k})=b(1-1/n)\cdot(1/n)\leq b/n.
\end{equation}
Furthermore, Lemma \ref{lem:self-bound} implies
\begin{equation}\label{mini-5}
\mathfrak{C}_{k,m}\leq \|\nabla f(\bw_k;z_m)\|_2+\|\nabla f(\bw_k^{(m)};z_m')\|_2
\leq \sqrt{2Lf(\bw_k;z_m)}+\sqrt{2Lf(\bw_k^{(m)};z_m')}.
\end{equation}
We can combine the above inequality, Eq. \eqref{mini-6} and Eq. \eqref{mini-4} together to derive
\begin{align}
  \ebb[\Delta_{t+1,m}] & \leq \frac{1}{b}\sum_{k=1}^{t}\eta_k\ebb\big[\alpha_{k,m}\mathfrak{C}_{k,m}\big] = \frac{1}{b}\sum_{k=1}^{t}\eta_k\ebb\big[\ebb_{J_k}[\alpha_{k,m}]\mathfrak{C}_{k,m}\big] = \frac{1}{n}\sum_{k=1}^{t}\eta_k\ebb\big[\mathfrak{C}_{k,m}\big],
\end{align}
where we have used the fact that $\mathfrak{C}_{k,m}$ is independent of $J_k$. According to the symmetry between $z_m$ and $z_m'$, we know $\ebb[f(\bw_t^{(m)};z_m')]=\ebb[f(\bw_t;z_m)]$ and therefore Eq. \eqref{mini-5} implies $\ebb\big[\mathfrak{C}_{k,m}\big]\leq 2\sqrt{2L}\ebb\big[\sqrt{f(\bw_k;z_m)}\big]$. It then follows that
\[
\ebb[\Delta_{t+1,m}]\leq \frac{2\sqrt{2L}}{n}\sum_{k=1}^{t}\eta_k\ebb\big[\sqrt{f(\bw_k;z_m)}\big].
\]
It then follows from the concavity of $x\mapsto \sqrt{x}$ that
\begin{align}
  \frac{1}{n}\sum_{m=1}^{n}\ebb[\|\bw_{t+1}-\bw_{t+1}^{(m)}\|_2] & \leq \frac{2}{n}\sum_{m=1}^{n}\sum_{k=1}^{t}\frac{\eta_k}{n}\ebb\big[\sqrt{2Lf(\bw_k;z_m)}\big]\notag \\
   & \leq \sum_{k=1}^{t}\frac{2\eta_k}{n}\sqrt{\frac{2L }{n}\sum_{m=1}^{n}\ebb[f(\bw_k;z_m)]}
   = \sum_{k=1}^{t}\frac{2\eta_k\sqrt{2L\ebb[F_S(\bw_k)]}}{n}.\label{sum-concave}
\end{align}
This establishes the stated bound \eqref{on-average-a}.

We now prove Eq. \eqref{on-average-b}. We introduce an expectation-variance decomposition in \eqref{mini-4} as follows
\[
  \Delta_{t+1,m}\leq \frac{1}{b}\sum_{k=1}^{t}\eta_k\big(\alpha_{k,m}-b/n\big)\mathfrak{C}_{k,m}+\frac{1}{n}\sum_{k=1}^{t}\eta_k\mathfrak{C}_{k,m}.
\]
We take square on both sides followed with an expectation (w.r.t. $S$ and $J_1,\ldots,J_t$) and use $(a+b)^2\leq 2(a^2+b^2)$ to show
\begin{align*}
\ebb\big[\Delta_{t+1,m}^2\big] &\leq \frac{2}{b^2}\ebb\Big[\Big(\sum_{k=1}^{t}\eta_k\big(\alpha_{k,m}-b/n\big)\mathfrak{C}_{k,m}\Big)^2\Big]+\frac{2}{n^2}\ebb\Big[\Big(\sum_{k=1}^{t}\eta_k\mathfrak{C}_{k,m}\Big)^2\Big]\\
& = \frac{2}{b^2}\ebb\Big[\sum_{k,k'=1}^{t}\eta_k\eta_{k'}\big(\alpha_{k,m}-b/n\big)\big(\alpha_{k',m}-b/n\big)\mathfrak{C}_{k,m}\mathfrak{C}_{k',m}\Big]+\frac{2}{n^2}\ebb\Big[\Big(\sum_{k=1}^{t}\eta_k\mathfrak{C}_{k,m}\Big)^2\Big].
\end{align*}
For any $k\neq k'$, it follows from $\ebb_{J_{k'}}[\alpha_{k',m}]=b/n$ (we can assume $k<k'$ without loss of generality)
\begin{align}
  \ebb\Big[\big(\alpha_{k,m}-b/n\big)\big(\alpha_{k',m}-b/n\big)\mathfrak{C}_{k,m}\mathfrak{C}_{k',m}\Big] & = \ebb\ebb_{J_{k'}}\Big[\big(\alpha_{k,m}-b/n\big)\big(\alpha_{k',m}-b/n\big)\mathfrak{C}_{k,m}\mathfrak{C}_{k',m}\Big]\notag\\
   & = \ebb\Big[\big(\alpha_{k,m}-b/n\big)\ebb_{J_{k'}}\big[\alpha_{k',m}-b/n\big]\mathfrak{C}_{k,m}\mathfrak{C}_{k',m}\Big]=0,\label{mini-14}
\end{align}
where we have used the fact that $\alpha_{k,m},\mathfrak{C}_{k,m}$ and $\mathfrak{C}_{k',m}$ are independent of $J_{k'}$. It then follows from Eq. \eqref{mini-6} that
\begin{align*}
  \ebb\big[\Delta_{t+1,m}^2\big]  & \leq \frac{2}{b^2}\ebb\Big[\sum_{k=1}^{t}\eta_k^2\big(\alpha_{k,m}-b/n\big)^2\mathfrak{C}_{k,m}^2\Big]+\frac{2}{n^2}\ebb\Big[\Big(\sum_{k=1}^{t}\eta_k\mathfrak{C}_{k,m}\Big)^2\Big]\\
  & = \frac{2}{b^2}\ebb\Big[\sum_{k=1}^{t}\eta_k^2\var(\alpha_{k,m})\mathfrak{C}_{k,m}^2\Big]+\frac{2}{n^2}\ebb\Big[\Big(\sum_{k=1}^{t}\eta_k\mathfrak{C}_{k,m}\Big)^2\Big]\\
  &\leq \frac{2}{nb}\ebb\Big[\sum_{k=1}^{t}\eta_k^2\mathfrak{C}_{k,m}^2\Big]+\frac{2}{n^2}\ebb\Big[\Big(\sum_{k=1}^{t}\eta_k\mathfrak{C}_{k,m}\Big)^2\Big]\\
  & \leq \frac{2}{nb}\ebb\Big[\sum_{k=1}^{t}\eta_k^2\mathfrak{C}_{k,m}^2\Big]+\frac{8}{n^2}\ebb\Big[\Big(\sum_{k=1}^{t}\eta_k\|\nabla f(\bw_k;z_m)\|_2\Big)^2\Big],
\end{align*}
where we have used the following inequality in the last step
\begin{align}
\ebb\Big[\Big(\sum_{k=1}^{t}\eta_k\mathfrak{C}_{k,m}\Big)^2\Big]
&\leq 2\ebb\Big[\Big(\sum_{k=1}^{t}\eta_k\|\nabla f(\bw_k;z_m)\|_2\Big)^2\Big]+
2\ebb\Big[\Big(\sum_{k=1}^{t}\eta_k\|\nabla f(\bw_k^{(m)};z_m')\|_2\Big)^2\Big]\notag\\
& = 4\ebb\Big[\Big(\sum_{k=1}^{t}\eta_k\|\nabla f(\bw_k;z_m)\|_2\Big)^2\Big].\label{mini-13}
\end{align}
Analogous to Eq. \eqref{mini-5}, we have
\begin{align}
\ebb[\mathfrak{C}_{k,m}^2] &\leq 2\ebb[\|\nabla f(\bw_k;z_m)\|_2^2]+2\ebb[\|\nabla f(\bw_k^{(m)};z_m')\|_2^2]\notag\\
&\leq 4L\ebb\big[f(\bw_k;z_m)+f(\bw_k^{(m)};z_m')\big]=8L\ebb\big[f(\bw_k;z_m)].\label{mini-10}
\end{align}
It then follows that
\[
\ebb\big[\Delta_{t+1,m}^2\big]
  \leq \frac{16L}{nb}\sum_{k=1}^{t}\eta_k^2\ebb\big[f(\bw_k;z_m)]+\frac{8}{n^2}\ebb\Big[\Big(\sum_{k=1}^{t}\eta_k\|\nabla f(\bw_k;z_m)\|_2\Big)^2\Big].
\]
We take an average over all $m\in[n]$ and get
\begin{align*}
\frac{1}{n}\sum_{m=1}^{n}\ebb\big[\Delta_{t+1,m}^2\big]
  &\leq \frac{16L}{n^2b}\sum_{k=1}^{t}\sum_{m=1}^{n}\eta_k^2\ebb\big[f(\bw_k;z_m)]+\frac{8}{n^3}\sum_{m=1}^{n}\ebb\Big[\Big(\sum_{k=1}^{t}\eta_k\|\nabla f(\bw_k;z_m)\|_2\Big)^2\Big]\\
  & = \frac{16L}{nb}\sum_{k=1}^{t}\eta_k^2\ebb\big[F_S(\bw_k)]+\frac{8}{n^3}\sum_{m=1}^{n}\ebb\Big[\Big(\sum_{k=1}^{t}\eta_k\|\nabla f(\bw_k;z_m)\|_2\Big)^2\Big].
\end{align*}
The proof is completed.
\end{proof}

\begin{remark}[Novelty in the analysis\label{rem:novelty}]
Similar stability bounds involving $F_S(\bw_k)$ were developed for the vanilla SGD~\citep{lei2020fine}. Their argument needs to distinguish two cases according to whether the algorithm chooses a particular example at each iteration. This argument does not work for the minibatch SGD since we draw $b$ examples per iteration and we can draw the particular example several times. We bypass this difficulty by introducing the \emph{expectation-variance decomposition} and self-bounding property~\citep{kuzborskij2018data,lei2020fine} into the stability analysis based on a reformulation of minibatch SGD with binomial variables.
Indeed, the paper \citep{lei2020fine} considers SGD with $\tilde{\bw}_{t+1}=\tilde{\bw}_t-\eta_t\nabla f(\tilde{\bw}_t;z_{i_t})$. Their discussion controls $\|\tilde{\bw}_{t+1}-\tilde{\bw}^{(m)}_{t+1}\|_2^2$ by considering two cases: $i_t=m$ or $i_t\neq m$. If $i_t=m$, they use
\begin{equation}\label{lei-trick}
\|\bv_1+\bv_2\|_2^2\leq (1+p)\|\bv_1\|_2^2+(1+1/p)\|\bv_2\|_2^2
\end{equation}
and get $\|\tilde{\bw}_{t+1}\!-\!\tilde{\bw}^{(m)}_{t+1}\|_2^2\!\leq\! (1\!+\!p)\|\tilde{\bw}_{t}\!-\!\tilde{\bw}^{(m)}_{t}\|_2^2\!+\!(1\!+\!1/p)\eta_t^2\mathfrak{C}_{t,m}^2$. Since $i_t=m$ happens with probability $1/n$, they derive
\begin{equation}\label{novel-1}
\ebb[\|\tilde{\bw}_{t+1}-\tilde{\bw}^{(m)}_{t+1}\|_2^2]\leq (1+{p/n})\ebb[\|\tilde{\bw}_{t}-\tilde{\bw}^{(m)}_{t}\|_2^2]+O\Big(\big(\frac{1}{n}+\frac{1}{np}\big)\eta_t^2\ebb[\mathfrak{C}_{t,m}^2]\Big).
\end{equation}
For minibatch SGD, we may select $i_m$ several times and cannot divide the discussions into two cases as in~\citep{lei2020fine}. Instead, we reformulate SGD as Eq. \eqref{sgd-e} with $\alpha_{t,k}$ being a binomial random variable. Furthermore, even with the formulation, the existing techniques \citep{lei2020fine} would imply suboptimal bounds. Indeed, applying \eqref{lei-trick} to Eq. \eqref{stab-mini-1a} would imply
\begin{align}
  \ebb[\Delta_{t+1,m}^2]&\leq (1+p)\ebb[\Delta_{t,m}^2]+\eta_t^2b^{-2}(1+1/p)\ebb[\alpha_{t,m}^2\mathfrak{C}_{t,m}^2]\notag\\
  &\leq  (1+p)\ebb[\Delta_{t,m}^2]+2\eta_t^2b^{-1}n^{-1}(1+1/p)\ebb[\mathfrak{C}_{t,m}^2],\label{lei-1}
\end{align}
where we have used $\ebb_{J_t}[\alpha_{t,m}^2]\leq 2b/n$.
The key difference is we have a factor of $1+p/n$ for SGD and $1+p$ for minibatch SGD. To see how Eq. \eqref{lei-1} implies sub-optimal bounds, we continue the deduction as follows. We apply Eq. \eqref{lei-1} recursively and get
\begin{align*}
  \ebb[\Delta_{t+1,m}^2]
  & \leq 2b^{-1}n^{-1}(1+1/p)\sum_{k=1}^{t}(1+p)^{t+1-k}\eta_k^2\ebb[\mathfrak{C}_{k,m}^2] \leq 2b^{-1}n^{-1}(1+1/p)(1+p)^{t}\sum_{k=1}^{t}\eta_k^2\ebb[\mathfrak{C}_{k,m}^2] \\
   & \leq 2b^{-1}n^{-1}(1+t)e\sum_{k=1}^{t}\eta_k^2\ebb[\mathfrak{C}_{k,m}^2] \leq16Lb^{-1}n^{-1}(1+t)e\sum_{k=1}^{t}\eta_k^2\ebb[f(\bw_k;z_m)],
\end{align*}
where we choose $p=1/t$ and use $(1+1/t)^t\leq e$ in the last second inequality, and use Eq. \eqref{mini-10} in the last inequality. An average over all $m\in[n]$ implies
\begin{equation}\label{lei-2}
\frac{1}{n}\sum_{m=1}^{n}\ebb[\Delta_{t+1,m}^2]\leq \frac{16L(1+t)e\eta^2}{nb}\sum_{k=1}^{t}\ebb[F_S(\bw_k)],
\end{equation}
which is much worse than Eq. \eqref{on-average-b}. Indeed, if $\ebb[F_S(\bw_k)]\lesssim 1$, then Eq. \eqref{lei-2} implies $\frac{1}{n}\sum_{m=1}^{n}\ebb[\Delta_{t+1,m}^2]\lesssim t^2\eta^2/(nb)$. As a comparison, Eq. \eqref{on-average-b} implies $\frac{1}{n}\sum_{m=1}^{n}\ebb[\Delta_{t+1,m}^2]\lesssim t\eta^2/(nb)+t^2\eta^2/n^2$. Note $t\eta^2/(nb)$ outperforms $t^2\eta^2/(nb)$ by a factor of $t$, and $t^2\eta^2/n^2$ outperforms $t^2\eta^2/(nb)$ by a factor of $n/b$.

We significantly improve the analysis in~\citep{lei2020fine} by introducing new techniques in the analysis with $\ell_2$ on-average model stability. Our idea is to use an expectation-variance decomposition
$ \Delta_{t+1,m}\leq \frac{1}{b}\sum_{k=1}^{t}\eta_k\big(\alpha_{k,m}-b/n\big)\mathfrak{C}_{k,m}+\frac{1}{n}\sum_{k=1}^{t}\eta_k\mathfrak{C}_{k,m}$. The key observation is that
$\ebb\big[\big(\alpha_{k,m}-b/n\big)\mathfrak{C}_{k,m}\big(\alpha_{k',m}-b/n\big)\mathfrak{C}_{k',m}\big]=0$ if $k\neq k'$. This removes the cross-over terms when taking a square followed by an expectation, and implies
\[
\ebb\big[\Delta_{t+1,m}^2\big]  \leq \frac{2}{b^2}\ebb\Big[\sum_{k=1}^{t}\eta_k^2\big(\alpha_{k,m}-b/n\big)^2\mathfrak{C}_{k,m}^2\Big]+\frac{2}{n^2}\ebb\Big[\Big(\sum_{k=1}^{t}\eta_k\mathfrak{C}_{k,m}\Big)^2\Big].
\]
It is also possible to derive high-order stability bounds under a Lipschitzness assumption. We omit the discussions for simplicity.

\end{remark}

\begin{remark}[Lower bounds\label{rem:lower}]
  Recently, lower bounds on the uniform stability have also received increasing attention. Let $\unif$ be the uniform stability of SGD with $t$ iterations and a constant step size $\eta$.
  For nonsmooth and Lipschitz loss functions, it was shown $\unif\gtrsim\min\{1,t/n\}\eta\sqrt{t}+\eta t/n$ for convex problems~\citep{bassily2020stability}, $\unif\gtrsim1/\mu\sqrt{n}$ for $\mu$-strongly convex problems ($\mu\geq1/\sqrt{n}$) and $\unif\gtrsim\eta^2n$ for nonconvex problems ($\eta\leq1/\sqrt{n}$)~\citep{koren2022benign}.
  For smooth loss functions, it was shown $\unif\gtrsim\eta t/n$ for convex and Lipschitz problems,  and $\unif\gtrsim1/(\mu n)$ for $\mu$-strongly convex problems~\citep{zhang2022stability}. It is clear that our on-average stability bounds in Eq. \eqref{on-average-a} match the existing lower bounds on uniform stability in the convex and smooth case. 
\end{remark}


Finally, we give some direct corollaries of Theorem \ref{thm:on-average}.
By the Cauchy-Schwarz's inequality $(\sum_{k=1}^{t}a_k)^2\leq t\sum_{k=1}^{t}a_k^2$, Eq. \eqref{on-average-b} further implies
\begin{align}
  \frac{1}{n}\sum_{m=1}^{n}\ebb\big[\|\bw_{t+1}\!-\!\bw_{t+1}^{(m)}\|_2^2\big]
  &\!\leq\! \frac{16L}{nb}\sum_{k=1}^{t}\eta_k^2\ebb\big[F_S(\bw_k)]\!+\!\frac{8t}{n^3}\sum_{m=1}^{n}\sum_{k=1}^{t}\eta_k^2\ebb\big[\|\nabla f(\bw_k;z_m)\|_2^2\big]\notag\\
  & \leq \Big(\frac{16L}{nb}\!+\!\frac{16Lt}{n^2}\Big)\sum_{k=1}^{t}\eta_k^2\ebb\big[F_S(\bw_k)],\label{on-average-c}
\end{align}
where we use $\|\nabla f(\bw_k;z_m)\|_2^2\leq 2Lf(\bw_k;z_m)$ due to the self-bounding property.
If $b=1$, our analysis implies stability bounds of order $L\big(\frac{1}{n}+\frac{t}{n^2}\big)\sum_{k=1}^{t}\eta_k^2\ebb\big[F_S(\bw_k)]$, which match the stability bounds for SGD~\citep{lei2020fine}.
Furthermore, under a stronger self-bounding property $\|\nabla f(\bw_k;z_m)\|_2\leq f(\bw_k;z_m)$ (e.g., logistic loss)~\citep{schliserman2022stability}, Eq. \eqref{on-average-b} implies
\begin{align}
\frac{1}{n}\sum_{m=1}^{n}\ebb\big[\|\bw_{t+1}-\bw_{t+1}^{(m)}\|_2^2\big]
  & \leq \frac{16L}{nb}\sum_{k=1}^{t}\eta_k^2\ebb\big[F_S(\bw_k)]+\frac{8}{n^3}\sum_{m=1}^{n}\ebb\Big[\Big(\sum_{k=1}^{t}\eta_kf(\bw_k;z_m)\Big)^2\Big]. \label{on-average-d}
\end{align}

\subsection{Proof of Theorem \ref{thm:risk-mini}\label{sec:proof-risk-mini}}
In this section, we present the proof of Theorem \ref{thm:risk-mini} on excess population risk bounds of minibatch SGD. We first introduce the following optimization error bounds.
\begin{lemma}[Optimization Errors of Minibatch SGD: Convex Case\label{lem:opt-cotter}]
  Assume for all $z\in\zcal$, the map $\bw\mapsto f(\bw;z)$ is nonnegative, convex and $L$-smooth.
  Let $\{\bw_t\}$ be produced by Eq. \eqref{sgd} with $\eta\leq1/L$. Then the following inequality holds for all $\bw$
  \begin{equation}\label{opt-cotter}
  \frac{1}{R}\sum_{t=1}^{R}\ebb_A\big[F_S(\bw_t)\big]-F_S(\bw)\leq \frac{2\eta L}{bR}\sum_{t=1}^{R}\ebb_A[F_S(\bw_t)]+\frac{\|\bw\|_2^2}{2\eta R}+\frac{F_S(\bw_1)}{R}.
  \end{equation}
\end{lemma}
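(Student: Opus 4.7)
The plan is to run the standard distance-to-reference descent argument for SGD, combined with the self-bounding consequence of nonnegativity and smoothness (Lemma~\ref{lem:self-bound}) so that the minibatch-gradient variance is controlled by $F_S(\bw_t)$ itself, producing the optimistic factor $\tfrac{2\eta L}{b}F_S(\bw_t)$ on the right-hand side.

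First, I would expand $\|\bw_{t+1}-\bw\|_2^2$ using the minibatch update $\bw_{t+1}=\bw_t-\eta g_t$ with $g_t=\tfrac{1}{b}\sum_{j=1}^{b}\nabla f(\bw_t;z_{i_{t,j}})$, and take the conditional expectation over the batch $J_t$. Since the $b$ indices are iid uniform on $[n]$, we have $\ebb[g_t\mid\bw_t]=\nabla F_S(\bw_t)$ and the standard mean-variance decomposition together with Lemma~\ref{lem:self-bound} yields
\[
\ebb\big[\|g_t\|_2^2\mid\bw_t\big]=\|\nabla F_S(\bw_t)\|_2^2+\tfrac{1}{b}\var\bigl(\nabla f(\bw_t;z_{i_{t,1}})\mid\bw_t\bigr)\le\|\nabla F_S(\bw_t)\|_2^2+\tfrac{2L}{b}F_S(\bw_t).
\]
Convexity of $f(\cdot;z)$ gives $\langle\nabla F_S(\bw_t),\bw_t-\bw\rangle\ge F_S(\bw_t)-F_S(\bw)$, so
\[
\ebb\big[\|\bw_{t+1}-\bw\|_2^2\mid\bw_t\big]\le\|\bw_t-\bw\|_2^2-2\eta(F_S(\bw_t)-F_S(\bw))+\eta^2\|\nabla F_S(\bw_t)\|_2^2+\tfrac{2L\eta^2}{b}F_S(\bw_t).
\]

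The main obstacle I anticipate is disposing of the residual $\eta^2\|\nabla F_S(\bw_t)\|_2^2$: naively bounding it by $2L\eta^2 F_S(\bw_t)$ via self-bounding would yield the worse coefficient $L\eta(1+\tfrac{1}{b})$ instead of $\tfrac{2L\eta}{b}$, losing the promised $1/b$ speedup. My fix is to invoke the smoothness descent lemma for $F_S$ along the same SGD step, and again substitute the variance bound, to obtain
\[
\ebb[F_S(\bw_{t+1})\mid\bw_t]\le F_S(\bw_t)-\eta(1-L\eta/2)\|\nabla F_S(\bw_t)\|_2^2+\tfrac{L^2\eta^2}{b}F_S(\bw_t).
\]
Under $\eta\le 1/L$ we have $1-L\eta/2\ge 1/2$, and the inequality rearranges to $\eta^2\|\nabla F_S(\bw_t)\|_2^2\le 2\eta\bigl(F_S(\bw_t)-\ebb[F_S(\bw_{t+1})\mid\bw_t]\bigr)+\tfrac{2L^2\eta^3}{b}F_S(\bw_t)$. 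Substituting into the previous display cancels the $-2\eta F_S(\bw_t)$ exactly, leaving the clean recursion
\[
\ebb\big[\|\bw_{t+1}-\bw\|_2^2\mid\bw_t\big]+2\eta\,\ebb[F_S(\bw_{t+1})\mid\bw_t]\le\|\bw_t-\bw\|_2^2+2\eta F_S(\bw)+\tfrac{2L\eta^2(1+L\eta)}{b}F_S(\bw_t).
\]

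Finally, I would take full expectations and sum $t=1,\dots,R$. The distance terms telescope, the non-negative $\ebb[\|\bw_{R+1}-\bw\|_2^2]$ is dropped, and $1+L\eta\le 2$ turns the noise coefficient into $4L\eta^2/b$. A reindex $\sum_{t=1}^{R}\ebb[F_S(\bw_{t+1})]\ge\sum_{t=1}^{R}\ebb[F_S(\bw_t)]-F_S(\bw_1)$ shifts the left-hand sum to the desired range, and dividing by $2\eta R$ (with the standard initialization $\bw_1=0$ so that $\|\bw_1-\bw\|_2^2=\|\bw\|_2^2$) recovers \eqref{opt-cotter} exactly.
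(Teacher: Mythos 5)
Your proposal is correct and follows essentially the same route as the paper's proof: the distance-to-$\bw$ recursion with convexity, the mean--variance decomposition of the minibatch gradient combined with the self-bounding property to get the $\tfrac{2L}{b}F_S(\bw_t)$ variance term, and the smoothness descent lemma on $F_S$ to absorb the residual $\eta^2\|\nabla F_S(\bw_t)\|_2^2$. The only difference is bookkeeping --- you substitute the descent-lemma bound per iteration before summing, whereas the paper sums the two recursions separately and then combines them --- and both yield \eqref{opt-cotter} exactly.
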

\begin{proof}
Denote $B_t=\{z_{i_{t,1}},\ldots,z_{i_{t,b}}\}$ and $f(\bw;B_t)=\frac{1}{b}\sum_{j=1}^bf(\bw;z_{i_{t,j}})$. Then the update of minibatch SGD can be written as
\[
\bw_{t+1}=\bw_t-\eta\nabla f(\bw_t;B_t).
\]
Since $\ebb_{B_t}[f(\bw_t;B_t)]=F_S(\bw_t)$ we know
\begin{align}
  \ebb_A[\|\nabla f(\bw_t;B_t)\|_2^2] & = \ebb_A[\|\nabla f(\bw_t;B_t)-\nabla F_S(\bw_t)\|_2^2]+\ebb_A[\|\nabla F_S(\bw_t)\|_2^2]\notag\\
  & =\frac{1}{b}\ebb_A[\|\nabla f(\bw_t;z_{i_{t,1}})-\nabla F_S(\bw_t)\|_2^2] +\ebb_A[\|\nabla F_S(\bw_t)\|_2^2]\notag\\
  & = \frac{\ebb_A[\|\nabla f(\bw_t;z_{i_{t,1}})\|_2^2]}{b} - \frac{\ebb_A[\|\nabla F_S(\bw_t)\|_2^2]}{b} + \ebb_A[\|\nabla F_S(\bw_t)\|_2^2]\notag\\
  & \leq \frac{2L\ebb_A[f(\bw_t;z_{i_{t,1}})]}{b} - \frac{\ebb_A[\|\nabla F_S(\bw_t)\|_2^2]}{b} + \ebb_A[\|\nabla F_S(\bw_t)\|_2^2]\notag\\
  & \leq \frac{2L\ebb_A[F_S(\bw_t)]}{b}+ \ebb_A[\|\nabla F_S(\bw_t)\|_2^2],\label{minibatch-opt-0}
\end{align}
where we have used the self-bounding property of smooth functions. Furthermore, by the convexity of $f$ we know
\begin{align*}
  \|\bw_{t+1}-\bw\|_2^2 & = \|\bw_t-\bw\|_2^2 + \eta^2\|\nabla f(\bw_t;B_t)\|_2^2 + 2\eta\langle\bw-\bw_t,\nabla f(\bw_t;B_t)\rangle \\
   & \leq \|\bw_t-\bw\|_2^2 + \eta^2\|\nabla f(\bw_t;B_t)\|_2^2 + 2\eta(f(\bw;B_t)-f(\bw_t;B_t)).
\end{align*}
It then follows that
\[
\ebb_A[\|\bw_{t+1}-\bw\|_2^2]\leq
\ebb_A[\|\bw_t-\bw\|_2^2] + \frac{2L\eta^2\ebb_A[F_S(\bw_t)]}{b} + \eta^2\ebb_A[\|\nabla F_S(\bw_t)\|_2^2] + 2\eta\ebb_A[F_S(\bw)-F_S(\bw_t)].
\]
Taking a summation of the above inequality gives ($\bw_1=0$)
\begin{equation}\label{minibatch-opt-2}
2\eta\sum_{t=1}^{R}\ebb_A[F_S(\bw_t)-F_S(\bw)]\leq \|\bw\|_2^2 + \frac{2L\eta^2}{b}\sum_{t=1}^{R}\ebb_A[F_S(\bw_t)]+\eta^2\sum_{t=1}^{R}\ebb_A[\|\nabla F_S(\bw_t)\|_2^2].
\end{equation}
By the $L$-smoothness of $F_S$ and Eq. \eqref{minibatch-opt-0} we have
\begin{align*}
  \ebb_A[F_S(\bw_{t+1})] & \leq \ebb_A[F_S(\bw_t)] + \ebb_A[\langle\nabla F_S(\bw_t),\bw_{t+1}-\bw_t\rangle] + \frac{L\ebb_A[\|\bw_{t+1}-\bw_t\|_2^2]}{2} \\
  & = \ebb_A[F_S(\bw_t)] -\eta \ebb_A[\langle\nabla F_S(\bw_t),\nabla f(\bw_t;B_t)\rangle] + \frac{L\eta^2\ebb_A[\|\nabla f(\bw_t;B_t)\|_2^2]}{2}\\
  & \leq \ebb_A[F_S(\bw_t)] -\eta\ebb_A[\|\nabla F_S(\bw_t)\|_2^2] + \frac{L^2\eta^2\ebb_A[F_S(\bw_t)]}{b}+ \frac{L\eta^2\ebb_A[\|\nabla F_S(\bw_t)\|_2^2]}{2}.
\end{align*}
It then follows from $\eta\leq 1/L$ that
\[
\frac{\eta}{2}\sum_{t=1}^{R}\ebb_A[\|\nabla F_S(\bw_t)\|_2^2]\leq \ebb_A[F_S(\bw_1)]+\frac{L^2\eta^2\sum_{t=1}^{R}\ebb_A[F_S(\bw_t)]}{b}.
\]
We  combine the above inequality and Eq. \eqref{minibatch-opt-2} to derive (note $\eta\leq1/L$)
\begin{align*}
2\eta\sum_{t=1}^{R}\ebb_A[F_S(\bw_t)-F_S(\bw)] &\leq \|\bw\|_2^2 + \frac{2L\eta^2}{b}\sum_{t=1}^{R}\ebb_A[F_S(\bw_t)]+2\eta F_S(\bw_1)+\frac{2L^2\eta^3\sum_{t=1}^{R}\ebb_A[F_S(\bw_t)]}{b}\\
& \leq \|\bw\|_2^2 + \frac{4L\eta^2}{b}\sum_{t=1}^{R}\ebb_A[F_S(\bw_t)]+2\eta F_S(\bw_1).
\end{align*}
The proof is completed.
\end{proof}
\begin{proof}[Proof of Theorem \ref{thm:risk-mini}]
  We choose $\bw=\bw^*$ and take expectations {w.r.t. $S$} over both sides of Eq. \eqref{opt-cotter} to get
  \begin{equation}\label{mini-1}
    \frac{1}{R}\sum_{t=1}^{R}\ebb\big[F_S(\bw_t)\big]-F(\bw^*)\leq \frac{2\eta L}{bR}\sum_{t=1}^{R}\ebb[F_S(\bw_t)]+\frac{\|\bw^*\|_2^2}{2\eta R}+\frac{F(\bw_1)}{R}.
  \end{equation}
  We consider two cases. If $\frac{1}{R}\sum_{t=1}^{R}\ebb[F_S(\bw_t)]\leq F(\bw^*)$, then this means that the optimization error in Eq. \eqref{decomposition} is non-positive (this is the easier case since one does not need to consider the optimization error)
  \[
  \ebb[F_S(\bar{\bw}_R)]\leq \frac{1}{R}\sum_{t=1}^{R}\ebb[F_S(\bw_t)]\leq F(\bw^*)=\ebb[F_S(\bw^*)].
  \]
  We now consider the case $\frac{1}{R}\sum_{t=1}^{R}\ebb[F_S(\bw_t)]\geq F(\bw^*)$. Then it follows from Eq. \eqref{mini-1} that
  \begin{align*}
    \frac{1}{R}\sum_{t=1}^{R}\ebb\big[F_S(\bw_t)\big]-F(\bw^*) &\leq \frac{2\eta L}{bR}\sum_{t=1}^{R}\ebb[F_S(\bw_t)-F(\bw^*)]+\frac{2\eta L}{bR}\sum_{t=1}^{R}F(\bw^*)+\frac{\|\bw^*\|_2^2}{2\eta R}+\frac{F(\bw_1)}{R}\\
    & \leq \frac{1}{2R}\sum_{t=1}^{R}\ebb[F_S(\bw_t)-F(\bw^*)]+\frac{2\eta L}{bR}\sum_{t=1}^{R}F(\bw^*)+\frac{\|\bw^*\|_2^2}{2\eta R}+\frac{F(\bw_1)}{R},
  \end{align*}
  where we have used $\eta\leq b/(4L)$ due to $b\geq 2$.
  It then follows that
  \begin{equation}\label{mini-2}
  \frac{1}{R}\sum_{t=1}^{R}\ebb\big[F_S(\bw_t)\big]-F(\bw^*) \leq \frac{4\eta LF(\bw^*)}{b}+\frac{\|\bw^*\|_2^2}{\eta R}+\frac{2F(\bw_1)}{R}.
  \end{equation}
  By Lemma \ref{lem:gen-model-stab} (Part (b)) and Eq. \eqref{on-average-c}, we know
  \[
    \ebb[F(\bar{\bw}_{R})-F_S(\bar{\bw}_{R})] \leq \frac{L}{\gamma}\ebb[F_S(\bar{\bw}_{R})]+
    (L+\gamma)\Big(\frac{8L}{nb}+\frac{8LR}{n^2}\Big)\sum_{t=1}^{R}\eta_t^2\ebb\big[F_S(\bw_t)].
  \]
  Eq. \eqref{mini-2} implies that
  \[
  \frac{1}{R}\sum_{t=1}^{R}\ebb\big[F_S(\bw_t)\big]\lesssim F(\bw^*)+\|\bw^*\|_2^2/(\eta R).
  \]
  We combine the above two inequalities together and derive (note $F_S(\bar{\bw}_R)\leq\frac{1}{R}\sum_{t=1}^{R}F_S(\bw_t)$)
  \[
  \ebb[F(\bar{\bw}_{R})-F_S(\bar{\bw}_{R})]\lesssim L\Big(\frac{F(\bw^*)+\|\bw^*\|_2^2/(\eta R)}{\gamma}\Big)+
    L(L+\gamma)\eta^2\Big(\frac{1}{nb}+\frac{R}{n^2}\Big)\big(RF(\bw^*)+\|\bw^*\|_2^2/\eta\big).
  \]
  We plug the above generalization error bounds, the optimization error bounds in Eq. \eqref{mini-2} back into Eq. \eqref{decomposition}, and derive
  \begin{multline*}
    \ebb[F(\bar{\bw}_{R})]-F(\bw^*)\lesssim \frac{\eta LF(\bw^*)}{b}+\frac{\|\bw^*\|_2^2}{\eta R}+\frac{LF(\bw^*)+L\|\bw^*\|_2^2/(\eta R)}{\gamma}\\
    +
    L(L+\gamma)\eta^2\Big(\frac{1}{nb}+\frac{R}{n^2}\Big)\big(RF(\bw^*)+\|\bw^*\|_2^2/\eta\big).
  \end{multline*}
  The proof is completed.
\end{proof}

\begin{proof}[Proof of Corollary \ref{cor:mini}]
  We first consider the case that $F(\bw^*)\geq 4Lb^2\|\bw^*\|_2^2/n$. In this case, we have $\frac{\|\bw^*\|_2b}{\sqrt{LnF(\bw^*)}}\leq \frac{1}{2L}$ and therefore $\eta=\frac{\|\bw^*\|_2b}{\sqrt{LnF(\bw^*)}}$. We have
  \begin{equation}\label{mini-20}
    \eta R\asymp \frac{\|\bw^*\|_2b}{\sqrt{LnF(\bw^*)}}\frac{n}{b}=\frac{\sqrt{n}\|\bw^*\|_2}{\sqrt{LF(\bw^*)}}
  \end{equation}
  and therefore
  \[
  F(\bw^*)\eta R\asymp \frac{\sqrt{nF(\bw^*)}\|\bw^*\|_2}{\sqrt{L}}\geq \frac{2\sqrt{L}b\|\bw^*\|_2^2}{\sqrt{L}}=2b\|\bw^*\|_2^2.
  \]
  Theorem \ref{thm:risk-mini} together with $R\asymp n/b$ then implies
  \[
    \ebb[F(\bar{\bw}_{R})]\!-\!F(\bw^*)\lesssim \frac{\eta LF(\bw^*)}{b}\!+\!\frac{\|\bw^*\|_2^2}{\eta R}
    +
    LF(\bw^*)\bigg(\frac{1}{\gamma}+(L\!+\!\gamma)\eta^2\frac{R^2}{n^2}\bigg).
  \]
  Since $\eta=\frac{\|\bw^*\|_2b}{\sqrt{nLF(\bw^*)}}$, $R\asymp\frac{n}{b}$ and $\gamma=\sqrt{LnF(\bw^*)}/\|\bw^*\|_2$, we know
  \begin{gather*}
    \frac{\eta L F(\bw^*)}{b}\asymp\frac{Lb\|\bw^*\|}{\sqrt{LnF(\bw^*)}}\frac{F(\bw^*)}{b}\asymp\frac{\|\bw^*\|_2(LF(\bw^*))^{\frac{1}{2}}}{\sqrt{n}}, \\
     \frac{LF(\bw^*)}{\gamma}\asymp\frac{LF(\bw^*)\|\bw^*\|_2}{\sqrt{LnF(\bw^*)}}\asymp\frac{(LF(\bw^*))^{\frac{1}{2}}\|\bw^*\|_2}{\sqrt{n}}
  \end{gather*}
  and
  \begin{align*}
     & \frac{L(L+\gamma)\eta^2R^2F(\bw^*)}{n^2}\asymp \frac{L(L+(LnF(\bw^*))^{\frac{1}{2}}/\|\bw^*\|_2)\|\bw^*\|_2^2b^2R^2F(\bw^*)}{n^2LnF(\bw^*)}\\
     &\asymp \frac{(L+(LnF(\bw^*))^{\frac{1}{2}}/\|\bw^*\|_2)\|\bw^*\|_2^2}{n}\lesssim \frac{(LF(\bw^*))^{\frac{1}{2}}\|\bw^*\|_2}{\sqrt{n}}.
  \end{align*}
  We  plug the above inequalities back into Eq. \eqref{mini-20} and get $\ebb[F(\bar{\bw}_{R})]-F(\bw^*)\lesssim \frac{(LF(\bw^*))^{\frac{1}{2}}\|\bw^*\|_2}{\sqrt{n}}$.

  We now consider the case $F(\bw^*)\leq 4Lb^2\|\bw^*\|_2^2/n$. In this case, we have $\eta=1/(2L)$, $R\asymp n$ and choose $\gamma\asymp L$. Theorem \ref{thm:risk-mini} implies
  \begin{align*}
  \ebb[F(\bar{\bw}_{R})]-F(\bw^*)\lesssim \frac{F(\bw^*)}{b}+\frac{L\|\bw^*\|_2^2}{n}
    +
    L\Big(F(\bw^*)+\frac{L\|\bw^*\|_2^2}{n}\Big)\big(L^{-1}+LL^{-2}\big)\lesssim F(\bw^*)+\frac{L\|\bw^*\|_2^2}{n}.
  \end{align*}
  The proof is completed.
\end{proof}

\subsection{Proof of Theorem \ref{thm:stab-mini-sg} and Theorem \ref{thm:risk-mini-sg}\label{sec:proof-mini-sg}}
In this section, we prove stability and risk bounds for minibatch SGD applied to strongly convex problems.

\begin{proof}[Proof of Theorem \ref{thm:stab-mini-sg}]
For simplicity, we assume $f(\bw;z)=g(\bw;z)+r(\bw)$ with $r:\wcal\mapsto\rbb^+$ being $\mu$-strongly convex and $g:\wcal\times\zcal\mapsto\rbb^+$ being convex (this is a typical form for strongly convex problems in machine learning).
According to Eq. \eqref{sgd-e} and the sub-additivity of $\|\cdot\|_2$, we know
\begin{multline*}
  \|\bw_{t+1}-\bw_{t+1}^{(m)}\|_2 \leq \frac{\eta_t\alpha_{t,m}}{b}\|\nabla g(\bw_t;z_m)-\nabla g(\bw^{(m)}_t;z_m')\|_2+\\
  \big\|\bw_{t}-\frac{\eta_t}{b}\sum_{k:k\neq m}\alpha_{t,k}\nabla g(\bw_t;z_k)-\eta_t \nabla r(\bw_t)
   - \bw^{(m)}_{t}+\frac{\eta_t}{b}\sum_{k:k\neq m}\alpha_{t,k}\nabla g(\bw^{(m)}_t;z_k)+\eta_t\nabla r(\bw_t^{(m)})\big\|_2.
\end{multline*}
Since $f$ is $L$-smooth and $\sum_{k:k\neq m}\alpha_{t,k}\leq b$, we know the function $\bw\mapsto \frac{1}{b}\sum_{k:k\neq m}\alpha_{t,k} f(\bw;z_k)+r(\bw)$ is $L$-smooth and $\mu$-strongly convex.
By Lemma \ref{lem:nonexpansive} and the assumption $\eta_t\leq1/L$, we know
  \begin{equation}
  \|\bw_{t+1}-\bw_{t+1}^{(m)}\|_2 \leq (1-\mu\eta_t/2)\|\bw_{t}-\bw_{t}^{(m)}\|_2+\frac{\eta_t\alpha_{t,m}}{b}\|\nabla g(\bw_t;z_m)-\nabla g(\bw^{(m)}_t;z_m')\|_2\label{mini-12}.
  \end{equation}
  Taking an expectation over both sides yields (note $\bw_t,\bw_t^{(m)}$ are independent of $J_t$)
  \[
  \ebb[\|\bw_{t+1}-\bw_{t+1}^{(m)}\|_2]\leq (1-\mu\eta_t/2)\ebb[\|\bw_{t}-\bw_{t}^{(m)}\|_2]+\frac{2\eta_t\sqrt{2L\ebb[f(\bw_t;z_m)]}}{n},
  \]
  where we have used Eq. \eqref{mini-5} and Eq. \eqref{mini-6}.
  It then follows that
  \[
  \ebb[\|\bw_{t+1}-\bw_{t+1}^{(m)}\|_2]\leq\frac{2\sqrt{2L}}{n}\sum_{k=1}^{t}\eta_k\sqrt{\ebb[f(\bw_k;z_m)]}\prod_{k'=k+1}^{t}(1-\mu\eta_{k'}/2).
  \]
  We  take an average over $m$ to derive
  \begin{align*}
    \frac{1}{n}\sum_{m=1}^{n}\ebb[\|\bw_{t+1}-\bw_{t+1}^{(m)}\|_2] & \leq \frac{2\sqrt{2L}}{n^2}\sum_{k=1}^{t}\sum_{m=1}^{n}\eta_k\sqrt{\ebb[f(\bw_k;z_m)]}\prod_{k'=k+1}^{t}(1-\mu\eta_{k'}/2) \\
     & \leq \frac{2\sqrt{2L}}{n}\sum_{k=1}^{t}\eta_k\Big(\frac{1}{n}\sum_{m=1}^{n}\ebb[f(\bw_k;z_m)]\Big)^{\frac{1}{2}}\prod_{k'=k+1}^{t}(1-\mu\eta_{k'}/2)\\
     & = \frac{2\sqrt{2L}}{n}\sum_{k=1}^{t}\eta_k\sqrt{\ebb[F_S(\bw_k)]}\prod_{k'=k+1}^{t}(1-\mu\eta_{k'}/2),
  \end{align*}
  where we have used the concavity of $x\mapsto\sqrt{x}$. This proves Eq. \eqref{stab-mini-sg-a}.

  We now turn to Eq. \eqref{stab-mini-sg-b}.
  Let $\bw_S=\arg\min_{\bw\in\wcal}F_S(\bw)$. The following inequality was established in~\citep{woodworth2020minibatch}
  \[
  \ebb[\|\bw_{k+1}-\bw_S\|_2^2] \leq (1-\mu\eta_k)\ebb[\|\bw_k-\bw_S\|_2^2]-\eta_k\ebb[F_S(\bw_k)-F_S(\bw_S)]+\frac{2\eta_k^2\sigma_S^2}{b},
  \]
  where $\sigma_S^2=\ebb_{i_t}[\|\nabla f(\bw_S;z_{i_t})-\nabla F_S(\bw_S)\|_2^2]$. We  multiply both sides by $\prod_{k'=k+1}^{t}(1-\mu\eta_{k'}/2)$ and derive
  \begin{multline*}
  \prod_{k'=k+1}^{t}(1-\mu\eta_{k'}/2)\ebb[\|\bw_{k+1}-\bw_S\|_2^2] \leq \prod_{k'=k}^{t}(1-\mu\eta_{k'}/2)\ebb[\|\bw_k-\bw_S\|_2^2]-\\ \eta_k\prod_{k'=k+1}^{t}(1-\mu\eta_{k'}/2)\ebb[F_S(\bw_k)-F_S(\bw_S)]+\frac{2\sigma_S^2\eta_k^2\prod_{k'=k+1}^{t}(1-\mu\eta_{k'}/2)}{b}.
  \end{multline*}
  We take a summation of the above inequality and derive
  \begin{multline}\label{mini-8}
  \sum_{k=1}^{t}\eta_k\prod_{k'=k+1}^{t}(1-\mu\eta_{k'}/2)\ebb[F_S(\bw_k)-F_S(\bw_S)] \leq \ebb[\|\bw_1-\bw_S\|_2^2]\prod_{k'=1}^{t}(1-\mu\eta_{k'}/2) + \\ \frac{2\sigma_S^2}{b}\sum_{k=1}^{t}\eta_k^2\prod_{k'=k+1}^{t}(1-\mu\eta_{k'}/2).
  \end{multline}
  There holds
  \begin{align}
    \frac{\mu}{2}\sum_{k=1}^{t}\eta_k\prod_{k'=k+1}^{t}(1-\mu\eta_{k'}/2) & = \sum_{k=1}^{t}\big(1-(1-\mu\eta_k/2)\big)\prod_{k'=k+1}^{t}(1-\mu\eta_{k'}/2) \notag\\
    & = \sum_{k=1}^{t}\Big(\prod_{k'=k+1}^{t}(1-\mu\eta_{k'}/2)-\prod_{k'=k}^{t}(1-\mu\eta_{k'}/2)\Big) \notag\\
    & = 1 - \prod_{k'=1}^{t}(1-\mu\eta_{k'}/2)\leq 1.\label{mini-7}
  \end{align}
  By the strong convexity of $F_S$ and $\nabla F_S(\bw_S)=0$, we know
  \[
  F_S(\bw_1)-F_S(\bw_S)=F_S(\bw_1)-F_S(\bw_S)-\langle\bw_1-\bw_S,\nabla F_S(\bw_S)\rangle\geq\frac{\mu}{2}\|\bw_1-\bw_S\|_2^2
  \]
  and therefore
  \[
  \ebb[\|\bw_1-\bw_S\|_2^2]\leq \frac{2}{\mu}\ebb[F_S(\bw_1)-F_S(\bw_S)]\lesssim 1/\mu.
  \]
  We can plug the above inequality and Eq. \eqref{mini-7} back into Eq. \eqref{mini-8} to derive (note $\eta_k\leq 1/L$ and $\eta_k\mu\leq \mu/L\leq1$)
  \begin{multline*}
  \sum_{k=1}^{t}\eta_k\prod_{k'=k+1}^{t}(1-\mu\eta_{k'}/2)\ebb[F_S(\bw_k)-F_S(\bw_S)]
  \leq \ebb[\|\bw_1-\bw_S\|_2^2]+\frac{2\sigma_S^2}{bL}\sum_{k=1}^{t}\eta_k\prod_{k'=k+1}^{t}(1-\mu\eta_{k'}/2)\lesssim 1/\mu.
  \end{multline*}
  We combine the above inequality and Eq. \eqref{mini-7} together and derive
  \begin{align}
  \sum_{k=1}^{t}\eta_k\prod_{k'=k+1}^{t}(1-\mu\eta_{k'}/2)\ebb[F_S(\bw_k)]&=\ebb[F_S(\bw_S)]\sum_{k=1}^{t}\eta_k\prod_{k'=k+1}^{t}(1-\mu\eta_{k'}/2)\notag\\
  &+\sum_{k=1}^{t}\eta_k\prod_{k'=k+1}^{t}(1-\mu\eta_{k'}/2)\ebb[F_S(\bw_k)-F_S(\bw_S)]\lesssim 1/\mu.\label{stab-mini-4}
  \end{align}
  This together with Eq. \eqref{mini-7} implies that
  \[
    \sum_{k=1}^{t}\eta_k\sqrt{\ebb[F_S(\bw_k)]}\prod_{k'=k+1}^{t}(1-\mu\eta_{k'}/2)  \leq \frac{1}{2}\sum_{k=1}^{t}\eta_k\prod_{k'=k+1}^{t}(1-\mu\eta_{k'}/2)\big(1+\ebb[F_S(\bw_k)]\big) \\
      \lesssim 1/\mu.
  \]
  We  plug the above inequality back into Eq. \eqref{stab-mini-sg-a} to derive Eq. \eqref{stab-mini-sg-b}.

  Finally, we prove Eq. \eqref{stab-mini-sg-c}. Recall the notations in Eq. \eqref{mini-11}. Then, Eq. \eqref{mini-12} implies $\Delta_{t+1,m}\leq (1-\mu\eta_t/2)\Delta_{t,m}+\eta_t\alpha_{t,m}\mathfrak{C}_{t,m}/b$. We apply this inequality recursively, and get
  \begin{align*}
    \Delta_{t+1,m} & \leq \frac{1}{b}\sum_{k=1}^{t}\eta_k\alpha_{k,m}\mathfrak{C}_{k,m}\prod_{k'=k+1}^{t}(1-\mu\eta_{k'}/2) \\
     & = \frac{1}{b}\sum_{k=1}^{t}\eta_k\big(\alpha_{k,m}-b/n\big)\mathfrak{C}_{k,m}\prod_{k'=k+1}^{t}(1-\mu\eta_{k'}/2)+\frac{1}{n}\sum_{k=1}^{t}\eta_k\mathfrak{C}_{k,m}\prod_{k'=k+1}^{t}(1-\mu\eta_{k'}/2).
  \end{align*}
  We take a square and an expectation over both sides, and get
  \begin{align*}
  & \ebb[\Delta_{t+1,m}^2] \\
  &\leq \frac{2}{b^2}\ebb\Big[\Big(\sum_{k=1}^{t}\eta_k\big(\alpha_{k,m}-b/n\big)\mathfrak{C}_{k,m}\prod_{k'=k+1}^{t}(1-\mu\eta_{k'}/2)\Big)^2\Big]
  +\frac{2}{n^2}\ebb\Big[\Big(\sum_{k=1}^{t}\eta_k\mathfrak{C}_{k,m}\prod_{k'=k+1}^{t}(1-\mu\eta_{k'}/2)\Big)^2\Big]\\
  & =\frac{2}{b^2}\sum_{k=1}^{t}\eta_k^2\ebb\Big[\big(\alpha_{k,m}-b/n\big)^2\mathfrak{C}^2_{k,m}\prod_{k'=k+1}^{t}(1-\mu\eta_{k'}/2)^2\Big]
  +\frac{2}{n^2}\ebb\Big[\Big(\sum_{k=1}^{t}\eta_k\mathfrak{C}_{k,m}\prod_{k'=k+1}^{t}(1-\mu\eta_{k'}/2)\Big)^2\Big]\\
  & \leq \frac{2}{nb}\sum_{k=1}^{t}\eta_k^2\ebb\big[\mathfrak{C}^2_{k,m}\big]\prod_{k'=k+1}^{t}(1-\mu\eta_{k'}/2)^2
  +\frac{2}{n^2}\ebb\Big[\Big(\sum_{k=1}^{t}\eta_k\mathfrak{C}_{k,m}\prod_{k'=k+1}^{t}(1-\mu\eta_{k'}/2)\Big)^2\Big],
  \end{align*}
  where we have used Eq. \eqref{mini-14} and $\ebb_{J_k}[\big(\alpha_{k,m}-b/n\big)^2]=\var(\alpha_{k,m})\leq b/n$.
  Furthermore, by the Schwarz's inequality and Eq. \eqref{mini-7}, we know
  \begin{align*}
    \Big(\sum_{k=1}^{t}\eta_k\mathfrak{C}_{k,m}\prod_{k'=k+1}^{t}(1-\mu\eta_{k'}/2)\Big)^2 & \leq \Big(\sum_{k=1}^{t}\eta_k\mathfrak{C}_{k,m}^2\prod_{k'=k+1}^{t}(1-\mu\eta_{k'}/2)\Big)\Big(\sum_{k=1}^{t}\eta_k\prod_{k'=k+1}^{t}(1-\mu\eta_{k'}/2)\Big) \\
     & \leq \frac{2}{\mu}\sum_{k=1}^{t}\eta_k\mathfrak{C}_{k,m}^2\prod_{k'=k+1}^{t}(1-\mu\eta_{k'}/2).
  \end{align*}
  We can combine the above two inequalities together and derive
  \[
  \ebb[\Delta_{t+1,m}^2]\leq \sum_{k=1}^{t}\Big(\frac{2\eta_k^2}{nb}+\frac{4\eta_k}{n^2\mu}\Big)\ebb\big[\mathfrak{C}^2_{k,m}\big]\prod_{k'=k+1}^{t}(1-\mu\eta_{k'}/2).
  \]
  Analogous to Eq. \eqref{mini-5}, we know $\ebb[\mathfrak{C}_{k,m}^2]\leq 8L\ebb[f(\bw_k;z_m)]$ and therefore
  \[
  \ebb[\Delta_{t+1,m}^2]\leq \sum_{k=1}^{t}\Big(\frac{16L\eta_k^2}{nb}+\frac{32L\eta_k}{n^2\mu}\Big)\ebb[f(\bw_k;z_m)]\prod_{k'=k+1}^{t}(1-\mu\eta_{k'}/2).
  \]
  We can take an average over $m\in[n]$ to get the stated bound. The proof is completed.

\end{proof}

\begin{proof}[Proof of Theorem \ref{thm:risk-mini-sg}]
  Since $F_S(\bw_S)\leq F_S(\bw^*)$, an upper bound on $F_S(A(S))-F_S(\bw_S)$ is also an upper bound on $F_S(A(S))-F_S(\bw^*)$. Then, according to \citep{woodworth2020minibatch}, there exists an average $\hat{\bw}_R$ of $\{\bw_t\}$ such that
  \begin{equation}\label{mini-9} 
    \ebb_A[F_S(\hat{\bw}_R)]-F_S(\bw^*)\lesssim \frac{L}{\mu}\exp\big(-\mu R/L\big)+\frac{\sigma_*^2}{\mu bR}.
  \end{equation}
  Theorem \ref{thm:stab-mini-sg} shows that an algorithm outputting any iterate produced by Eq. \eqref{sgd} would be $\ell_1$ on-average model $O(1/(n\mu))$-stable. It then follows that the output model $\hat{\bw}_R$ would also be $\ell_1$ on-average model $O(1/(n\mu))$-stable. Lemma \ref{lem:gen-model-stab} (Part (a)) then implies
  \[
    \ebb[F(\hat{\bw}_{R})-F_S(\hat{\bw}_{R})]\lesssim G/(n\mu).
  \]
  We  plug the above two inequalities back to Eq. \eqref{decomposition} and derive
    \[
    \ebb[F(\hat{\bw}_{R})]-F(\bw^*)\lesssim \frac{L}{\mu}\exp\big(-\mu R/L\big)+\frac{\sigma_*^2}{\mu bR}+\frac{G}{n\mu}.
  \]
  If we choose $R> \frac{L}{\mu}\log\frac{nL}{G}$ and $b>\frac{n\sigma_*^2}{GR}$, we get
  \[
  \frac{L}{\mu}\exp\big(-\mu R/L\big)\lesssim G/n\mu\quad\text{and}\quad \frac{\sigma_*^2}{\mu bR}\lesssim G/n\mu.
  \]
  The proof is completed.
\end{proof}

\subsection{Proof of Theorem \ref{thm:stab-mini-nonconvex} and  Theorem \ref{thm:risk-mini-pl}\label{sec:stab-nonconvex}}
In this section, we present the proof of minibatch SGD for nonconvex problems. We first prove Theorem~\ref{thm:stab-mini-nonconvex}.
\begin{proof}[Proof of Theorem \ref{thm:stab-mini-nonconvex}]
According to Eq. \eqref{stab-mini-1a} and the smoothness of $f$, we know
$
\Delta_{t+1,m} \leq (1+\eta_tL)\Delta_{t,m} +\frac{\eta_t\alpha_{t,m}\mathfrak{C}_{t,m}}{b}.
$
We apply the above inequality recursively, and derive
\[
\ebb[\Delta_{t+1,m}]\leq \sum_{k=1}^{t}\frac{\eta_k\ebb[\alpha_{k,m}\mathfrak{C}_{k,m}]}{b}\prod_{k'=k+1}^{t}(1+\eta_{k'}L)
= \sum_{k=1}^{t}\frac{\eta_k\ebb[\mathfrak{C}_{k,m}]}{n}\prod_{k'=k+1}^{t}(1+\eta_{k'}L).
\]
Analogous to Eq. \eqref{sum-concave}, we then get
\begin{align*}
  \frac{1}{n}\sum_{m=1}^{n}\ebb[\|\bw_{t+1}-\bw_{t+1}^{(m)}\|_2] & \leq \frac{2\sqrt{2L}}{n}\sum_{m=1}^{n}\sum_{k=1}^{t}\frac{\eta_k\ebb\big[\sqrt{f(\bw_k;z_m)}\big]}{n}\prod_{k'=k+1}^{t}(1+\eta_{k'}L) \\
   & \leq \frac{2\sqrt{2L}}{n}\sum_{k=1}^{t}\eta_k\ebb\big[\sqrt{F_S(\bw_k)}\big]\prod_{k'=k+1}^{t}(1+\eta_{k'}L).
\end{align*}
The proof is completed.
\end{proof}

We now prove Theorem \ref{thm:risk-mini-pl} on risk bounds of minibatch SGD under the PL condition. We first introduce the following lemma relating generalization to optimization for problems under the PL condition~\citep{lei2021sharper}.

\begin{lemma}[Generalization Bounds under PL Condition\label{lem:gen-pl}]
Assume for all $z\in\zcal$, the map $\bw\mapsto f(\bw;z)$ is nonnegative and $L$-smooth. Let $A$ be an algorithm. If Assumption \ref{ass:pl} holds and $L\leq n\mu/4$, then 
\begin{equation}\label{gen-pl-a}
\ebb\big[F(A(S)) - F_S(A(S))]\leq
\frac{16L\ebb[F_S(A(S))]}{n\mu} + \frac{L\ebb\big[F_S(A(S))-F_S(\bw_S)\big]}{2\mu}.
\end{equation}
\end{lemma}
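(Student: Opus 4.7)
The plan is to invoke Lemma~\ref{lem:gen-model-stab}(b) applied to $A(S)$ and then control the resulting $\ell_2$ on-average model stability term using the PL condition together with quadratic growth and ERM stability. Concretely, I would first pick $\gamma = n\mu/16$ in Lemma~\ref{lem:gen-model-stab}(b), which makes the $\frac{L}{\gamma}\ebb[F_S(A(S))]$ term match the target $\frac{16L}{n\mu}\ebb[F_S(A(S))]$ exactly. Under the assumption $L \le n\mu/4$, the coefficient $(L+\gamma)/(2n)$ multiplying the stability term is then of order $\mu$, which is the key to making the second term in the bound come out with the right scaling.

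Next, I would bound $\ebb[\|A(S) - A(S^{(i)})\|_2^2]$ by inserting the empirical minimizers $\bw_S$ and $\bw_{S^{(i)}}$ via the triangle inequality,
\[
\|A(S) - A(S^{(i)})\|_2^2 \le 3\|A(S) - \bw_S\|_2^2 + 3\|\bw_S - \bw_{S^{(i)}}\|_2^2 + 3\|A(S^{(i)}) - \bw_{S^{(i)}}\|_2^2.
\]
The PL condition combined with $L$-smoothness implies the quadratic growth inequality $\|\bw - \bw_S\|_2^2 \le \frac{2}{\mu}(F_S(\bw) - F_S(\bw_S))$, which I would apply both to $\bw = A(S)$ and (after invoking exchangeability of $S$ and $S^{(i)}$) to $\bw = A(S^{(i)})$. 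These two pieces contribute the optimization-gap factor $F_S(A(S)) - F_S(\bw_S)$ to the bound and, once multiplied by $(L+\gamma)/(2n) = O(\mu)$, produce the $\frac{L}{2\mu}\ebb[F_S(A(S)) - F_S(\bw_S)]$ term.

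For the middle ERM-stability term, I would use the first-order optimality $\nabla F_{S^{(i)}}(\bw_{S^{(i)}}) = 0$ to write $\nabla F_S(\bw_{S^{(i)}}) = \frac{1}{n}\bigl(\nabla f(\bw_{S^{(i)}};z_i) - \nabla f(\bw_{S^{(i)}};z_i')\bigr)$. Plugging this into PL at $\bw_{S^{(i)}}$ (giving $F_S(\bw_{S^{(i)}}) - F_S(\bw_S) \le \frac{1}{2\mu}\|\nabla F_S(\bw_{S^{(i)}})\|_2^2$), then applying the self-bounding property (Lemma~\ref{lem:self-bound}) and quadratic growth, yields
\[
\|\bw_S - \bw_{S^{(i)}}\|_2^2 \le \frac{4L}{n^2\mu^2}\bigl(f(\bw_{S^{(i)}};z_i) + f(\bw_{S^{(i)}};z_i')\bigr).
\]
After averaging over $i$ and using the exchangeability between $z_i$ and $z_i'$ and between $\bw_S$ and $\bw_{S^{(i)}}$, this piece contributes at most a quantity of order $\frac{L}{n\mu^2}\ebb[F_S(A(S))]$; multiplying by $(L+\gamma)/(2n) = O(\mu)$ and bounding $F_S(\bw_S) \le F_S(A(S))$ folds it into the $\frac{16L}{n\mu}\ebb[F_S(A(S))]$ term.

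The main obstacle is the constant-tracking: ensuring that the three contributions (the $L/\gamma$ term from Lemma~\ref{lem:gen-model-stab}(b), the two quadratic-growth terms giving the optimization gap, and the ERM-stability term) aggregate to exactly the stated coefficients $\frac{16L}{n\mu}$ and $\frac{L}{2\mu}$. The hypothesis $L \le n\mu/4$ is precisely what allows the smoothness constant inside $L+\gamma$ to be absorbed into $\gamma$, so that the prefactor $(L+\gamma)/(2n)$ behaves like $\mu$ rather than like $L/n$. The second delicate point is the symmetry argument that reduces all expectations involving $(\bw_{S^{(i)}}, z_i, z_i')$ to expectations involving $\bw_S$ and the empirical risk $F_S(\bw_S)$, which is then dominated by $F_S(A(S))$.
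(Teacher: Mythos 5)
First, a framing point: the paper does not prove Lemma~\ref{lem:gen-pl} at all --- it is imported verbatim from \citep{lei2021sharper} --- so there is no in-paper proof to compare against; your proposal has to stand on its own. It does not, because the step that is supposed to produce the coefficient $\frac{L}{2\mu}$ fails. The terms $3\|A(S)-\bw_S\|_2^2$ and $3\|A(S^{(i)})-\bw_{S^{(i)}}\|_2^2$ in your triangle-inequality decomposition are (up to exchangeability) independent of $i$, so in $\frac{L+\gamma}{2n}\sum_{i=1}^n\ebb\big[\|A(S)-A(S^{(i)})\|_2^2\big]$ they are counted $n$ times. With $\gamma=n\mu/16$ you have $\frac{L+\gamma}{2n}\geq\frac{\mu}{32}$, and quadratic growth gives $\|A(S)-\bw_S\|_2^2\leq\frac{2}{\mu}\big(F_S(A(S))-F_S(\bw_S)\big)$, so the bound you obtain on these two pieces is $\frac{6(L+\gamma)}{\mu}\,\ebb\big[F_S(A(S))-F_S(\bw_S)\big]=\big(\frac{6L}{\mu}+\frac{3n}{8}\big)\ebb\big[F_S(A(S))-F_S(\bw_S)\big]$ --- a coefficient of order $n$, not $\frac{L}{2\mu}$. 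Under $L\leq n\mu/4$ the target coefficient $\frac{L}{2\mu}\leq n/8$ can be $O(1)$ while yours is $\Theta(n)$; the gap is a factor $\Theta(n\mu/L)$, so this is not a constant-tracking issue. It also matters downstream: Theorem~\ref{thm:risk-mini-pl} combines the lemma with an optimization error $O(1/(\mu^2R^2)+1/(b\mu^2R))$, and a coefficient of order $n$ in place of $L/\mu$ would force $R$ to grow by roughly a factor of $n$ to recover the $O(1/(n\mu))$ rate. The structural obstruction is that Lemma~\ref{lem:gen-model-stab}(b) uses a single $\gamma$ for both terms: pinning $L/\gamma=16L/(n\mu)$ forces $(L+\gamma)/(2n)=\Theta(\mu)$, after which any $i$-independent quantity inside the sum picks up a factor $n\cdot\Theta(\mu)\cdot\frac{2}{\mu}=\Theta(n)$. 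The coefficient $\frac{L}{2\mu}$ in the stated bound signals a \emph{single} application of smoothness plus quadratic growth to $\|A(S)-\bw_S\|_2$, via a risk-level decomposition such as $F(A(S))-F_S(A(S))\leq\big(F(A(S))-F(\bw_S)\big)+\big(F(\bw_S)-F_S(\bw_S)\big)$, with only the genuinely $O(1/n^2)$-sized ERM perturbation $\|\bw_S-\bw_{S^{(i)}}\|_2^2$ entering any stability-type sum.

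A secondary, fixable issue: after your symmetrization, $\ebb\big[f(\bw_{S^{(i)}};z_i)\big]=\ebb\big[f(\bw_S;z_i')\big]=\ebb\big[F(\bw_S)\big]$ is a \emph{population} risk of the ERM, not $\ebb[F_S(\bw_S)]$. Folding it into the $\frac{16L}{n\mu}\ebb[F_S(A(S))]$ term requires a generalization bound for the ERM itself --- essentially a special case of the statement being proved --- so an extra absorption argument (using $L\leq n\mu/4$) is needed and is not automatic.
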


The following lemma gives the optimization error bounds for minibatch SGD under the PL condition.
\begin{lemma}[Optimization Errors for Minibatch SGD: PL condition\label{lem:opt-sgd-pl}]
  Assume for all $z\in\zcal$, the map $\bw\mapsto f(\bw;z)$ is nonnegative and $L$-smooth. Let Assumption \ref{ass:pl} hold and $\ebb_{i_k}\big[\|\nabla f(\bw_t;z_{i_k})-\nabla F_S(\bw_t)\|_2^2\big]\leq\sigma^2$, where $i_k$ follows from the uniform distribution over $[n]$.
  Let $\{\bw_t\}$ be produced by the algorithm $A$ defined in \eqref{sgd} with $\eta_t=2/(\mu(t+a))$ and $a\geq4L/\mu$. Then
  \begin{equation}\label{opt-sgd-pl}
    \ebb_A[F_S(\bw_{R+1})] - F_S(\bw_S) \lesssim \frac{L^2}{\mu^2 R^2} + \frac{L\sigma^2}{b\mu^2R}.
  \end{equation}
\end{lemma}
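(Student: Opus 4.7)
The plan is to derive a one-step descent recursion for the expected suboptimality $u_t := \ebb[F_S(\bw_t) - F_S(\bw_S)]$ and then telescope it with a carefully chosen weight. Writing $g_t = \frac{1}{b}\sum_{j=1}^b \nabla f(\bw_t;z_{i_{t,j}})$, the $L$-smoothness of $F_S$ gives
\begin{equation*}
F_S(\bw_{t+1}) \leq F_S(\bw_t) - \eta_t \langle \nabla F_S(\bw_t), g_t\rangle + \frac{L\eta_t^2}{2}\|g_t\|_2^2.
\end{equation*}
Taking conditional expectation over $J_t=\{i_{t,1},\ldots,i_{t,b}\}$ and using that the $b$ indices are i.i.d.\ uniform on $[n]$, so $\ebb_{J_t}[g_t]=\nabla F_S(\bw_t)$ and $\ebb_{J_t}[\|g_t-\nabla F_S(\bw_t)\|_2^2]\leq \sigma^2/b$, I obtain the standard descent inequality
\begin{equation*}
\ebb_{J_t}[F_S(\bw_{t+1})] \leq F_S(\bw_t) - \eta_t\bigl(1-L\eta_t/2\bigr)\|\nabla F_S(\bw_t)\|_2^2 + \frac{L\eta_t^2\sigma^2}{2b}.
\end{equation*}
Since $a\geq 4L/\mu$ forces $\eta_t = 2/(\mu(t+a))\leq 1/(2L)$, the factor $1-L\eta_t/2\geq 1/2$. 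Applying the PL condition $\|\nabla F_S(\bw_t)\|_2^2\geq 2\mu(F_S(\bw_t)-F_S(\bw_S))$ in expectation and then taking full expectation yields the clean recursion
\begin{equation*}
u_{t+1} \leq (1-\mu\eta_t)u_t + \frac{L\eta_t^2\sigma^2}{2b}.
\end{equation*}

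With the specific choice $\eta_t = 2/(\mu(t+a))$ this becomes
\begin{equation*}
u_{t+1} \leq \frac{t+a-2}{t+a}\,u_t + \frac{2L\sigma^2}{b\mu^2(t+a)^2}.
\end{equation*}
I would multiply both sides by $(t+a)(t+a-1)$ and set $A_t := (t+a-1)(t+a-2)u_t$. The left-hand side becomes $A_{t+1}$ while the first term on the right becomes $A_t$, and the remaining term is at most $2L\sigma^2/(b\mu^2)$, so we get the telescoping inequality $A_{t+1}\leq A_t + 2L\sigma^2/(b\mu^2)$. Summing from $t=1$ to $R$ and using $A_1 = a(a-1)u_1 = O(1/\mu^2)$ (since $u_1 = F_S(\bw_1)-F_S(\bw_S)$ is an absolute constant and $a=\Theta(L/\mu)$) and $(R+a)(R+a-1)=\Omega(R^2)$ gives
\begin{equation*}
u_{R+1} \leq \frac{a(a-1)u_1 + 2L\sigma^2 R/(b\mu^2)}{(R+a)(R+a-1)} = O\Big(\frac{1}{\mu^2 R^2}+\frac{1}{b\mu^2 R}\Big),
\end{equation*}
which is the claimed bound.

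The main obstacle is arranging the recursion so that it telescopes cleanly with the prescribed step size; this is precisely why the constant $a\geq 4L/\mu$ is built into $\eta_t$, simultaneously keeping $\eta_t\leq 1/L$ (so that the descent lemma yields an effective contraction of rate $\mu\eta_t$ after invoking PL) and producing the integer shift that makes $(t+a)(t+a-1)u_{t+1}$ and $(t+a-1)(t+a-2)u_t$ line up. A secondary technical point is justifying $\ebb_{J_t}[\|g_t-\nabla F_S(\bw_t)\|_2^2]\leq \sigma^2/b$ under the given per-sample variance bound, which uses only the i.i.d.\ sampling with replacement; and one should also note that the PL hypothesis \eqref{pl} is stated under $\ebb_S$, so the application above is really at the level of the full expectation over both $S$ and the algorithmic randomness, which is all that is needed.
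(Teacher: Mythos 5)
Your proposal is correct and follows essentially the same route as the paper's proof: the smoothness descent inequality with the variance reduced by a factor of $b$, the PL condition turning $-\frac{\eta_t}{2}\|\nabla F_S(\bw_t)\|_2^2$ into a contraction $(1-\mu\eta_t)u_t$, and the same weighting by $(t+a)(t+a-1)$ to telescope the recursion. The only (harmless) difference is that you use the exact bias--variance identity $\ebb_{J_t}\|g_t\|_2^2=\|\nabla F_S(\bw_t)\|_2^2+\ebb_{J_t}\|g_t-\nabla F_S(\bw_t)\|_2^2$ where the paper uses $\|g_t\|_2^2\leq2\|g_t-\nabla F_S(\bw_t)\|_2^2+2\|\nabla F_S(\bw_t)\|_2^2$, yielding a slightly better constant.
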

\begin{proof}
Note the assumption $a\geq4L/\mu$ implies $\eta_t\leq1/(2L)$.
For simplicity, we denote $g_t=\frac{1}{b}\sum_{j=1}^{b}\nabla f(\bw_t;z_{i_{t,j}})$. Then Eq. \eqref{sgd} becomes $\bw_{t+1}=\bw_t-\eta_tg_t$. By the $L$-smoothness of $F_S$, we have
\begin{align*}
  F_S(\bw_{t+1}) & \leq F_S(\bw_t)+\langle\bw_{t+1}-\bw_t,\nabla F_S(\bw_t)\rangle + \frac{L}{2}\|\bw_{t+1}-\bw_t\|_2^2 \\
   & = F_S(\bw_t) - \eta_t\langle g_t,\nabla F_S(\bw_t)\rangle + \frac{L\eta_t^2}{2}\|g_t\|_2^2\\
   & \leq F_S(\bw_t) - \eta_t\langle g_t,\nabla F_S(\bw_t)\rangle + L\eta_t^2\big(\|g_t-\nabla F_S(\bw_t)\|_2^2+\|\nabla F_S(\bw_t)\|_2^2\big).
\end{align*}
We take a conditional expectation over both sides and derive
\begin{align*}
\ebb_{J_t}[F_S(\bw_{t+1})] &\leq F_S(\bw_t) - \eta_t\|\nabla F_S(\bw_t)\|_2^2 + L\eta_t^2\big(\ebb_{J_t}[\|g_t-\nabla F_S(\bw_t)\|_2^2]+\|\nabla F_S(\bw_t)\|_2^2\big)\\
& = F_S(\bw_t) - \eta_t\|\nabla F_S(\bw_t)\|_2^2 / 2 + L\eta_t^2\ebb_{J_t}[\|g_t-\nabla F_S(\bw_t)\|_2^2],
\end{align*}
where we have used the assumption $\eta_t\leq 1/(2L)$. Note the variance reduces by a factor of $b$ with a minibatch, i.e.,
\[
\ebb_{J_t}[\|g_t-\nabla F_S(\bw_t)\|_2^2]=\frac{1}{b}\ebb_{i_k}[\|\nabla f(\bw_t;z_{i_k})-\nabla F_S(\bw_t)\|_2^2]\leq \frac{\sigma^2}{b}.
\]
We  combine the above two inequalities together and take an expectation w.r.t. the remaining random variables to get
\[
\ebb_A[F_S(\bw_{t+1})] \leq  \ebb_A[F_S(\bw_t)] - \eta_t\ebb_A[\|\nabla F_S(\bw_t)\|_2^2] / 2 + \frac{L\eta_t^2\sigma^2}{b}.
\]
We  subtract both sides by $F_S(\bw_S)$ and use the PL condition to derive
\begin{align*}
  \ebb_A[F_S(\bw_{t+1})] - F_S(\bw_S) &\leq  \ebb_A[F_S(\bw_t)] - F_S(\bw_S) - \mu\eta_t\big(\ebb_A[F_S(\bw_t)] - F_S(\bw_S)\big) + \frac{L\eta_t^2\sigma^2}{b}\\
  &  = (1 - \mu\eta_t)\big(\ebb_A[F_S(\bw_t)] - F_S(\bw_S)\big) + \frac{L\eta_t^2\sigma^2}{b}.
\end{align*}
Since $\eta_t=\frac{2}{\mu(a+t)}$, we know
\[
  \ebb_A[F_S(\bw_{t+1})] - F_S(\bw_S)  \leq \Big(1 - \frac{2}{a+t}\Big)\big(\ebb_A[F_S(\bw_t)] - F_S(\bw_S)\big) + \frac{4L\sigma^2}{b\mu^2(a+t)^2}.
\]
We multiply both sides by $(t+a)(t+a-1)$ and get
\[
(t+a)(t+a-1)\big(\ebb_A[F_S(\bw_{t+1})] - F_S(\bw_S)\big)  \leq (t+a-1)(t+a-2)\big(\ebb_A[F_S(\bw_t)] - F_S(\bw_S)\big) + \frac{4L\sigma^2}{b\mu^2}.
\]
We  take a summation of the above inequality from $t=1$ to $R$ and get
\[
(R+a)(R+a-1)\big(\ebb_A[F_S(\bw_{R+1})] - F_S(\bw_S)\big)  \leq a(a-1)\big(\ebb_A[F_S(\bw_1)] - F_S(\bw_S)\big) + \frac{4LR\sigma^2}{b\mu^2}.
\]
The stated bound then follows directly since $a\geq4L/\mu$. The proof is completed.
\end{proof}

Now we are ready to prove Theorem \ref{thm:risk-mini-pl} for nonconvex problems.
\begin{proof}[Proof of Theorem \ref{thm:risk-mini-pl}]
  According to Lemma \ref{lem:gen-pl} and Lemma \ref{lem:opt-sgd-pl}, we know
  \begin{align*}
  \ebb[F(\bw_R)-F_S(\bw_S)]&\lesssim \frac{L}{n\mu}+\frac{L\ebb[F_S(\bw_R)-F_S(\bw_S)]}{\mu}\lesssim \frac{L}{n\mu}+\frac{L^3}{\mu^3R^2}+\frac{L^2\sigma^2}{b\mu^3R}.
  \end{align*}
  Since $F_S(\bw_S)\leq F_S(\bw^*)$, we then derive
  \[
  \ebb[F(\bw_R)]-F(\bw^*)=\ebb[F(\bw_R)-F_S(\bw^*)]\leq\ebb[F(\bw_R)-F_S(\bw_S)]\lesssim \frac{L}{n\mu}+\frac{L^3}{\mu^3R^2}+\frac{L^2\sigma^2}{b\mu^3R}.
  \]
  Since $R\geq \max\big\{L\sqrt{n}/\mu,nL\sigma^2/(b\mu^2)\big\}$, we know
  \[
  \mu^2R^2\geq n\quad\text{and}\quad b\mu^2R\geq n.
  \]
  It then follows that $\ebb[F(\bw_R)]-F(\bw^*)\lesssim L/(n\mu)$. The proof is completed.
\end{proof}

\section{Proofs on Local SGD\label{sec:proof-local-sgd}}

\subsection{Proof of Theorem \ref{thm:stab-local}\label{sec:proof-stab-local}}
In this section, we prove stability bounds on local SGD.
\begin{proof}[Proof of Theorem \ref{thm:stab-local}]
Let $\{\bw_{m,r,t+1}^{(k)}\}$ be the sequence produced by Eq. \eqref{local-sgd} on $S^{(k)}$.
We introduce the notations
\[
\Delta_{m,r,t,k}=\big\|\bw_{m,r,t}-\bw_{m,r,t}^{(k)}\big\|_2,\quad \mathfrak{C}_{m,r,t,k}=\|\nabla f(\bw_{m,r,t};z_k)-\nabla f(\bw_{m,r,t}^{(k)};z_k')\|_2.
\]
If $i_{m,r,t}\neq k$, we can use Lemma \ref{lem:nonexpansive} to derive
\[
\Delta_{m,r,t+1,k}  = \big\|\bw_{m,r,t}-\eta_{r,t}\nabla f(\bw_{m,r,t};z_{i_{m,r,t}})-\bw^{(k)}_{m,r,t}+\eta_{r,t}\nabla f(\bw^{(k)}_{m,r,t};z_{i_{m,r,t}})\big\|_2\notag\\
\leq \|\bw_{m,r,t}-\bw_{m,r,t}^{(k)}\|_2.
\]
If $i_{m,r,t}=k$, we have
\[
\Delta_{m,r,t+1,k}  = \big\|\bw_{m,r,t}-\eta_{r,t}\nabla f(\bw_{m,r,t};z_k)-\bw_{m,r,t}^{(k)}+\eta_{r,t}\nabla f(\bw_{m,r,t}^{(k)};z_k')\big\|_2
\leq \Delta_{m,r,t,k}+\eta_{r,t}\mathfrak{C}_{m,r,t,k}.
\]
We combine the above two cases together and derive
\begin{equation}\label{local-7}
\Delta_{m,r,t+1,k}\leq \Delta_{m,r,t,k}+\eta_{r,t}\mathfrak{C}_{m,r,t,k}\ibb_{[i_{m,r,t}=k]},
\end{equation}
where $\ibb_{[i_{m,r,t}=k]}$ denotes the indicator function of the event $\{i_{m,r,t}=k\}$, i.e., $\ibb_{[i_{m,r,t}=k]}=1$ if $i_{m,r,t}=k$, and $0$ otherwise.
We apply the above inequality recursively and get
\[
\Delta_{m,r,K+1,k} \leq \Delta_{m,r,1,k}+\sum_{t=1}^{K}\eta_{r,t}\mathfrak{C}_{m,r,t,k}\ibb_{[i_{m,r,t}=k]}.
\]
We take an average over $m\in[M]$ and use $\bw_{r+1}=\frac{1}{M}\sum_{m=1}^{M}\bw_{m,r,K+1}$ to derive
\begin{equation}\label{local-8}
\big\|\bw_{r+1}-\bw_{r+1}^{(k)}\big\|_2  \leq \frac{1}{M}\sum_{m=1}^{M}\|\bw_{m,r,K+1}-\bw_{m,r,K+1}^{(k)}\|_2
\leq \big\|\bw_r-\bw_r^{(k)}\big\|_2+\sum_{m=1}^{M}\sum_{t=1}^{K}\frac{\eta_{r,t}}{M}\mathfrak{C}_{m,r,t,k}\ibb_{[i_{m,r,t}=k]},
\end{equation}
where we have used $\bw_{m,r,1}=\bw_r$.
We can apply the above inequality recursively, and derive
\begin{equation}\label{local-4}
\big\|\bw_{R+1}-\bw_{R+1}^{(k)}\big\|_2\leq \sum_{r=1}^{R}\sum_{m=1}^{M}\sum_{t=1}^{K}\frac{\eta_{r,t}}{M}\mathfrak{C}_{m,r,t,k}\ibb_{[i_{m,r,t}=k]}.
\end{equation}

We first consider the $\ell_1$ on-average model stability.
We know that $i_{m,r,t}$ takes the value $k$ with probability $1/n$, and other values with probability $1-1/n$.
We  take expectation w.r.t. $i_{m,r,t}$ and note $\mathfrak{C}_{m,r,t,k}$ is independent of $i_{m,r,t}$, which implies
\begin{equation}\label{local-3}
\ebb\big[\|\bw_{R+1}-\bw^{(k)}_{R+1}\|_2\big] \leq \sum_{r=1}^{R}\sum_{m=1}^{M}\sum_{t=1}^{K}\frac{\eta_{r,t}}{nM}\ebb[\mathfrak{C}_{m,r,t,k}] \leq \frac{2\sqrt{2L}}{nM}\sum_{r=1}^{R}\sum_{m=1}^{M}\sum_{t=1}^{K}\eta_{r,t}\ebb\Big[\sqrt{f(\bw_{m,r,t};z_k)}\Big],
\end{equation}
where we have used the self-bounding property and the symmetry between $z_k$ and $z_k'$ (analogous to Eq. \eqref{mini-5}).
It then follows from the concavity of $x\mapsto\sqrt{x}$ that
\begin{align}
  \frac{1}{n}\sum_{k=1}^{n}\ebb\big[\|\bw_{R+1}-\bw^{(k)}_{R+1}\|_2\big] & \leq \frac{2\sqrt{2L}}{n^2M}\sum_{k=1}^{n}\sum_{r=1}^{R}\sum_{m=1}^{M}\sum_{t=1}^{K}\eta_{r,t}\ebb\Big[\sqrt{f(\bw_{m,r,t};z_k)}\Big]\notag\\
  & \leq \frac{2\sqrt{2L}}{nM}\sum_{r=1}^{R}\sum_{m=1}^{M}\sum_{t=1}^{K}\eta_{r,t}\ebb\Big[\Big(\frac{1}{n}\sum_{k=1}^{n}f(\bw_{m,r,t};z_k)\Big)^{\frac{1}{2}}\Big]\notag\\
  & = \frac{2\sqrt{2L}}{nM}\sum_{r=1}^{R}\sum_{m=1}^{M}\sum_{t=1}^{K}\eta_{r,t}\ebb\Big[\Big(F_S(\bw_{m,r,t})\Big)^{\frac{1}{2}}\Big].\notag
\end{align}
This proves Eq. \eqref{local-l1}. We now consider the $\ell_2$ on-average model stability.
We take an expectation-variance decomposition in Eq. \eqref{local-4} and derive
\begin{equation}\label{local-5}
\big\|\bw_{R+1}-\bw_{R+1}^{(k)}\big\|_2\leq \sum_{r=1}^{R}\sum_{m=1}^{M}\sum_{t=1}^{K}\frac{\eta_{r,t}}{M}\mathfrak{C}_{m,r,t,k}\big(\ibb_{[i_{m,r,t}=k]}-1/n\big)+n^{-1}\sum_{r=1}^{R}\sum_{m=1}^{M}\sum_{t=1}^{K}\frac{\eta_{r,t}}{M}\mathfrak{C}_{m,r,t,k}.
\end{equation}
Analogous to Eq. \eqref{mini-14}, we have (note $i_{m,r,t}$ is independent of $i_{m',r',t'}$ if $(m,r,t)\neq (m',r',t')$,
$\mathfrak{C}_{m,r,t,k}$ is independent of $i_{m,r,t}$, and $\mathfrak{C}_{m',r',t',k}$ is independent of $i_{m',r',t'}$)
\[
\ebb\Big[\mathfrak{C}_{m,r,t,k}\big(\ibb_{[i_{m,r,t}=k]}-1/n\big)\mathfrak{C}_{m',r',t',k}\big(\ibb_{[i_{m',r',t'}=k]}-1/n\big)\Big]=0\quad\text{if either }t\neq t', m\neq m', \text{ or }r\neq r'.
\]
Then, we take a square on both sides of Eq. \eqref{local-5} followed by expectation, and analyze analogously to the proof of Eq. \eqref{on-average-b}:
\begin{align*}
  & \ebb\big[\big\|\bw_{R+1}-\bw_{R+1}^{(k)}\big\|_2^2\big] \\
  & \leq 2\ebb\Big[\Big(\sum_{r=1}^{R}\sum_{m=1}^{M}\sum_{t=1}^{K}\frac{\eta_{r,t}}{M}\mathfrak{C}_{m,r,t,k}\big(\ibb_{[i_{m,r,t}=k]}-1/n\big)\Big)^2\Big] +
  \frac{2}{n^2}\ebb\Big[\Big(\sum_{r=1}^{R}\sum_{m=1}^{M}\sum_{t=1}^{K}\frac{\eta_{r,t}}{M}\mathfrak{C}_{m,r,t,k}\Big)^2\Big]\\
   & = \frac{2}{M^2}\ebb\Big[\sum_{r=1}^{R}\sum_{m=1}^{M}\sum_{t=1}^{K}\eta^2_{r,t}\mathfrak{C}^2_{m,r,t,k}\var(\ibb_{[i_{m,r,t}=k]}\big)\Big]  +
  \frac{2}{n^2M^2}\ebb\Big[\Big(\sum_{r=1}^{R}\sum_{m=1}^{M}\sum_{t=1}^{K}\eta_{r,t}\mathfrak{C}_{m,r,t,k}\Big)^2\Big]\\
  & \leq \frac{2}{nM^2}\ebb\Big[\sum_{r=1}^{R}\sum_{m=1}^{M}\sum_{t=1}^{K}\eta^2_{r,t}\mathfrak{C}^2_{m,r,t,k})\Big]  +
  \frac{2}{n^2M^2}\ebb\Big[\Big(\sum_{r=1}^{R}\sum_{m=1}^{M}\sum_{t=1}^{K}\eta_{r,t}\mathfrak{C}_{m,r,t,k}\Big)^2\Big],
\end{align*}
where we have used $\var(\ibb_{[i_{m,r,t}=k]})\leq1/n$. By the self-bounding property of $f$ we know
\begin{equation}\label{local-6}
\ebb\big[\mathfrak{C}_{m,r,t,k}^2\big]\leq 4L\ebb\big[f(\bw_{m,r,t};z_k)+f(\bw^{(k)}_{m,r,t};z_k')\big]=8L\ebb[f(\bw_{m,r,t};z_k)].
\end{equation}
It then follows that
\begin{equation}\label{local-9}
\ebb\big[\big\|\bw_{R+1}-\bw_{R+1}^{(k)}\big\|_2^2\big]\leq \frac{16L}{nM^2}\ebb\Big[\sum_{r=1}^{R}\sum_{m=1}^{M}\sum_{t=1}^{K}\eta^2_{r,t}f(\bw_{m,r,t};z_k)\Big]  +
  \frac{2}{n^2M^2}\ebb\Big[\Big(\sum_{r=1}^{R}\sum_{m=1}^{M}\sum_{t=1}^{K}\eta_{r,t}\mathfrak{C}_{m,r,t,k}\Big)^2\Big].
\end{equation}
The stated bound then follows by taking an average over $k\in[n]$ and noting $F_S(\bw)=\frac{1}{n}\sum_{k=1}^{n}f(\bw;z_k)$.
The proof is completed.
\end{proof}

\subsection{Proof of Theorem \ref{thm:gen-local}\label{sec:gen-local}}
In this section, we prove Theorem \ref{thm:gen-local} on excess population risk bounds of local SGD for convex problems. To this aim, we require the following lemma on the optimization error bounds~\citep{woodworth2020local}. Note $F_S(\bw^*)\geq F_S(\bw_S)$.
\begin{lemma}[Optimization Errors of Local SGD: Convex Case\label{lem:opt-local}] 
Assume for all $z\in\zcal$, the map $\bw\mapsto f(\bw;z)$ is nonnegative, convex and $L$-smooth. Let $\{\bw_{m,r,t}\}$ be produced by the algorithm $A$ defined in \eqref{local-sgd} with $\eta\leq1/(4L)$.  Assume for all $r\in[R],t\in[K]$, $\ebb_{i_{m,r,t}}[\|\nabla f(\bw_{r,t};z_{i_{m,r,t}})-\nabla F_S(\bw_{r,t})\|_2^2]\leq\sigma^2$.
  Then the following inequality holds
  \begin{equation}\label{opt-local}
  \ebb_A[F_S(\bar{\bw}_{R,1})]-F_S(\bw^*)\lesssim\frac{\|\bw^*\|_2^2}{\eta KR}+\frac{\eta\sigma^2}{M}+L(K-1)\eta^2\sigma^2.
  \end{equation}
\end{lemma}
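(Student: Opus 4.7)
I would follow the standard Local SGD analysis by tracking the \emph{virtual averaged} iterate $\bar{\bw}_{r,t}:=\frac{1}{M}\sum_{m=1}^{M}\bw_{m,r,t}$. Because of the communication step we have $\bar{\bw}_{r,K+1}=\bw_{r+1}=\bar{\bw}_{r+1,1}$, so $\{\bar{\bw}_{r,t}\}$ is a single coherent sequence with update
$$\bar{\bw}_{r,t+1}=\bar{\bw}_{r,t}-\frac{\eta}{M}\sum_{m=1}^{M}\nabla f(\bw_{m,r,t};z_{i_{m,r,t}}).$$
Expanding $\|\bar{\bw}_{r,t+1}-\bw^*\|_2^2$ and taking conditional expectation over the sampling indices converts the cross term into $-2\eta\cdot\frac{1}{M}\sum_{m}\langle\bar{\bw}_{r,t}-\bw^*,\nabla F_S(\bw_{m,r,t})\rangle$, which I split as $\sum_m\big[\langle\bw_{m,r,t}-\bw^*,\nabla F_S(\bw_{m,r,t})\rangle+\langle\bar{\bw}_{r,t}-\bw_{m,r,t},\nabla F_S(\bw_{m,r,t})\rangle\big]$. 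The first bracket is at least $F_S(\bw_{m,r,t})-F_S(\bw^*)$ by convexity; the second I control by AM--GM together with the self-bounding property $\|\nabla F_S\|_2^2\leq 2LF_S$, which yields a drift term plus a piece absorbable by the step-size restriction $\eta\leq 1/(4L)$.

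For the squared-gradient term $\eta^2\|\tfrac{1}{M}\sum_m\nabla f(\bw_{m,r,t};z_{i_{m,r,t}})\|_2^2$ I use the conditional independence of the $M$ stochastic gradients (across machines, at fixed $(r,t)$) to decompose it into a variance contribution of order $\eta^2\sigma^2/M$ and a mean contribution $\eta^2\|\tfrac{1}{M}\sum_m\nabla F_S(\bw_{m,r,t})\|_2^2$; the latter is bounded by $\frac{2L\eta^2}{M}\sum_m F_S(\bw_{m,r,t})$ via self-bounding, and again absorbed using $\eta\leq 1/(4L)$. Combining produces a one-step descent inequality of the form
$$\ebb[\|\bar{\bw}_{r,t+1}-\bw^*\|_2^2]\leq \ebb[\|\bar{\bw}_{r,t}-\bw^*\|_2^2]-\frac{\eta}{M}\sum_m\ebb[F_S(\bw_{m,r,t})-F_S(\bw^*)]+\frac{C_1\eta^2\sigma^2}{M}+\frac{C_2 L\eta}{M}\sum_m\ebb[\|\bw_{m,r,t}-\bar{\bw}_{r,t}\|_2^2],$$
for absolute constants $C_1,C_2$.

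The main obstacle is the \emph{drift bound} $\ebb\|\bw_{m,r,t}-\bar{\bw}_{r,t}\|_2^2$. Using $\bw_{m,r,1}=\bw_r=\bar{\bw}_{r,1}$ (this is where the periodic averaging matters), I unroll
$$\bw_{m,r,t}-\bar{\bw}_{r,t}=-\eta\sum_{s=1}^{t-1}\Big(\nabla f(\bw_{m,r,s};z_{i_{m,r,s}})-\frac{1}{M}\sum_{m'=1}^{M}\nabla f(\bw_{m',r,s};z_{i_{m',r,s}})\Big).$$
Taking squared norms in expectation and exploiting cross-sample independence together with $\ebb_{i_{m,r,s}}\|\nabla f(\bw_{m,r,s};z_{i_{m,r,s}})-\nabla F_S(\bw_{m,r,s})\|_2^2\leq\sigma^2$ yields $\ebb\|\bw_{m,r,t}-\bar{\bw}_{r,t}\|_2^2=O((t-1)\eta^2\sigma^2)\leq O((K-1)\eta^2\sigma^2)$. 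This step is the delicate one: smoothness and $\eta\leq 1/(4L)$ are needed so that the $\|\nabla F_S(\bw_{m,r,s})\|_2^2$ pieces appearing in the square expansion do not create a self-referential blow-up in the recursion.

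Finally I sum the one-step inequality over $t\in[K]$ and $r\in[R]$ (the $\|\bar{\bw}_{r,t}-\bw^*\|_2^2$ terms telescope thanks to $\bar{\bw}_{r,K+1}=\bar{\bw}_{r+1,1}$), substitute the drift bound, and obtain
$$\frac{\eta}{MKR}\sum_{r=1}^{R}\sum_{t=1}^{K}\sum_{m=1}^{M}\ebb\big[F_S(\bw_{m,r,t})-F_S(\bw^*)\big]=O\Big(\frac{\|\bw^*\|_2^2}{KR}+\frac{\eta^2\sigma^2}{M}+L(K-1)\eta^3\sigma^2\Big).$$
Jensen's inequality applied to $\bar{\bw}_{R,1}=\frac{1}{MKR}\sum_{m,r,t}\bw_{m,r,t}$ and dividing both sides by $\eta$ yields the stated bound $\ebb[F_S(\bar{\bw}_{R,1})]-F_S(\bw^*)=O\big(1/(\eta KR)+\eta/M+(K-1)\eta^2\big)$, where $L$ and $\sigma^2$ are absorbed into the $O$-notation.
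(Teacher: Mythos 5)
A preliminary point of comparison: the paper does not actually prove Lemma~\ref{lem:opt-local} --- it imports the bound from \citet{woodworth2020local} --- so the relevant benchmark is the argument in that reference. Your proposal follows exactly that standard route (virtual averaged iterate $\bar{\bw}_{r,t}$, one-step descent inequality, consensus-drift bound, telescoping plus Jensen), and both the skeleton and the final rate $O\big(1/(\eta KR)+\eta/M+(K-1)\eta^2\big)$ are right.

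The one step that does not go through as you describe it is the drift bound. After unrolling $\bw_{m,r,t}-\bar{\bw}_{r,t}$ as a sum over $s\le t-1$, cross-sample independence only kills the cross terms among the zero-mean noise components $\nabla f(\bw_{m,r,s};z_{i_{m,r,s}})-\nabla F_S(\bw_{m,r,s})$; the remaining deterministic components $\nabla F_S(\bw_{m,r,s})-\frac{1}{M}\sum_{m'}\nabla F_S(\bw_{m',r,s})$ are not martingale differences, and bounding them via smoothness, $\|\nabla F_S(\bw_{m,r,s})-\nabla F_S(\bw_{m',r,s})\|_2\le L\|\bw_{m,r,s}-\bw_{m',r,s}\|_2$, produces a recursion of the form $D_t\lesssim (t-1)\eta^2\sigma^2+\eta^2L^2(t-1)\sum_{s<t}D_s$, which closes to $D_t=O((t-1)\eta^2\sigma^2)$ only under the extra restriction $\eta L(K-1)=O(1)$. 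That restriction is not implied by $\eta\le 1/(4L)$ and is violated precisely in the large-$K$ regime where the lemma is later invoked. The standard fix --- and the one underlying the cited result --- is to control pairwise differences one step at a time: write $\bw_{m,r,t+1}-\bw_{m',r,t+1}=\big(G(\bw_{m,r,t})-G(\bw_{m',r,t})\big)-\eta(\xi_{m,t}-\xi_{m',t})$, where $G(\bw)=\bw-\eta\nabla F_S(\bw)$ and $\xi_{m,t}$ is the zero-mean gradient noise. Since $F_S$ is convex and $L$-smooth, $G$ is non-expansive for $\eta\le 2/L$ (this is where convexity, not merely smoothness, enters; cf.\ Eq.~\eqref{nonexpansive-a} of Lemma~\ref{lem:nonexpansive}), and the cross term vanishes under conditional expectation, so $\ebb\big[\|\bw_{m,r,t}-\bw_{m',r,t}\|_2^2\big]\le 2(t-1)\eta^2\sigma^2$ and hence $\frac{1}{M}\sum_{m}\ebb\big[\|\bw_{m,r,t}-\bar{\bw}_{r,t}\|_2^2\big]\le (t-1)\eta^2\sigma^2$ with no recursion to resolve. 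With that replacement the rest of your argument yields the stated bound.
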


We are now ready to prove Theorem \ref{thm:gen-local}. For simplicity, we assume $\ebb\big[\sqrt{F_S(\bw_{m',r',t'})}\big]\lesssim 1$, which is reasonable since we are minimizing $F_S$ by local SGD. Note this assumption is used to bound the stability and can be removed if we assume $f$ is Lipschitz continuous (${F_S(\bw_{m',r',t'})}$ appears in the stability analysis since we control the gradient norm by function values).
\begin{proof}[Proof of Theorem \ref{thm:gen-local}]
  Analogous to Eq. \eqref{local-9}, one can show that
  \begin{multline*}
  \ebb\Big[\Big\|\frac{1}{M}\sum_{m=1}^{M}\bw_{m,r,t}-\sum_{m=1}^{M}\bw_{m,r,t}^{(k)}\Big\|_2^2\Big]\leq \frac{16L}{nM^2}\ebb\Big[\sum_{r'=1}^{r}\sum_{m=1}^{M}\sum_{t'=1}^{K}\eta^2_{r',t'}f(\bw_{m,r',t'};z_k)\Big] \\ +
  \frac{2}{n^2M^2}\ebb\Big[\Big(\sum_{r'=1}^{r}\sum_{m=1}^{M}\sum_{t'=1}^{K}\eta_{r',t'}\mathfrak{C}_{m,r',t',k}\Big)^2\Big].
  \end{multline*}
  We take an average over $k\in[n]$, and derive
  \begin{multline*}
    \frac{1}{n}\sum_{k=1}^{n}\ebb\Big[\Big\|\frac{1}{M}\sum_{m=1}^{M}\bw_{m,r,t}-\sum_{m=1}^{M}\bw_{m,r,t}^{(k)}\Big\|_2^2\Big]\leq \frac{16L}{n^2M^2}\sum_{k=1}^{n}\sum_{r'=1}^{r}\sum_{m=1}^{M}\sum_{t'=1}^{K}\eta^2_{r',t'}\ebb\big[f(\bw_{m,r',t'};z_k)\big] \\ +
  \frac{2}{n^3M^2}\sum_{k=1}^{n}rMK\sum_{r'=1}^{r}\sum_{m=1}^{M}\sum_{t'=1}^{K}\eta^2_{r',t'}\ebb[\mathfrak{C}^2_{m,r',t',k}].
  \end{multline*}
  By the self-bounding property and the symmetry between $z_k$ and $z_k'$, we further know
  \[
  \ebb[\mathfrak{C}^2_{m,r',t',k}]\leq 2\ebb\big[\|\nabla f(\bw_{m,r',t'};z_k)\|_2^2]+\ebb\big[\|\nabla f(\bw_{m,r',t'}^{(k)};z_k')\|_2^2\big]\leq 8L\ebb[f(\bw_{m,r',t'};z_k)].
  \]
  It then follows that
  \begin{multline*}
    \frac{1}{n}\sum_{k=1}^{n}\ebb\Big[\Big\|\frac{1}{M}\sum_{m=1}^{M}\bw_{m,r,t}-\sum_{m=1}^{M}\bw_{m,r,t}^{(k)}\Big\|_2^2\Big]\leq \frac{16L}{nM^2}\sum_{r'=1}^{r}\sum_{m=1}^{M}\sum_{t'=1}^{K}\eta^2_{r',t'}\ebb\big[F_S(\bw_{m,r',t'})\big] \\ +
  \frac{16LrK}{n^2M}\sum_{r'=1}^{r}\sum_{m=1}^{M}\sum_{t'=1}^{K}\eta^2_{r',t'}\ebb\big[F_S(\bw_{m,r',t'})\big].
  \end{multline*}
  It then follows the convexity of $\|\cdot\|^2$ that
  \begin{align*}
    & \frac{1}{n}\sum_{k=1}^{n}\ebb\big[\|\bar{\bw}_{R,1}-\bar{\bw}_{R,1}^{(k)}\|_2^2\big] \leq \frac{1}{KRn}\sum_{k=1}^{n}\sum_{r=1}^{R}\sum_{t=1}^{K}\ebb\Big[\Big\|\frac{1}{M}\sum_{m=1}^{M}\bw_{m,r,t}-\sum_{m=1}^{M}\bw_{m,r,t}^{(k)}\Big\|_2^2\Big]\\
     & \lesssim  \frac{1}{KR}\sum_{r=1}^{R}\sum_{t=1}^{K}\frac{L\eta^2}{nM}\Big(\frac{1}{M}+\frac{rK}{n}\Big)\sum_{r'=1}^{r}\sum_{m'=1}^{M}\sum_{t'=1}^{K}\ebb\big[F_S(\bw_{m',r',t'})\big]\\
     & \lesssim\frac{L\eta^2}{KRnM}\Big(\frac{KR}{M}+\frac{K^2R^2}{n}\Big)\sum_{r=1}^{R}\sum_{m=1}^{M}\sum_{t=1}^{K}\ebb\big[F_S(\bw_{m,r,t})\big].
  \end{align*}
  According to Lemma \ref{lem:gen-model-stab} (Part (b)) and using the assumption $\ebb\big[F_S(\bw_{m,r,t})\big]\lesssim 1$, we know
  \[
  \ebb[F(\bar{\bw}_{R,1})-F_S(\bar{\bw}_{R,1})] \lesssim \frac{L}{\gamma}\ebb[F_S(\bar{\bw}_{R,1})]+
  \frac{L(L+\gamma)\eta^2}{n}\Big(\frac{KR}{M}+\frac{K^2R^2}{n}\Big).
  \]
  We  combine the above inequality and Lemma \ref{lem:opt-local} together, and derive
  \begin{multline*}
  \ebb[F(\bar{\bw}_{R,1})]-F(\bw^*)\lesssim \frac{L}{\gamma}\ebb[F_S(\bar{\bw}_{R,1})]+
  \frac{L(L+\gamma)\eta^2}{n}\Big(\frac{KR}{M}+\frac{K^2R^2}{n}\Big)
  +\frac{\|\bw^*\|_2^2}{\eta KR}+\frac{\eta\sigma^2}{M}+L(K-1)\eta^2\sigma^2.
  \end{multline*}
  We can minimize $\gamma$ and use $KRM\asymp n$ to get
  \[
  \ebb[F(\bar{\bw}_{R,1})]-F(\bw^*)\lesssim \frac{LKR\eta}{n}+\frac{L^2\eta^2K^2R^2}{n^2}
  +\frac{\|\bw^*\|_2^2}{\eta KR}+\frac{\eta\sigma^2}{M}+L(K-1)\eta^2\sigma^2.
  \]
  Since $\eta\asymp \|\bw^*\|_2\sqrt{n}/(KR\sqrt{L})$, we know
  \begin{gather*}
  \frac{LKR\eta}{n}\asymp \frac{\|\bw^*\|_2^2}{\eta KR}\asymp \frac{\sqrt{L}\|\bw^*\|_2}{\sqrt{n}},\\
  \frac{L^2\eta^2K^2R^2}{n^2}\asymp \frac{L^2\|\bw^*\|_2^2nK^2R^2}{n^2K^2R^2L}=\frac{L\|\bw^*\|_2^2}{n},\\
  \frac{\eta\sigma^2}{M}\asymp \frac{\|\bw^*\|_2\sqrt{n}\sigma^2}{MKR\sqrt{L}}\asymp \frac{\|\bw^*\|_2\sigma^2}{\sqrt{nL}}.
  \end{gather*}
  Since $\eta\lesssim (K-1)^{-\frac{1}{2}}\|\bw^*\|_2^{\frac{1}{2}}/(nL)^{\frac{1}{4}}$, we further know
  \[
  L(K-1)\eta^2\sigma^2\asymp \frac{\sqrt{L}\|\bw^*\|_2}{\sqrt{n}}.
  \]
  The stated bound then follows by combining the above discussions together.
\end{proof}

\subsection{Proof of Theorem \ref{thm:risk-local-sg}\label{sec:proof-risk-local-sg}}
To prove Theorem \ref{thm:risk-local-sg}, we require the following lemma on optimization errors~\citep{stich2018local,khaled2020tighter}. 
\begin{lemma}[Optimization Errors of Local SGD: Strongly Convex Case\label{lem:opt-local-sg}] 
Assume for all $z\in\zcal$, the map $\bw\mapsto f(\bw;z)$ is nonnegative, $\mu$-strongly convex and $L$-smooth. Let $\{\bw_{m,r,t}\}$ be produced by the algorithm $A$ defined in \eqref{local-sgd} with $\eta_{r,t}=\frac{4}{\mu(a+(r-1)K+t)}\leq2/L$ with $a>2L/\mu$.  Assume for all $r\in[R],t\in[K]$, $\ebb_{i_{m,r,t}}[\|\nabla f(\bw_{r,t};z_{i_{m,r,t}})-\nabla F_S(\bw_{r,t})\|_2^2]\leq\sigma^2$.
  Then the following inequality holds
  \[
  \ebb_A[F_S(\bar{\bw}_{R,2})]-F_S(\bw^*)\lesssim\frac{\sigma^2}{\mu MKR}+\frac{L\log(RK)}{\mu^2KR^2}.
  \]
\end{lemma}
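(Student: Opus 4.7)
\smallskip

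\noindent\textbf{Proposal for Lemma~\ref{lem:opt-local-sg}.} The plan is to run the standard ``virtual iterate / perturbed SGD'' analysis for local SGD. Introduce the virtual sequence
\[
\bar{\bw}_{r,t} = \frac{1}{M}\sum_{m=1}^M \bw_{m,r,t},
\]
so that $\bar{\bw}_{r,t+1} = \bar{\bw}_{r,t} - \eta_{r,t}\bg_{r,t}$ with $\bg_{r,t} = \frac{1}{M}\sum_{m=1}^M \nabla f(\bw_{m,r,t};z_{i_{m,r,t}})$. The bounded-variance hypothesis, together with independence across machines, gives
\[
\ebb\big[\|\bg_{r,t}-\bar{\nabla}F_S(r,t)\|_2^2\big]\leq \frac{\sigma^2}{M},
\]
where $\bar{\nabla}F_S(r,t) = \frac{1}{M}\sum_m \nabla F_S(\bw_{m,r,t})$. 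This is the first place where the factor $1/M$ appears, and it drives the $1/(\mu MKR)$ term in the rate.

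Next I expand $\|\bar{\bw}_{r,t+1}-\bw^*\|_2^2$ and use $\mu$-strong convexity together with $L$-smoothness of each $f(\cdot;z)$ to derive, after taking conditional expectations,
\[
\ebb\|\bar{\bw}_{r,t+1}-\bw^*\|_2^2 \leq (1-\mu\eta_{r,t})\ebb\|\bar{\bw}_{r,t}-\bw^*\|_2^2 - \eta_{r,t}\ebb\big[F_S(\bar{\bw}_{r,t})-F_S(\bw^*)\big] + \frac{\eta_{r,t}^2\sigma^2}{M} + L\eta_{r,t}\cdot V_{r,t},
\]
where $V_{r,t}:=\frac{1}{M}\sum_{m=1}^M \ebb\|\bw_{m,r,t}-\bar{\bw}_{r,t}\|_2^2$ is the consensus drift. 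This is the recursion we need; the assumption $a\geq 2L/\mu$ ensures $\eta_{r,t}\leq 1/L$ so that the smoothness-based steps are valid.

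The main obstacle, and the genuinely local-SGD-specific part of the argument, is bounding the drift $V_{r,t}$. Since all machines reset to the common iterate $\bw_r$ at the beginning of each round, $V_{r,1}=0$. Between communications we expand
\[
\bw_{m,r,t}-\bar{\bw}_{r,t} = -\sum_{s=1}^{t-1}\eta_{r,s}\bigl(\nabla f(\bw_{m,r,s};z_{i_{m,r,s}}) - \bg_{r,s}\bigr),
\]
and apply the bounded variance assumption together with a standard recursive argument (using that step sizes within one round differ by at most a constant factor since $\eta_{r,t}=4/(\mu(a+(r-1)K+t))$ and $t\leq K$). This yields a bound of the form $V_{r,t} = O(K\eta_{r,1}^2\sigma^2)$ plus a term controlled by $\ebb[F_S(\bw_{m,r,s})-F_S(\bw^*)]$ via smoothness/self-bounding, which can be absorbed into the suboptimality gap with a sufficiently small multiplicative constant.

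Finally I plug the drift bound back, multiply the recursion by the weight $w_{r,t}:=a+(r-1)K+t$, and telescope: the choice $\eta_{r,t}=4/(\mu w_{r,t})$ is tailored so that $w_{r,t}(1-\mu\eta_{r,t})\leq w_{r,t-1}$ up to boundary effects, so the $\|\bar{\bw}-\bw^*\|_2^2$ terms collapse. Summing and dividing by $S_R=\sum_{r,t}w_{r,t}\asymp KR\cdot(KR+a)$, using Jensen's inequality and convexity of $F_S$ (which lets us replace $\frac{1}{MS_R}\sum w_{r,t}F_S(\bw_{m,r,t})$ by $F_S(\bar{\bw}_{R,2})$), gives
\[
\ebb[F_S(\bar{\bw}_{R,2})]-F_S(\bw^*) = O\!\left(\frac{1}{\mu MKR}\cdot\frac{\sum_{r,t}w_{r,t}\eta_{r,t}^2}{S_R}\cdot M + \frac{L^2\sigma^2 K}{\mu^2}\cdot\frac{\sum_{r,t}w_{r,t}\eta_{r,t}^3}{S_R}\right),
\]
which simplifies via $\sum_{r,t}1/w_{r,t}=O(\log(KR))$ to the claimed $O\bigl(\frac{1}{\mu MKR}+\frac{\log(KR)}{\mu^2 KR^2}\bigr)$. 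The delicate points to watch are (i) keeping the absorbed suboptimality term in the drift bound strictly smaller than the $-\eta_{r,t}\ebb[F_S(\bar{\bw}_{r,t})-F_S(\bw^*)]$ term in the recursion, and (ii) handling the boundary terms at the end of each round where the drift resets.
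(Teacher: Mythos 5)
The paper does not actually prove Lemma~\ref{lem:opt-local-sg}: it imports the bound wholesale from \citet{stich2018local} and \citet{khaled2020tighter}, so there is no in-paper argument to compare against. Your proposal is, in effect, a sketch of the proof from those references, and the roadmap is the right one: virtual averaged iterate $\bar{\bw}_{r,t}$, variance reduction to $\sigma^2/M$ by independence across machines, a one-step recursion with a consensus-drift penalty $L\eta_{r,t}V_{r,t}$, a drift bound, and a weighted telescoping with weights $w_{r,t}=a+(r-1)K+t$ matched to $\eta_{r,t}=4/(\mu w_{r,t})$.

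Two points need repair. First, your final displayed formula has the wrong powers of the step size. After multiplying the recursion by $w_{r,t}$ and dividing by $\eta_{r,t}$, the stochastic term contributes $\frac{\sigma^2}{M}\sum_{r,t}w_{r,t}\eta_{r,t}\asymp \frac{\sigma^2 KR}{\mu M}$ and the drift term contributes $LK\sigma^2\sum_{r,t}w_{r,t}\eta_{r,t}^2\asymp \frac{LK\sigma^2}{\mu^2}\sum_{r,t}w_{r,t}^{-1}=O\big(\frac{LK\sigma^2\log(KR)}{\mu^2}\big)$; dividing by $S_R\asymp (KR)^2$ gives exactly $O\big(\frac{\sigma^2}{\mu MKR}+\frac{L\sigma^2\log(KR)}{\mu^2 KR^2}\big)$. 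Your expression, with $\sum w\eta^2$ for the variance term and $\sum w\eta^3$ for the drift term, does not reduce to the claimed rate (note $w_{r,t}\eta_{r,t}^3\asymp \mu^{-3}w_{r,t}^{-2}$ sums to a constant, not a log), even though the rate you assert at the end is correct. Second, the drift bound $V_{r,t}=O(K\eta^2\sigma^2)$ is the genuinely delicate step and cannot be obtained by ``summing variances'': the summands $\nabla f(\bw_{m,r,s};z_{i_{m,r,s}})-\bg_{r,s}$ are correlated across $s$ through the iterates, so one needs the recursive argument with the function-value terms absorbed into the negative suboptimality term, as in \citet{khaled2020tighter}. This is not cosmetic: the cruder bound $V_{r,t}=O(K^2\eta^2G^2)$ of \citet{stich2018local} would lose a factor of $K$ and give only $O(1/(\mu^2R^2))$ for the second term rather than the stated $O(\log(KR)/(\mu^2KR^2))$. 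You correctly flag this as the crux, but the sketch leaves it at the level of an assertion.
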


\begin{proof}[Proof of Theorem \ref{thm:risk-local-sg}]
By the analysis in the proof of Theorem \ref{thm:stab-local} (e.g. Eq. \eqref{local-3}), we know
\begin{align*}
  \frac{1}{n}\sum_{k=1}^{n}\ebb\big[\|\bw_{m,r,t}-\bw^{(k)}_{m,r,t}\|_2\big] &\leq \frac{2\sqrt{2L}}{nM}\sum_{r'=1}^{R}\sum_{m'=1}^{M}\sum_{t'=1}^{K}\eta_{r',t'}\ebb\Big[\sqrt{F_S(\bw_{m',r',t'})}\Big]\\
  & \lesssim \frac{\sqrt{L}}{nM}\sum_{r'=1}^{R}\sum_{m'=1}^{M}\sum_{t'=1}^{K}\frac{1}{\mu(a+(r'-1)K+t')}\\
  & \lesssim \frac{\sqrt{L}}{nM}\sum_{m'=1}^{M}\frac{\log(KR)}{\mu}\lesssim \frac{\sqrt{L}\log(KR)}{n\mu},\quad\forall r\in[R],t\in[K].
\end{align*}
Since the above inequality holds for all $ r\in[R],t\in[K]$ and $\bar{\bw}_{R,2}$ is a weighted average of $\bw_{m,r,t}$, we then get
\[
\frac{1}{n}\sum_{k=1}^{n}\ebb\big[\|\bar{\bw}_{R,2}-\bar{\bw}_{R,2}^{(k)}\|_2\big]\lesssim \frac{\sqrt{L}\log(KR)}{n\mu}
\]
and therefore
$
\ebb[F(\bar{\bw}_{R,2})-F_S(\bar{\bw}_{R,2})] \lesssim\frac{\sqrt{L}G\log(KR)}{n\mu}.
$
We combine this generalization error bound and the optimization error bound in Lemma \ref{lem:opt-local-sg} to derive
\[
\ebb[F(\bar{\bw}_{R,2})]-F(\bw^*)\lesssim \frac{G\sqrt{L}\log(KR)}{n\mu}+\frac{\sigma^2}{\mu MKR}+\frac{L\log(RK)}{\mu^2KR^2}\lesssim \frac{G\sqrt{L}\log(KR)}{n\mu},
\]
where we have used $KR\gtrsim \frac{n\sigma^2}{MG\sqrt{L}}$ and $\mu KR^2\gtrsim n\sqrt{L}/G$ in the last inequality.
\end{proof}

\section{Conclusion\label{sec:conclusion}}
We investigate the stability and generalization of minibatch SGD and local SGD, which are widely used for large-scale learning problems. While there are many discussions on the speedup of these methods for optimization, we study the linear speedup in generalization. We develop on-average stability bounds for convex, strongly convex and nonconvex problems, and show how small training errors can improve stability. For strongly convex problems, our stability bounds are independent of the iteration number, which is new for the vanilla SGD in the sense of removing the Lipschitzness assumption. Our stability analysis implies optimal excess population risk bounds with both a linear speedup w.r.t. the batch size for minibatch SGD and a linear speedup w.r.t. the number of machines for local SGD.

There are several limitations of our work.  A limitation of our work is that we do not get optimistic bounds for local SGD which are important to show the benefit of low noises. Another limitation is that
we only consider homogeneous setups in local SGD. It would be very interesting to extend the analysis to heterogeneous setups, i.e., where different local machines have different sets of examples. We will study these limitations in our future work.

\section*{Acknowledgements}
The authors are grateful to the associate editor and the anonymous reviewers for their thoughtful comments and constructive suggestions.

\setlength{\bibsep}{0.03cm}
\bibliographystyle{abbrvnat}
\small

\end{document}